\documentclass[11pt]{article}
\usepackage{bbm}
\usepackage{mathtools}

\usepackage{verbatim}
\usepackage{comment}
\usepackage[linesnumbered, ruled,vlined]{algorithm2e}

\usepackage{wrapfig}

\usepackage{multicol}
\usepackage{makecell}
\usepackage{multirow}
\usepackage{microtype}
\usepackage{graphicx}
\usepackage{booktabs} %

\usepackage{collectbox}

\usepackage{hyperref}

\usepackage{tikz}
\usetikzlibrary{intersections,shapes.arrows}

\usepackage{amsthm,amsmath,amssymb, amsfonts}
\usepackage{thmtools}

\newtheorem{theorem}{Theorem}[section]

\newtheorem{lemma}[theorem]{Lemma}

\newtheorem{definition}[theorem]{Definition}

\newtheorem{proposition}[theorem]{Proposition}
\newtheorem{assumption}{Assumption}[section]

\usepackage{comment}
\usepackage{xcolor}
\usepackage{mathrsfs}
\usepackage{enumitem}
\usepackage{bbm}
\usepackage{mdframed}
\usepackage{geometry}

\usepackage{pythonhighlight}

\newcommand{\Real}{\mathbb R}                        %
\def\1{\mathbf{1}}

\newcommand{\E}{{\mathbb E}}        

\usepackage{prettyref}

\newcommand{\savehyperref}[2]{\texorpdfstring{\hyperref[#1]{#2}}{#2}}
\newrefformat{eq}{\savehyperref{#1}{\textup{(\ref*{#1})}}}
\newrefformat{eqn}{\savehyperref{#1}{Equation~\ref*{#1}}}
\newrefformat{con}{\savehyperref{#1}{Conjecture~\ref*{#1}}}
\newrefformat{lem}{\savehyperref{#1}{Lemma~\ref*{#1}}}
\newrefformat{def}{\savehyperref{#1}{Definition~\ref*{#1}}}
\newrefformat{line}{\savehyperref{#1}{line~\ref*{#1}}}
\newrefformat{thm}{\savehyperref{#1}{Theorem~\ref*{#1}}}
\newrefformat{corr}{\savehyperref{#1}{Corollary~\ref*{#1}}}
\newrefformat{sec}{\savehyperref{#1}{Section~\ref*{#1}}}
\newrefformat{app}{\savehyperref{#1}{Appendix~\ref*{#1}}}
\newrefformat{ass}{\savehyperref{#1}{Assumption~\ref*{#1}}}
\newrefformat{ex}{\savehyperref{#1}{Example~\ref*{#1}}}
\newrefformat{fig}{\savehyperref{#1}{Figure~\ref*{#1}}}
\newrefformat{tab}{\savehyperref{#1}{Table~\ref*{#1}}}
\newrefformat{alg}{\savehyperref{#1}{Algorithm~\ref*{#1}}}
\newrefformat{rem}{\savehyperref{#1}{Remark~\ref*{#1}}}
\newrefformat{conj}{\savehyperref{#1}{Conjecture~\ref*{#1}}}
\newrefformat{prop}{\savehyperref{#1}{Proposition~\ref*{#1}}}
\newrefformat{proto}{\savehyperref{#1}{Protocol~\ref*{#1}}}
\newrefformat{prob}{\savehyperref{#1}{Problem~\ref*{#1}}}
\newrefformat{claim}{\savehyperref{#1}{Claim~\ref*{#1}}}

\usepackage{cdann_definitions}

\newmdtheoremenv{test}{Test}

\definecolor{DarkRed}{rgb}{0.75,0,0}
\definecolor{DarkGreen}{rgb}{0,0.5,0}
\definecolor{DarkPurple}{rgb}{0.5,0,0.5}
\definecolor{DarkBlue}{rgb}{0,0,0.7}

\hypersetup{colorlinks=true, plainpages = false, citecolor = DarkBlue, urlcolor = DarkBlue, filecolor = black, linkcolor =DarkBlue}

\usepackage[style = alphabetic,citestyle=alphabetic,maxbibnames=99,backend=bibtex,sorting=nyt,natbib=true,backref=true]{biblatex}
\addbibresource{references.bib}
\DefineBibliographyStrings{english}{%
  backrefpage = {Cited on page},%
  backrefpages = {Cited on pages},%
}

\DeclareMathSizes{9}{8}{7}{5}
\title{\textbf{Model Selection in Contextual Stochastic Bandit Problems}}
\author{Aldo Pacchiano\footnote{Equal contribution.}\phantom{$^\ast$} \\ University of California Berkeley \\ \texttt{pacchiano@berkeley.edu} \and My Phan$^\ast$ \\ University of Massachusetts \\ \texttt{myphan@cs.umass.edu}  \and Yasin Abbasi-Yadkori \\ DeepMind \\ \texttt{yadkori@google.com} \and Anup Rao \\ Adobe \\ \texttt{anuprao@adobe.com} \and Julian Zimmert \\ Google Research \\ \texttt{zimmert@google.com} \and Tor Lattimore \\ DeepMind \\ \texttt{lattimore@google.com} \and Csaba Szepesv\'{a}ri \\ DeepMind \& University of Alberta \\ \texttt{szepi@google.com}}
\date{\today}
\begin{document}
\maketitle
\begin{abstract}
We study bandit model selection in stochastic environments. Our approach relies on a meta-algorithm that selects between candidate base algorithms. We develop a meta-algorithm-base algorithm abstraction that can work with general classes of base algorithms and different type of adversarial meta-algorithms. Our methods rely on a novel and generic smoothing transformation for bandit algorithms that permits us to obtain optimal $O(\sqrt{T})$ model selection guarantees for stochastic contextual bandit problems as long as the optimal base algorithm satisfies a high probability regret guarantee.  We show through a lower bound that even when one of the base algorithms has $O(\log T)$ regret, in general it is impossible to get better than $\Omega(\sqrt{T})$ regret in model selection, even asymptotically. Using our techniques, we address model selection in a variety of problems such as misspecified linear contextual bandits \citep{lattimore2019learning}, linear bandit with unknown dimension \citep{foster2019model} and reinforcement learning with unknown feature maps. Our algorithm requires the knowledge of the optimal base regret to adjust the meta-algorithm learning rate. We show that without such prior knowledge any meta-algorithm can suffer a regret larger than the optimal base regret. 
\end{abstract}
\section{Introduction}\label{section::introduction}

Bandit algorithms have been applied in a variety of decision making and personalization problems in industry. There are many specialized algorithms each designed to perform well in specific environments. For example, algorithms are designed to exploit low variance~\citep{AMS-2009}, extra context information, linear reward structure~\citep{Dani-Hayes-Kakade-2008, li+10, abbasi2011improved}, sparsity~\citep{APS-2012,CM-2012}, etc. The exact properties of the current environment however might not be known in advance, and we might not know which algorithm is going to perform best.

Model selection in contextual bandits aims to solve this problem. More formally, the learner is tasked to solve a bandit problem for which the appropriate bandit algorithm to use is not known in advance. Despite this limitation, the learner does have access to $M$ different algorithms $\{\mathcal{B}_i\}_{i=1}^M$, one of which $\mathcal{B}_{i_\star}$ is promised to be adequate for the problem the learner wishes to solve. We use regret to measure the learner's performance\footnote{We will define regret more formally in the following section.}. The problem's objective is to design algorithms to minimize regret. 

The algorithms we develop in this work follow the template of~\citep{DBLP:conf/colt/AgarwalLNS17} where a `meta-algorithm' algorithm is placed on top of a couple of `base' algorithms (in this case $\{\mathcal{B}_i\}_{i=1}^M$). At the beginning of each round the meta-algorithm selects which base algorithm to `listen to' during that time-step effectively treating the base algorithms as arms to be pulled by the meta-algorithm. The difficulty in using existing algorithms such as UCB or EXP3~\citep{DBLP:journals/corr/abs-1204-5721} as a meta-algorithm lies in the non-stationary nature of the rewards collected by a learning base algorithm. The meta-algorithm needs to be sufficiently smart to recognize when a base algorithm is simply performing poorly because it still in the early stages of learning from the case where poor performance is the result of model misspecification.

\paragraph*{Adapted and misspecified algorithms} We say that an algorithm is \emph{adapted} to the environment at hand if it satisfies a valid regret guarantee. Let's illustrate this with an example in the setting of linear bandits with finitely many arms. In this problem the learner has access to $K$ arms. Each arm $i \in [K]$ is associated with a feature vector $z_i \in \mathbb{R}^d$, and the reward of arm $i \in [K]$ follows a linear model of the form $r_i  = \langle z_i, \theta_\star \rangle + \xi_i$ where $\xi_i$ is conditionally zero mean and $\theta_\star$ is an unknown parameter. An algorithm such as LinUCB~\citep{pmlr-v15-chu11a} achieves a regret guarantee of order $\widetilde{\mathcal{O}}( \sqrt{d \log^3(K)T}  )$ where $\widetilde{\mathcal{O}}$ hides logarithmic factors in $T$. In contrast, the UCB algorithm~\citep{auer2002finite} yields a regret guarantee of order $\widetilde{\mathcal{O}}( \sqrt{K T}  )$. In this case, both algorithms are well adapted to the problem of linear bandits with finitely many actions, but LinUCB's regret guarantee may be substantially smaller than UCB's regret upper bound if $d$ is much smaller than $K$. If an algorithm is not well adapted, we say it is \emph{misspecified}. For the sake of exposition let's assume we are in a similar setting as above, where the learner has access to $K$ arms each of which is associated with a feature vector $z_i \in \mathbb{R}^d$. Instead of assuming a linear model as before, let's instead assume that $r_i =\left(\langle z_i, \theta_\star \rangle\right)^2 + \xi_i $ is quadratic. In this case, there is no reason to believe LinUCB can yield a valid regret guarantee since the underlying linearity assumption of LinUCB is violated. We say that in this case LinUCB is misspecified. Consider an instance of LinUCB that instead uses matrix features of the form $z_i z_i^\top$. In this case the quadratic reward is again a linear function of the feature vectors since $ \left(\langle z_i, \theta_\star \rangle\right)^2 =  \langle z_i z_i^\top,  \theta_\star \theta_\star^\top \rangle $. Thus this version of LinUCB with quadratic features is adapted.

We will assume that all algorithms $\mathcal{B}_i$ for $i \in [M]$ are associated with a putative regret guarantee $U_i(t, \delta )$ that is \emph{known} by the learner and holding with probability $1-\delta$ for all $t \in [\mathbb{N}]$ \emph{if algorithm $i$ is adapted} to the environment at hand.  If the learner knew the identity of the best adapted algorithm $i_\star$, it would be able to incur regret of order $U_{i_\star}(T, \delta)$ by playing $\mathcal{B}_{i_\star}$ for $T$ time-steps. The learner's objective in the model selection problem is to design a procedure that would allow a learner to incur in regret that is competitive with the regret upper bound $U_{i_\star}(t, \delta )$ of the best adapted algorithm among those in $\{\mathcal{B}_i\}_{i=1}^M$, so that the regret incurred by the learner up to time $T$ scales as a function of $T$, the parameters defining $\mathcal{B}_{i_\star}$ (and therefore $U_{i_\star}(t, \delta)$) and possibly $M$. From now on we will refer to each of the $M$ algorithms in $\mathcal{B}_{i_\star}$ as a \emph{base algorithm}. We will alert the reader if we have a specific set of $M$ algorithms in mind. In any other case, when we talk about the set of base algorithms we simply mean a set of $M$ algorithms the learner is model selecting from.

The authors of \citep{Maillard-Munos-2011} were perhaps the first to address the bandit model-selection problem, with a variant of an EXP4 meta-algorithm that works with UCB or EXP3 base algorithms. These results are improved by \citep{DBLP:conf/colt/AgarwalLNS17} via the CORRAL algorithm. The CORRAL algorithm follows the meta-algorithm-base template that we discussed at the beginning of this section. It makes use of a CORRAL meta-algorithm based on a Log-Barrier Online Mirror Descent algorithm controlling which of the base algorithms to play at any given round. Let $p_t$ be the $M-$dimensional probability distribution over the $M$ base algorithms given by the CORRAL meta-algorithm. The learner will sample an algorithm index $j_t \in [M]$ with $j_t \sim p_t$. and play the action prescribed by $\mathcal{B}_{j_t}$ to collect a reward $r_t$. All algorithms $\{ \mathcal{B}_i \}_{i=1}^M$ are then updated using an importance weighted version of $r_t$ regardless of whether they were selected by the meta-algorithm or not. 

Unfortunately, this means that in order to use a base algorithm in CORRAL, it needs to be compatible with this importance weighting modification of the rewards. For example, to use UCB as a base, we would need to manually re-derive UCB's confidence intervals and modify its regret analysis to be compatible with importance weighted feedback. The authors show that a base algorithm can be safely combined with the CORRAL meta-algorithm to yield model selection guarantees provided it satisfies a stability condition (see Definition 3 in \citep{DBLP:conf/colt/AgarwalLNS17}). Verifying that an algorithm satisfies such stability condition is a cumbersome process that requires a detailed analysis of the algorithm's internal workings. In this work we instead focus on devising a black-box procedure that can solve the model selection problem for a general class of stochastic contextual bandit algorithms. This work introduced the first black-box method for model selection in stochastic contextual bandits, and it has been followed by many others that have expanded and refined these results; most notably~\citep{pacchiano2020regret,cutkosky2021dynamic,lee2020online,abbasi2020regret,pacchiano2022best,arora2021corralling}.

 \paragraph*{Contributions. } We focus on the problem of bandit model-selection in stochastic environments. Our contributions are as follows:
 \begin{itemize}[leftmargin=*]
 \item  \textbf{A new model selection algorithm}. We introduce Stochastic CORRAL, a two step per round algorithm and an accompanying base ``smoothing" wrapper that can be shown to satisfy model selection guarantees when combined with any set of $M$ stochastic contextual bandit algorithms such that at least one of them is adapted and satisfies a high probability regret guarantee. We also show model selection regret guarantees for Stochastic CORRAL with two distinct adversarial meta-algorithms, CORRAL \citep{DBLP:conf/colt/AgarwalLNS17} and EXP3.P \citep{DBLP:journals/corr/abs-1204-5721}. Our approach is more general than that of the original CORRAL algorithm~\citep{DBLP:conf/colt/AgarwalLNS17} because instead of requiring each base algorithm to be individually modified to satisfy a certain stability condition, our version of the CORRAL algorithm provides the algorithm designer with a generic black-box wrapper that allows to do model selection over any set of $M$ base algorithm with high probability regret guarantees. Stochastic CORRAL has another important difference with respect to the original CORRAL algorithm: instead of importance weighted feedback, the unadulterated reward $r_t$ is sent to algorithm $\mathcal{B}_{j_t}$, and only this algorithm is allowed to update its internal state at round $t$. The main consequence of these properties of Stochastic CORRAL is that our model selection strategy can be used with almost any base algorithm developed for stochastic environments. When the learner has knowledge of the function $U_{i_\star}(t, \delta)$ but not of $i_\star$ (for example when all the putative upper bounds $U_i(t, \delta)$ are the same), using the CORRAL meta-algorithm achieves optimal regret guarantees. When the optimal target regret is unknown sometimes using a an EXP3.P meta-algorithm can achieve better performance. 
 \item \textbf{A versatile algorithm.} We demonstrate the generality and effectiveness of our method by showing how it seamlessly improves existing results or addresses open questions in a variety of problems. We show applications in adapting to the misspecification level in contextual linear bandits \citep{lattimore2019learning}, adapting to the unknown dimension in nested linear bandit classes \citep{foster2019model}, tuning the data-dependent exploration rate of bandit algorithms, and choosing feature maps in reinforcement learning. Moreover, our meta-algorithm can simultaneously perform different types of model selection. For example, we show how to choose both the unknown dimension and the unknown mis-specification error at the same time. This is in contrast to algorithms that specialize in a specific type of model selection such as detecting the unknown dimension \citep{foster2019model}. 
 \item \textbf{Lower bounds.} In the stochastic domain, an important question is whether a model selection procedure can inherit the $O(\log T)$ regret of a fast stochastic base algorithm. We show a lower bound for the model selection problem that scales as $\Omega(\sqrt{T})$, which implies that our result is minimax optimal. Recall the CORRAL meta-algorithm achieves an oracle optimal model selection regret guarantee provided the algorithm has access to $U_{i_\star}(t, \delta)$. This begs the question of whether this is an unavoidable statistical limitation of model selection or just a property of the CORRAL meta-algorithm. We show this condition is unavoidable in general: there are problems where the regret of the best base algorithm scales as $O(T^x)$ for an unknown $x$, and the regret of any meta-algorithm that is unaware of the value of $x$ scales as $\Omega(T^y)$ for $y > x$. %
 \end{itemize}

\section{Problem Statement}
We use the notation $\delta_a$ to write the delta distribution at $a$. For an integer $n$, we use $[n]$ to denote the set $\{1,2,\dots,n\}$. 
We consider the following formulation of contextual stochastic bandits. At the beginning of each time-step $t$, the learner observes a context $\mathcal{A}_t$ that corresponds to a subset of an `action-set' $\mathbb{A}$. After this the learner will select an action $a_t \in \mathcal{A}_t$ and then collect a reward $r_t = f(\mathcal{A}_t, a_t) + \xi_t$, a noisy quantity that will depend on the context $\mathcal{A}_t$, and the learner's action $a_t$, a reward function $f$ and a $1-$subGaussian conditionally zero mean random noise random variable $\xi_t$. In this work we will restrict ourselves to the case where contexts sets $\mathcal{A}_t$ are all subsets of a context generating set $\mathbb{A}$. This is in fact a very general scenario that captures all types of contextual bandit problems ranging from the case of changing linear contexts with linear rewards, to more general contexts and reward sets studied in works such as~\citep{foster2020beyond}. For simplicity we will assume the contexts $\mathcal{A}_t$ are made of the subset of available actions to the learner at time $t$. Our formulation allows for the action set to vary in size from round to round and even to be infinite. For example, the finite linear bandit setting (where $\mathcal{A}_t = \mathbb{A} \subset \mathbb{R}^d$ for all $t$) fits in this setting. Similarly it is easy to see the linear contextual bandit problem with i.i.d. contexts and $K$ actions can also be written as an instance of our formulation. In the linear contextual bandit problem with $K_t$ actions (where $K_t$ may be infinite) the learner is presented at time $t$ with $K_t$ action-vectors $\mathcal{A} = \{ a_i \}_{i=1}^{K_t}$ with $a_i \in \mathbb{R}^d$ and the (random) return $r_a$ of any action $a \in \mathcal{A}$ satisfies $r_a = \langle a , \theta_\star \rangle + \xi$. The $K-$action setting where contexts $x_t \in \mathcal{X}$ (an abstract context set) and the learner has access to $K$ actions per round can also be formulated in this way by defining $\mathbb{A} = \mathcal{X} \times [K]$ and defining $\mathcal{D}_S$ to be a distribution over subsets of the form $\{(x, i)\}_{i=1}^K$ with $x \in \mathcal{X}$. 

In this work we focus on the setting of stochastic i.i.d. contexts.  Let $S$ be the set of all subsets of $\mathbb{A}$ and let $\mathcal{D}_S$ be a distribution over $S$. We assume all contexts $\mathcal{A}_t \stackrel{i.i.d.}{\sim} D_{S}$ and that $f: S \times \mathbb{A} \rightarrow \mathbb{R}$.  For an arbitrary subset $\mathcal{A} \subset \mathbb{A}$ we denote the space of distributions over $\mathcal{A}$ as $\Delta_{\mathcal{A}}$. A policy $\pi$ is a mapping such that for any subset $\mathcal{A} \subset \mathbb{A}$ in the support of $\mathcal{D}_S$ outputs an element of $\Delta_{\mathcal{A}}$. Let's denote by $\Pi$ as the space of all policies with domain in $\mathrm{Support}(\mathcal{D}_S)$. We abuse notation and denote $f(\mathcal{A}, \pi) = \mathbb{E}_{ a \sim \pi(\mathcal{X})} \left[ f(\mathcal{A} , a )\right]$ . Notice that in this case $f(\mathcal{A}, a) = f(\mathcal{A}, \delta_a)$ for all $a \in \mathcal{A}$. We will generally omit the $\mathcal{A}\sim \mathcal{D}_S$ dependence from our expectation notation in the future. 

In a contextual bandit problem the learner chooses policy $\pi_t$ at time $t$, which takes context set  $\mathcal{A}_t \in S$ as an input and outputs a distribution over $\mathcal{A}_t$. The learner then selects an action $a_t  \sim \pi_t(\mathcal{A}_t)$ and receives a reward $r_t$ such that ${r_t} = f(\mathcal{A}_t, \delta_{a_t}) + \xi_t$.

We are interested in designing an algorithm with small regret, defined as 
\begin{equation}\label{equation::regret_definition}
R(T) = \mathop{\max}_{\pi \in \Pi} \EE{\sum_{t=1}^{T} f(\mathcal{A}_t, \pi) - \sum_{t=1}^{T} f(\mathcal{A}_t,\pi_t)}.
\end{equation}

If for example $U_{i}({T}, \delta) = c d_{i} \sqrt{ T \log(1/\delta)}$ for all $i \in [M]$ we would like our algorithm to satisfy a regret guarantee of the form $R(T) \leq \mathcal{O}( M^\alpha d_{i_\star}^\beta \sqrt{T\log(1/\delta)} )$ for some $\alpha \geq 0, \beta \geq 1 $ and where $i_\star$ is the index of the best performing adapted base algorithm $\mathcal{B}_{i_\star}$. Crucially, we want to avoid this guarantee to depend on other $d_i > d_{i_\star}$ (if any). From now on we will refer to the policy maximizing the right hand side of the equation above as $\pi^*$. For simplicity we will also make the following assumption regarding the range of $f$,

\begin{assumption}[Bounded Expected Rewards]\label{assumption:unit_bounded_reward}
The absolute value of $f$ is bounded by $1$,
\begin{equation*}
    \max_{\mathcal{A}' , \pi } \left|  f(\mathcal{A}', \pi)  \right| \leq 1
\end{equation*}
\end{assumption}

Throughout this work we assume the base algorithms we want to model select from satisfy a high probability regret bound whenever they are well adapted to their environment. We make this more precise in definition~\ref{definition::boundedness},

\begin{definition}[$(U, \delta, T)-$Boundedness]\label{definition::boundedness}
Let $U : \mathbb{R}\times [0,1] \rightarrow \mathbb{R}^+$. We say an adapted algorithm $\mathcal{B}$ is $(U, \delta, T)-$bounded if with probability at least $1-\delta$ and for all rounds $t\in [1,T]$, its cumulative pseudo-regret  is bounded above by $U(t, \delta)$: 
  $  \sum_{l = 1}^t f(\mathcal{A}_l, \pi^* ) - f(\mathcal{A}_l, \pi_l)   \leq U(t, \delta).$
\end{definition}

We assume that for all $i \in [M]$ the base algorithm $\mathcal{B}_i$ is $(U_i, \delta, T)$-bounded for a function $U_i$ known to the learner\footnote{Recall that in this case the upper bound on the algorithm's regret is satisfied only when $\mathcal{B}_i$ is well adapted to the environment.}. For example in the Multi Armed Bandit Problem with $K$ arms the UCB algorithm is $( c\sqrt{KT\log(T/\delta) }, \delta, T)-$bounded for some universal constant $c > 0$. 

\subsubsection*{Original CORRAL}
We start by reproducing the pesudo-code of CORRAL~\cite{DBLP:conf/colt/AgarwalLNS17} (see Algorithm~\ref{Alg:meta-algorithm_full}) as it will prove helpful in our discussion of our main algorithm: Stochastic CORRAL. As we have explained in the previous section we assume there are $M$ candidate \emph{base} algorithms and a meta-algorithm which we denote as $\mathcal{M}$. At time-step $t$ the CORRAL meta-algorithm $\mathcal{M}$ selects one of the base algorithms in $\{\mathcal{B}_i\}_{i=1}^M$ according to a distribution $p_t  \in \Delta_M$ by sampling an index $j_t \sim p_t$. The learner plays action $a_t \sim \pi_{t, j_t}(\mathcal{A}_t)$ and receives reward $r_t = f(\mathcal{A}_t, \delta_{a_t}) + \xi_t$. An importance weighted version of $r_t$ is sent out to all base algorithms, after which all of them update their internal state.

\begin{algorithm}[h]
\textbf{Input:} Base Algorithms $\{\mathcal{B}_j\}_{j=1}^M$, learning rate $\eta$.  \\
Initialize: $\gamma = 1/T, \beta = e^{\frac{1}{\ln T}}, \eta_{1,j} = \eta, \rho^j_{1} = 2M, \underline{p}^j_{1} = \frac{1}{ \rho^j_{1}},  {p}^j_1= 1/M$ for all $j \in [M]$. \\
Initialize all base algorithms. 

\For{$t = 1, \cdots, T$}{
Receive context $\mathcal{A}_t \sim \mathcal{D}_S$.\\
Receive policy $\pi_{t,j}$ from $\mathcal{B}_{j}$ for all $j \in [M]$. \\
Sample $j_t \sim p_t$. \\
Play action $a_t \sim \pi_{t,{j_t}}(\mathcal{A}_t)$. \\
Receive feedback $r_t = f(\mathcal{A}_t, \delta_{a_t}) + \xi_t$. \\ 
Send feedback $\frac{r_{t}}{\overline{p}_{t,j_t}}\mathbf{1}\{j = j_t\}$ to $\mathcal{B}_{j}$ for all $j \in [M]$.\\
Update $p_{t}$, $\eta_t$, $\underline{p}_t$ and $\rho_t$ to $p_{t+1}$, $\eta_{t+1}$, $\underline{p}_{t+1}$ and $\rho_{t+1}$ via $\mathsf{CORRAL-Update}$ \\
}
\caption{Original CORRAL}
\label{Alg:meta-algorithm_full}
\end{algorithm}
\begin{algorithm}[h]
\textbf{Input:} learning rate vector $\eta_t$, previous distribution $p_t$ and current loss $\ell_t$ \\
\textbf{Output:} updated distribution $p_{t+1}$ \\
Find $\lambda \in [\min_{j} \ell_{t,j} , \max_{j} \ell_{t,j} ]$ such that $\sum_{j=1}^M \frac{ 1}{ \frac{1}{p^i_{t}} + \eta_{t,j}(\ell_{t,j} - \lambda)   }= 1$\\
Return ${p}_{t+1}$ such that $\frac{1}{p^j_{t+1}} = \frac{1}{p^j_{t}} + \eta_{t,j}(\ell_{t,j} - \lambda)$\\
 \caption{Log-Barrier-OMD($p_t, \ell_t, \eta_t$) }
\label{Alg:log_barrier}
\end{algorithm}

\begin{algorithm}[H]
\textbf{Input:} learning rate vector $\eta_t$, distribution $p_t$, lower bound $\underline{p}_t$ and current loss $r_t$ \\
\textbf{Output:} updated distribution $p_{t+1}$, learning rate $\eta_{t+1}$ and loss range $\rho_{t+1}$ \\
Update $p_{t+1} = \text{Log-Barrier-OMD}(p_t, \frac{r_{t}}{{p}_{t,j_t}}\mathbf{e}_{j_t}, \eta_t)$. \\
Set ${p}_{t+1} = (1-\gamma)p_{t+1} + \gamma \frac{1}{M}$. 

\For{$j =1, \cdots, M$}{
\If{$ \underline{p}^j_{t} > {{p}^j_{t+1}} $}{
Set $\underline{p}^j_{t+1} = \frac{{p}^j_{t+1}}{2}, \eta_{t+1, j} = \beta\eta_{t,i}$, \\
}
\Else{
 Set $\underline{p}^j_{t+1}=\underline{p}^j_{t}, \eta_{t+1,j} = \eta_{t,i}$. \\
}
Set $\rho^j_{t+1} = \frac{1}{\underline{p}^j_{t+1}}$.\\
}
Return $p_{t+1}$, $\eta_{t+1}$, $\underline{p}_{t+1}$ and $\rho^j_{t+1}$. 
\caption{$\mathsf{CORRAL-Update}$}\label{Alg:corral_update}
\end{algorithm}

\section{The Stochastic CORRAL Algorithm}\label{sec:the_stochastic_corral_algorithm}

In order to better describe the feedback structure of Stochastic CORRAL we abstract the meta-algorithm-base interaction template discussed in Section~\ref{section::introduction} into Algorithms~\ref{Alg:meta-algorithm} and~\ref{Alg:base}. As we have mentioned before, one crucial difference between Stochastic CORRAL and CORRAL is that in Stochastic CORRAL only the state of the base algorithm whose action was selected is modified. In contrast in the CORRAL algorithm all the base algorithms' states are updated during every step. 

To make this description more precise we introduce some notation. Each base algorithm $\mathcal{B}_j$ maintains a counter $s_{t, j}$ that keeps track of the number of times it has been updated up to time $t$. We will say algorithm $\mathcal{B}_j$ is in `state' $s_{t,j}$ at time $t$. For any base algorithm $\mathcal{B}_j$, $\pi_{s,j}$ is the policy $\mathcal{B}_j$ uses at state $s$.  If $t_1< t_2$ are two consecutive times when base $j$ is chosen by the meta-algorithm, then base $j$ proposed policy $\pi_{s_{t_1,j},j}$ at time $t_1$ and policy $\pi_{s_{t_2,j},j}$ during all times $t_1 +1, \dots, t_2$ where $s_{t_2,j} = s_{t_1,j} + 1$. 

\begin{algorithm}[h]
 \textbf{Input:} Base Algorithms $\{\mathcal{B}_j\}_{j=1}^M$
\For{ $t = 1, \cdots, T$ }{ 
 Sample $j_t \sim p_t$.\\
 Play $j_t$.\\ 
Receive feedback $r_t = r_{t,j_t}$ from playing the action prescribed by $\mathcal{B}_{j_t}$ \\
 Update meta-algorithm using $r_t$
 }
 \caption{Meta-Algorithm}
\label{Alg:meta-algorithm}
\end{algorithm}

\begin{algorithm}[h]
Initialize state counter $s = 1$
\For{ $t = 1, \cdots, T$ }{ 
Receive action set $\mathcal{A}_t \sim \mathcal{D}_S$\\
Choose action $a_{t,j} \sim \pi_{s,j}(\mathcal{A}_t)$ \\
\If { selected by meta-algorithm (i.e. $j_t = j$) }{
Play action $a_{t,j}$ \\
Receive feedback $r_{t,j} = f(\mathcal{A}_t, \delta_{a_{t,j}}) + \xi_t$ \\
Send $r_{t,j}$ to the meta-algorithm \\
Compute $\pi_{s+1, j}$ using $r _{t,j}$\\
$s \leftarrow s+1$

}
 }
 \caption{Base Algorithm $\mathcal{B}_j$}
\label{Alg:base}
\end{algorithm}

\paragraph*{Regret Decomposition.} Let's introduce the regret decomposition we will make use of to prove our regret guarantees. This is a similar decomposition as the one appearing in the proofs of Theorem 4,5 and  7 of~\citep{DBLP:conf/colt/AgarwalLNS17}. We split the regret $R(T)$ of Equation~\ref{equation::regret_definition} into two terms ($\mathrm{I}$ and $\mathrm{II}$) by adding and subtracting terms $\{  f(\mathcal{A}_t, \pi_{s_{t,{i_\star}},i_\star})\}_{t=1}^T$ :
\begin{align}
\label{eq::regret_decomp}
R(T) &= \EE{\sum_{t=1}^T f(\mathcal{A}_t, \pi^*) - f(\mathcal{A}_t, \pi_t) } \notag \\ &=\mathbb{E}\underbrace{ \left[\sum_{t=1}^T f(\mathcal{A}_t, \pi_{s_{t,i_\star},i_\star}) - f(\mathcal{A}_t, \pi_t)\right]}_{\mathrm{I}}+
 \mathbb{E}\underbrace{\left[\sum_{t=1}^T   f(\mathcal{A}_t, \pi^*) - f(\mathcal{A}_t, \pi_{s_{t,i_\star},i_\star})\right]}_{\mathrm{II}} 
\end{align}
Term $\mathrm{I}$ is the regret of the meta-algorithm with respect to the optimal base $\mathcal{B}_{i_\star}$, and term $\mathrm{II}$ is the regret of the optimal base with respect to the optimal policy $\pi^*$. %
Analysis of term $\mathrm{I}$ is largely based on the adversarial regret guarantees of the Log-Barrier-OMD in CORRAL and of the EXP3.P algorithm.

In order to bound term $\mathrm{II}$ we will have to modify the feedback structure of Algorithms~\ref{Alg:meta-algorithm} and~\ref{Alg:base}. In Algorithm~\ref{Alg:smoothing} from Section~\ref{sec::base_smoothing} we introduce a smoothing procedure that allows any $(U, \delta, T)$-bounded algorithm to be transformed into a `smoothed' version of itself such that its conditional expected instantaneous regret is bounded with high probability during every even step. We name this procedure `smoothing' because it is based on playing uniformly from the set of previously played policies during the smoothed algorithm's odd steps. We provide more details in Section~\ref{sec::base_smoothing}. For now, the main property we are to use from this discussion is that by smoothing a $(U, \delta, T)$-bounded algorithm it is possible to ensure the conditional expected instantaneous regret of the smoothed algorithm is bounded above by $\frac{U(\ell, \delta)}{\ell}$ during the $\ell-$th even step. The function $\frac{U(\ell, \delta)}{\ell}$ can be shown to be decreasing (as a function of $\ell$) when $U(\ell, \delta)$ is concave in $\ell$. In Stochastic CORRAL the smoothing of base algorithms takes the place of the stability condition required by the CORRAL algorithm in~\citep{DBLP:conf/colt/AgarwalLNS17}.

Let's sketch some intuition behind why this decreasing instantaneous regret condition can help us bound term $\mathrm{II}$. For all $i \in [M]$ let $\{p^i_{1}, \dots, p^i_T\}$ be the (random) probabilities used by the Stochastic CORRAL meta-algorithm $\mathcal{M}$ (an adversarial meta-algorithm) to chose base $i$ during round $t$ and let $\underline{p}_i = {\min_t p_t^i}$. Let's focus on the optimal algorithm $i_\star$ and assume $U_{\star}(t, \delta)$ is convex in $t$. Since $\frac{U_\star(t, \delta)}{t}$ is decreasing, term $\mathrm{II}$ is the largest when base $i_\star$ is selected the least often. For the sake of the argument let's assume that $p^{i_\star}_t =\underline{p}_{i_\star}~\forall t$. In this case base $i_\star$ will be played roughly $T\underline{p}_{i_\star}$ times, and will repeat its decisions in intervals of length $\frac{1}{\underline{p}_{i_\star}}$, resulting in the following bound: 
\begin{lemma}[informal]\label{lemma::bounding_termII_informal}
If the regret of the optimal base is $(U_*, T, \delta)$-bounded, then we have that
$$
\mathbb{E}\left[\mathrm{II}\right] \leq O\left( \mathbb{E}\left[ \frac{1}{\underline{p}_{i_\star}} U_*(T\underline{p}_{i_\star}, \delta) \log T \right]+ \delta T(\log T +1)\right). 
$$
\end{lemma}
We demonstrate the effectiveness of our smoothing transformation by deriving regret bounds with two meta-algorithms: the Log-Barrier-OMD algorithm in CORRAL (introduced by \citep{DBLP:conf/colt/AgarwalLNS17}) which we will henceforth refer to as the CORRAL and EXP3.P (Theorem 3.3 in \citep{DBLP:journals/corr/abs-1204-5721}) with forced exploration. The later is a simple algorithm that ensures each base is picked with at least a (horizon dependent) constant probability $p$. We now state an informal version of our main result, Theorem~\ref{thm:meta-algorithm}. 

\begin{theorem}[informal version of Theorem~\ref{thm:meta-algorithm}]
\label{thm:meta-algorithm_informal}
If $U_*(T, \delta) = O(c(\delta)\,T^{\alpha})$ for some function $c:\Real\rightarrow \Real$ and constant $\alpha \in [1/2,1)$ and $\mathcal{B}_*$ is $(U_\star, T, \delta)$-bounded, the regrets of Stochastic CORRAL with an EXP3.P and CORRAL meta-algorithms are:
\begin{center}
\begin{tabular}{ |c|c|c| } 
 \hline
Meta-Algorithm & Known $\alpha$ and $c(\delta)$ & Known $\alpha$, Unknown $c(\delta)$ \\ \hline
EXP3.P &  $\widetilde{O}\left(T^{\frac{1}{2-\alpha}}  c(\delta)^{\frac{1}{2-\alpha}}\right)$ & $\widetilde{O}\left(T^{\frac{1}{2-\alpha}}  c(\delta)\right)$ \\ \hline
CORRAL  & $\widetilde{O}\left( T^{\alpha} c(\delta) \right)$  & $\widetilde{O}\left( T^{\alpha} c(\delta)^{\frac{1}{\alpha}} \right)$\\ \hline
\end{tabular}
\end{center}

\end{theorem}
The CORRAL meta-algorithm achieves optimal regret when $\alpha$ and $c(\delta)$ are known. When  $c(\delta)$ is unknown and $c(\delta) > T^{\frac{(1-\alpha)\alpha}{2-\alpha}} $ (which is $T^{1/6}$ if $\alpha=1/2$ or $\alpha =1/3$), then using an EXP3.P meta-algorithm achieves better regret because $\widetilde{O}\left(T^{\frac{1}{2-\alpha}}  c(\delta)\right) < \widetilde{O}\left( T^{\alpha} c(\delta)^{\frac{1}{\alpha}} \right)$. We complement this result with a couple of lower bounds.

\paragraph*{Lower bounds.} Theorem~\ref{thm:lower_bound2} in Section~\ref{sec:lower_bound} shows that if the regret of the best base is $O(T^{x})$, in the worst case a meta-algorithm that does not know $x$ can have regret $\Omega(T^{y})$ with $y > x$. Theorem~\ref{thm:lowerbound} shows that in general it is impossible for any meta-algorithm to achieve a regret better than $\Omega(\sqrt{T})$ even when the best base has regret $O(\log(T)$). When the regret of the best base is $O(\sqrt{T})$, CORRAL with our smoothing achieves the optimal $O(\sqrt{T})$ regret. %

The detailed description of the aforementioned smoothing procedure, its properties and the regret analysis are postponed to Section~\ref{sec::base_smoothing}. We also show some applications of our model selection results in Section~\ref{sec:applications}. 
\subsection*{Meta-Algorithms}
We review the adversarial bandit algorithms used as a Meta-Algorithm in Algorithm~\ref{Alg:meta-algorithm}. 
\subsection*{CORRAL Meta-Algorithm}
We reproduce the CORRAL meta-algorithm below. 

\begin{algorithm}[H]
\textbf{Input:} Base Algorithms $\{\mathcal{B}_j\}_{j=1}^M$, learning rate $\eta$.  \\
Initialize: $\gamma = 1/T, \beta = e^{\frac{1}{\ln T}}, \eta_{1,j} = \eta, \rho^j_{1} = 2M, \underline{p}^j_{1} = \frac{1}{ \rho^j_{1}},  {p}^j_1= 1/M$ for all $j \in [M]$. 
\For{$t = 1, \cdots, T$}{
Sample $i_t \sim p_t$. \\
Receive feedback $r_t$ from base $\mathcal{B}_{i_t}$. \\ 
Update $p_{t}$, $\eta_t$, $\underline{p}_t$ and $\rho_t$ to $p_{t+1}$, $\eta_{t+1}$, $\underline{p}_{t+1}$ and $\rho_{t+1}$ using $\mathsf{CORRAL-Update}$ Algorithm~\ref{Alg:corral_update}. \\
}
\caption{CORRAL Meta-Algorithm}
\label{Alg:corral_meta-algorithm}
\end{algorithm}

\subsection*{EXP3.P Meta-Algorithm}
We reproduce the EXP3.P algorithm (Figure 3.1 in \citep{Bubeck-Slivkins-2012}) below. In this formulation we use $\eta = 1, \gamma = 2 \beta k$ and $p = \frac{\gamma}{k}$. 

\begin{algorithm}[H]
\textbf{Input:} Base Algorithms $\{\mathcal{B}_j\}_{j=1}^M$, exploration rate $p$.  \\
Initialize: ${p}^j_{1}= 1/M$ for all $j \in [M]$. 

\For{$t = 1, \cdots, T$}{
Sample $i_t \sim p_t$. \\
Receive feedback $r_t$ from base $\mathcal{B}_{i_t}$. \\ 
Compute the estimated gain for each base $j$: 
$\widetilde{r}_{t,j} = \frac{r_{t,j}{\mathbf{1}_{i_t = j} + p/2}}{p_{j,t}}$
and update the estimated cumulative gain $\widetilde{R}_{j,t} = \sum_{s=1}^t \widetilde{r}_{s,j}$. 
\For{$j = 1, \cdots, M$}{
$p^j_{t+1} = (1 - p) \frac{\exp{\widetilde{R}_{j,t}}}{\sum_{n=1}^M\exp{\widetilde{R}_{n,t}}} + p$
}

}
\caption{EXP3.P Meta-Algorithm}
\label{Alg:exp3_meta-algorithm}
\end{algorithm}

 \section{Smoothed Algorithm and Regret Analysis}
\label{sec::base_smoothing}

\subsection*{Non-increasing Instantaneous Regret} 
We introduce a "smoothing" procedure (Algorithm~\ref{Alg:smoothing}) which, given a $(U, \delta, T)-$bounded algorithm $\mathcal{B}$ constructs a smoothed algorithm $\widetilde{\mathcal{B}}$ with the property that for some time-steps its conditional expected instantaneous regret has a decreasing upper bound. For ease of presentation and instead of making use of odd and even time-steps in the definition of $\widetilde{\mathcal{B}}$ we assume each round $t$ is split in two types of steps (Step 1 and Step 2). We will denote objects pertaining to the $t-$th round step $i$ using a subscript $t$ and a superscript $(i)$. The construction of $\widetilde{\mathcal{B}}$ is simple. The smoothed algorithm maintains an internal copy of the original algorithm $\mathcal{B}$. During step 1 of round $t$, $\widetilde{\mathcal{B}}$ will play the action suggested by its internal copy of $\mathcal{B}$. During step 2 of round $t$, $\widetilde{\mathcal{B}}$ will instead sample uniformly from the set of policies previously played by the copy of $\mathcal{B}$ maintained by $\widetilde{\mathcal{B}}$ during steps of type $1$ from round $1$ to round $t$. 

Let's define step 2 more formally. If algorithm $\mathcal{B}$ is at state $s$ during round $t$, at step 2 of the corresponding time-step the smoothed strategy will pick an index $q$ in $[1,2, .., s]$ uniformly at random, and will then re-play the policy $\mathcal{B}$ used during step $1$ of round $q$. Since $\mathcal{B}$ is $(U, \delta, T)$-bounded we will show in Lemma~\ref{lemma::martingale_smoothing} that the expected instantaneous regret of step 2 at round $s$ is at most $U(s,\delta)/s$ with high probability.

\begin{algorithm}[H]
\textbf{Input:} Base Algorithm $\mathcal{B}$;\\
Let $\pi_{s}$ be the policy of $\mathcal{B}$ in state $s$.\\
Let $\widetilde{\pi}_{s}^{(1)}, \widetilde{\pi}_{s}^{(2)}$ be the policies of $\widetilde{\mathcal{B}}$ in state $s$ during step $1$ and $2$ respectively. \\ 
Initialize state counter $s = 1$. \\
\For{ $t = 1, \cdots, T$ }{

\begin{tabular}{ c|l }
&Receive action set $\mathcal{A}^{(1)}_t \sim D_S$\\
& Let $\widetilde{\pi}_{s}^{(1)} = \pi_{s}$ from $\mathcal{B}_{i}$. \\
 \textbf{Step 1} &   Play action $a_{t}^{(1)} \sim \widetilde{\pi}^{(1)}_{s}(\mathcal{A}_t^{(1)})$.\\
&Receive feedback $r_{t}^{(1)} = f(\mathcal{A}_t^{(1)}, \delta_{a_{t}^{(1)}}) + \xi_t^{(1)}$ \\ 
  &Calculate $\pi_{s+1}$ of $\mathcal{B}$ using $r_{t}^{(1)}$. \\
\hline
\end{tabular}
\newline
\begin{tabular}{ c|l }
  & Receive action set $\mathcal{A}^{(2)}_t \sim D_S$.\\ 
  & Sample $q \sim \text{Uniform}(0, \cdots, {s})$; Let $\widetilde{\pi}_{s}^{(2)} = \pi_{q}$ from $\mathcal{B}$. \\
 \textbf{Step 2} &   Play action $a_{t}^{(2)} \sim \widetilde{\pi}^{(2)}_{s}(\mathcal{A}_t^{(2)})$.\\
 &Receive feedback $r_{t}^{(2)} = f(\mathcal{A}_t^{(2)}, \delta_{a_{t}^{(2)}}) + \xi_t^{(2)}$. \\ 
\end{tabular}\\
Update $\mathcal{B}$'s internal counter $s \leftarrow s + 1$}
\caption{Smoothed Algorithm}
\label{Alg:smoothing}
\end{algorithm}

It is easy to see that if algorithm $\widetilde{\mathcal{B}}$ has been played $\ell$ times (including step 1 and 2 plays), the internal counter of $\mathcal{B}$ equals $\ell/2$. We will make use of this internal counter when we connect a smoothed algorithm with the Stochastic CORRAL meta-algorithm. We now introduce the definition of  $(U, \delta, \mathcal{T}^{(2)})-$Smoothness which in short corresponds to algorithms that satisfy a high probability conditional expected regret upper bound during steps of type $2$.

\begin{definition}[$(U, \delta, \mathcal{T}^{(2)})-$Smoothness]\label{definition::stability}
Let $U:\mathbb{R} \times [0,1] \rightarrow \mathbb{R}^+$. We say a smoothed algorithm $\widetilde{\mathcal{B}}$  is $(U, \delta, \mathcal{T}^{(2)})-$smooth if with probability $1-\delta$ and for all rounds $t \in [T]$, the conditional expected instantaneous regret of Step 2 is bounded above by $U(t, \delta)/t$:%
\begin{equation}\label{equation::inst_regret_boundedness}
    \mathbb{E}_{\mathcal{A}'; \sim \mathcal{D_S}, \pi_t^{(2)} = \pi_q \text{ s.t. } q\sim \mathrm{Uniform}(0, \cdots, s) } [ f(\mathcal{A}', \pi^*) - f(\mathcal{A}', \pi_t^{(2)})| \widetilde{\mathcal{F}}_{t-1} ] \leq \frac{U(t, \delta)}{t}, \text{ } \forall t \in [T].
\end{equation}
Where $\widetilde{\mathcal{F}}_{t-1} = \sigma\left( \{ \mathcal{A}_\ell^{(i)},\widetilde{\pi}_{\ell}^{(i)}, r_\ell^{(i)}, a_\ell^{(i)}\}_{\ell\in [t-1], i \in \{1,2\}}, \cup \{\mathcal{A}_\ell^{(1)},\widetilde{\pi}_{\ell}^{(1)}, r_\ell^{(i)}, a_\ell^{(1)}\} \right)$ is the sigma algebra generated by all contexts, rewards, policies and actions up to time $t$ step 1. 
\end{definition}

During all steps of type 2 algorithm $\widetilde{\mathcal{B}}$ replays the policies it has used when confronted with contexts $\mathcal{A}_1^{(1)}, ..., \mathcal{A}_s^{(1)}$. In Lemma~\ref{lemma::martingale_smoothing} we will use the fact that all contexts are assumed to be generated as i.i.d. samples from the same context generating distribution $\mathcal{D}_S$ to show that $\widetilde{\mathcal{B}}$ is $(U, \delta, \mathcal{T}^{(2)})-$smooth.

With this objective in mind let's analyze a slightly more general setting. Let $\mathcal{B}$ be a $(U, \delta, T)-$bounded algorithm playing in an environment where the high probability regret upper bound $U$ holds. Let's assume that $\mathcal{B}$ has been played for $t$ time-steps during which it has encountered i.i.d. generated contexts $\mathcal{A}_1, \cdots, \mathcal{A}_t$ and has played actions sampled from policies $\pi_1, \cdots, \pi_t$. Similar to the definition of $\widetilde{\mathcal{F}}_{t-1}$ in Definition~\ref{definition::stability}, let's define $\mathcal{F}_{t-1} = \sigma\left( \{ \mathcal{A}_\ell,\widetilde{\pi}_{\ell}, r_\ell, a_\ell\}_{\ell\in [t-1]} \right)$, the sigma algebra generated by all contexts, rewards, policies and actions up to time $t-1$.  We define the ``expected replay regret" $\mathsf{Replay}(t|\mathcal{F}_{t-1})$ as:
\begin{equation}\label{equation::replay_expected_regret}
    \mathsf{Replay}(t|\mathcal{F}_{t-1}) = \mathbb{E}_{\mathcal{A}_1', \cdots, \mathcal{A}_t'}\left[ \sum_{l=1}^t f(\mathcal{A}_l', \pi^*) - f(\mathcal{A}_l', \pi_l) \right] 
\end{equation}
Where $\mathcal{A}_1', \cdots, \mathcal{A}_t'$ are i.i.d. contexts from $\mathcal{D}_S$ all of them conditionally independent from $\mathcal{F}_t$. It is easy to see that the conditional instantaneous regret of a smoothed algorithm $\widetilde{\mathcal{B}}$ during round $t$ step 2 equals the expected replay regret $\mathsf{Replay}(t|\widetilde{\mathcal{F}}_{t-1})$ of the $\mathcal{B}$ copy inside $\widetilde{\mathcal{B}}$. 

As a first step in proving that $\widetilde{\mathcal{B}}$ is $(U, \delta, \mathcal{T}^{(2)})-$smooth in Lemma~\ref{lemma::martingale_smoothing} we show the replay regret of a $(U, \delta, T)$-bounded algorithm satisfies a high probability upper bound.

\begin{lemma}\label{lemma::martingale_smoothing}
 If $\mathcal{B}$ is $(U, \delta, T)-$bounded with $U(t, \delta) > 8\sqrt{t\log(\frac{t^2}{\delta})}$ and the rewards satisfy Assumption~\ref{assumption:unit_bounded_reward}, then with probability at least $1-\delta$ for all $t \in [T]$ the expected replay regret of $\mathcal{B}$ satisfies:
 \begin{equation*}
     \mathsf{Replay}(t|\mathcal{F}_{t-1}) \leq 4 U(t, \delta) + 2 \delta t .
 \end{equation*}
 Furthermore, if $\delta \leq \frac{1}{\sqrt{T}}$ then $ \mathsf{Replay}(t|\mathcal{F}_{t-1})  \leq 5 U(t, \delta)$.
\end{lemma}

\begin{proof} Let's condition on the event $\mathcal{E}_1$ that $\mathcal{B}$'s plays satisfy the high probability regret guarantee given by $U$:
\begin{equation}\label{equation::regret_upper_bound_condition}
    \sum_{l=1}^t f(\mathcal{A}_l, \pi^*) - f(\mathcal{A}_l, \pi_l) \leq U(t, \delta).
\end{equation}
For all $t \in [T]$ and where $\mathcal{A}_1, \cdots, \mathcal{A}_t$ are the contexts algorithm $\mathcal{B}$ encountered up to time $t$. Since $\mathcal{B}$ is $(U, \delta, T)-$bounded it must be the case that $\mathbb{P}\left( \mathcal{E}_1\right) \geq 1-\delta$.

Let $\mathcal{A}_1', \cdots, \mathcal{A}_t'$ be a collection of $t$ fresh i.i.d. contexts from $\mathcal{D}_S$ independent from  $\mathcal{F}_{t}$. We now use martingale concentration arguments to show that $\sum_{l=1}^t f(\mathcal{A}_l, \pi^*) \approx  \sum_{l=1}^t f(\mathcal{A}_l', \pi^*)$ and $\sum_{l=1}^t f(\mathcal{A}_l, \pi_l) \approx \sum_{l=1}^t f(\mathcal{A}_l', \pi_l)$. Consider the following two martingale difference sequences:
\begin{align*}
&\left\{M_l^1 := f(\mathcal{A}_l, \pi^*) - f(\mathcal{A}'_l, \pi^*) \right\}_{l=1}^T \\
&\left\{M_l^2 := f(\mathcal{A}_l', \pi_l) - f(\mathcal{A}_l, \pi_l) \right\}_{l=1}^T
\end{align*}
Since by assumption $\max_{\mathcal{A}', \pi} \left|f(\mathcal{A}', \pi) \right| \leq 1$ each term in $\{ M_l^1\}$ and $\{M_l^2\}$ is bounded and satisfies $\max\left( |M_l^1|, |M_l^2| \right) \leq 2$ for all $t$. A simple use of Azuma-Hoeffding yields:
\begin{align*}
 \mathbb{P}\left(\left| \sum_{l=1}^t M_l^i \right| \geq \sqrt{8 t \log\left( \frac{8t^2}{\delta } \right) }  \right) \leq 2 \exp\left( - \frac{8t \log(\frac{8t^2}{\delta })  }{ 8t  } \right)
= \frac{ \delta}{4t^2} \;.
 \end{align*}
Summing over all $t$, and all $i \in \{1, 2\}$, using the fact that $\sum_{t=1}^T \frac{1}{t^2} < 2$ and the union bound implies that for all $t$, with probability $1-\delta$,
\begin{align}
    \left| \sum_{l=1}^t f(\mathcal{A}_l,\pi_l) -  f(\mathcal{A}_l', \pi_l)  \right| \leq \sqrt{8 t \log\left( \frac{8t^2}{\delta } \right) }\label{equation::replay_helper_1}\\
        \left|\sum_{l=1}^t f(\mathcal{A}_l,\pi^*) -  f(\mathcal{A}_l', \pi^*)  \right| \leq \sqrt{8 t \log\left( \frac{8t^2}{\delta } \right) }\label{equation::replay_helper_2}\\
\end{align}
Denote this event as $\mathcal{E}_2$. 
We shall proceed to upper bound the replay regret. Let's condition on $\mathcal{E}_1 \cap \mathcal{E}_2$. The following sequence of inequalities holds,
\begin{align}
     \sum_{l=1}^t f(\mathcal{A}'_l, \pi^*) - f(\mathcal{A}'_l, \pi_l) &\stackrel{(i)}{\leq}  \sum_{l=1}^t f(\mathcal{A}_l, \pi^*) - f(\mathcal{A}_l, \pi_l) + \left|\sum_{l=1}^t f(\mathcal{A}_l,\pi_l) -  f(\mathcal{A}_l', \pi_l) \right| + \notag \\
     &\quad \left| \sum_{l=1}^t f(\mathcal{A}_l,\pi^*) -  f(\mathcal{A}_l', \pi^*)\right |\notag\\
     &\leq U(t, \delta) + 2\sqrt{8 t \log\left( \frac{8t^2}{\delta } \right) } \notag%
\end{align}

For all $t \in [T]$. Inequality $(i)$ follows by the triangle inequality while $(ii)$ is a consequence of conditioning on $\mathcal{E}_1 \cap \mathcal{E}_2$ and invoking inequalities~\ref{equation::regret_upper_bound_condition},~\ref{equation::replay_helper_1} and~\ref{equation::replay_helper_1}. We conclude that with probability at least $1-2\delta$ and for all $t \in [T]$,

\begin{align*}
     \sum_{l=1}^t f(\mathcal{A}'_l, \pi^*) - f(\mathcal{A}'_l, \pi_l) & \leq U(t, \delta) + 2\sqrt{8 t \log\left( \frac{8t^2}{\delta } \right) }
\end{align*}
Since we have assumed that $U(t, \delta) > 8\sqrt{t\log(\frac{t^2}{\delta})}$, averaging out over the randomness in $\{ \mathcal{A}_l' \}_{l=1}^t$ yields that conditioned on $\mathcal{E}_1$,
\begin{equation*}
     \mathsf{Replay}(t|\mathcal{F}_{t-1}) \leq 4 (1-2\delta) U(t, \delta) + 2 \delta t  < 4 U(t, \delta) + 2 \delta t.
 \end{equation*}
It is easy to see that in case  $\delta \leq \frac{1}{\sqrt{T}}$ then $ \mathsf{Replay}(t|\mathcal{F}_{t-1})  \leq 5 U(t, \delta)$. 
\end{proof}

 In Propositon~\ref{prop::decreasing_instant_regret} we show that if $\mathcal{B}$ is bounded, then $\widetilde{\mathcal{B}}$ is both bounded and smooth. We will then show  that several algorithms such as UCB, LinUCB, $\epsilon$-greedy and EXP3 are $(U, \delta, T)$-bounded for appropriate functions $U$. By Proposition~\ref{prop::decreasing_instant_regret} we will then conclude the smoothed versions of these algorithms are smooth. 
\begin{proposition}
\label{prop::decreasing_instant_regret}
If  $U(t, \delta) > 8\sqrt{t\log(\frac{t^2}{\delta})}$, $\delta \leq \frac{1}{\sqrt{T}}$, the rewards satisfy Assumption~\ref{assumption:unit_bounded_reward} and $\mathcal{B}$ is $(U, \delta, T)-$bounded, then $\widetilde{\mathcal{B}}$ is $(5U, \delta, \mathcal{T}^{(2)})-$smooth and with probability at least $1-3\delta$,
\begin{equation*}
     \sum_{l=1}^t \sum_{i \in \{1,2\}} f(\mathcal{A}_l^{(i)}, \pi^*) - f(\mathcal{A}_l^{(i)}, \pi_l^{(i)}) \leq 7U(t, \delta) \log(t).
\end{equation*}%
for all $t \in [T]$.
\end{proposition}
\begin{proof}
Let $\mathcal{E}_1$ denote the event that $\widetilde{\mathcal{B}}$'s plays during steps of type $1$ satisfy the high probability regret guarantee given by $U$:
\begin{equation}\label{equation::supporting_upper_bound}
    \sum_{l=1}^t f(\mathcal{A}^{(1)}_l, \pi^*) - f(\mathcal{A}^{(1)}_l, \pi^{(1)}_l) \leq U(t, \delta).
\end{equation}
for all $t \in [T]$. Since the conditional instantaneous regret of Step $2$ of round $t$ equals the average replay regret of the type $1$ steps up to $t$, Lemma~\ref{lemma::martingale_smoothing} implies that whenever $\mathcal{E}_2$ holds (see definition for $\mathcal{E}_2$ in the proof of Lemma~\ref{lemma::martingale_smoothing}) which occurs with probability at least $1-\delta$, the conditional expected instantaneous regret satisfies: $\E[ f(\mathcal{A}', \pi^*) - f(\mathcal{A}', \pi_t^{(2)})| \widetilde{\mathcal{F}}_{t-1}] \leq \frac{5U(t, \delta)}{t}$ for all $t \in [T]$. This shows that $\widetilde{\mathcal{B}}$ is $(5U, \delta, \mathcal{T}^{(2)})-$smooth.

It is easy to see that if we condition on $\mathcal{E}_1 \cap \mathcal{E}_2$ the conditional expected instantaneous regret of steps of type $2$ satisfy,
\begin{equation}\label{equation::upper_bounding_expected_instantaneous_regrets}
    \sum_{l=1}^t \E[ f(\mathcal{A}', \pi^*) - f(\mathcal{A}', \pi_l^{(2)})| \widetilde{\mathcal{F}}_{l-1}] \leq \sum_{l=1}^t \frac{5U(l, \delta)}{l} \leq 5U(t, \delta) \log(t)
\end{equation}

For all $t \in [T]$. We now show the regret incurred by $\widetilde{\mathcal{B}}$ satisfies a high probability upper bound.  To bound the regret accrued during time-steps of type $2$, consider the following Martingale difference sequences,
\begin{align*}
\left\{     M_l^1 :=  \E[ f(\mathcal{A}', \pi_l^{(2)})| \widetilde{\mathcal{F}}_{l-1}]  - f(\mathcal{A}_l^{(2)}, \pi_l^{(2)} )  \right\}_{l=1}^T \\
\left\{     M_l^2 :=  \E[ f(\mathcal{A}', \pi^*)| \widetilde{\mathcal{F}}_{l-1}]  - f(\mathcal{A}_l^{(2)}, \pi^* )  \right\}_{l=1}^T 
\end{align*}
As a result of Assumption~\ref{assumption:unit_bounded_reward}, $| M^i_l | \leq 2$ for all $i \in \{1, 2\}$ and therefore a simple use of Azuma-Hoeffding's inequality,
\begin{align*}
 \mathbb{P}\left(\left| \sum_{l=1}^t M^i_l \right| \geq \sqrt{8 t \log\left( \frac{8t^2}{\delta } \right) }  \right)  \leq 2 \exp\left( - \frac{8t \log(\frac{8t^2}{\delta })  }{ 8t  } \right) = \frac{ \delta}{4t^2} \;.
 \end{align*}
Summing over all $t$, applying the union bound, using the fact that $\sum_{t=1}^T \frac{1}{t^2} < 2$ implies that for all $t \in [T]$, with probability $1-\delta$,
\begin{align}
    \left| \sum_{l=1}^t \E[ f(\mathcal{A}', \pi^*) - f(\mathcal{A}', \pi_l^{(2)})| \widetilde{\mathcal{F}}_{l-1}]  - f(\mathcal{A}_l^{(2)}, \pi^*) - f(\mathcal{A}_l^{(2)}, \pi_l^{(2)}) \right| &\leq \sqrt{8 t \log\left( \frac{8t^2}{\delta } \right) } \notag \\
    &\leq U(t, \delta) \label{equation::bounding_average_regret_vs_empirical_smoothing}
\end{align}
Let's denote as $\mathcal{E}_3$ the event where Equation~\ref{equation::bounding_average_regret_vs_empirical_smoothing} holds. 
If $\mathcal{E}_2 \cap \mathcal{E}_3$ occur, then combining the upper bounds in~\ref{equation::upper_bounding_expected_instantaneous_regrets} and~\ref{equation::bounding_average_regret_vs_empirical_smoothing} we conclude that,
\begin{align*}
    \sum_{l=1}^t  f(\mathcal{A}_l^{(2)}, \pi^*) - f(\mathcal{A}_l^{(2)}, \pi_l^{(2)}) \leq 6U(t, \delta)\log(t)
\end{align*}
combining this last observation with Equation~\ref{equation::supporting_upper_bound}, we conclude that for all $t$ with probability at least $1-3\delta$,
\begin{equation*}
    \sum_{l=1}^t \sum_{i \in \{1,2\}} f(\mathcal{A}_l^{(i)}, \pi^*) - f(\mathcal{A}_l^{(i)}, \pi_l^{(i)})  \leq 7 U(t, \delta,)\log(t)
\end{equation*}
For all $t \in [T]$. The result follows.
\end{proof}

It remains to show how to adapt the feedback structure of the Stochastic CORRAL meta-algorithm to deal with the two step nature of smoothed algorithms. We reproduce the full pseudo-code of the Stochastic CORRAL meta-algorithm adapted to smoothed algorithms below,

\begin{algorithm}[H]
 \textbf{Input:} Smoothed Base Algorithms $\{\widetilde{\mathcal{B}}_j\}_{j=1}^M$, bias functions $\{ b_j: \mathbb{N} \rightarrow \mathbb{R} \}_{j=1}^M$ \\
\For{ $t = 1, \cdots, T$ }{ 
 Sample $j_t \sim p_t$.\\
 Play base algorithm $j_t$ for Steps 1 and 2. \\ 
Receive feedback $r_t^{(1)}$ and $r_t^{(2)}$ from Steps $1$ and $2$ when executing $\widetilde{\mathcal{B}}_{j_t}$. \\
Let $s_{t,j_t}$ be the internal counter at time $t$ of $\widetilde{\mathcal{B}}_{j_t}$ as defined in Algorithm~\ref{Alg:smoothing}.\\
Update $p_t$ using $2r_t^{(2)} - b_{j_t}(s_{t,j_t})$
 }
 \caption{Smooth Stochastic CORRAL Meta-Algorithm}
\label{Alg:smooth_meta-algorithm}
\end{algorithm}

For reasons that have to do with the analysis, Algorithm~\ref{Alg:smooth_meta-algorithm} has a few extra features not present in the meta-algorithm-base template of Algorithm~\ref{Alg:meta-algorithm}. First, whenever the smooth stochastic corral meta-algorithm selects an algorithm $j_t$ it plays it for two steps, thus coinciding with $\widetilde{\mathcal{B}}_{j_t}$'s two time step structure. Second, it updates its distribution $p_t$ using the feedback $2r_t^{(2)} - b_{j_t}(s_{t,j_t})$ instead of using the sum $r_t^{(1)} + r_t^{(2)}$. Most notably, the update makes use of a bias adjustment to the reward signal that is not present in the original. The reason behind this modification will become clearer in the regret analysis.%

\subsection*{Applications of Proposition~\ref{prop::decreasing_instant_regret}}
We now show the smoothed versions of several algorithms satisfy Definition~\ref{definition::stability} by showing they are $(U,\delta,T)-$bounded for an appropriate upper bound function $U$. We focus on algorithms for the $k-$armed bandit setting and the contextual linear bandit setting.
\begin{lemma}[Theorem~3 in \citep{abbasi2011improved}]
\label{lem::Lin_UCB} In the case of changing and potentially infinite contexts of dimension $d$, LinUCB is $(U, \delta, T)$-bounded with $U(t, \delta) = O(d\sqrt{t}\log(1/\delta))$. 
\end{lemma}
\begin{lemma}[Theorem~1 in \citep{pmlr-v15-chu11a}]
\label{lem::Lin_UCB2}
 In the case of finite linear contexts of size $k$ and dimension $d$, LinUCB is $(U, \delta, T)$-bounded with $U(t, \delta) = O(\sqrt{dt}\log^3(kT\log(T)/\delta))$.
\end{lemma}

\begin{lemma}[Theorem 1 in \citep{pmlr-v24-seldin12a}]
In the $k-$armed adversarial bandit setting Exp3 is $(U, \delta, T)-$bounded where $U(t,\delta) = O(\sqrt{tk}\log\frac{tk}{\delta})$. 
\end{lemma}
\begin{lemma}
\label{lem::UCB}
In the stochastic $k-$armed bandit problem, if we assume the noise $\xi_t$ is conditionally 1-sub-Gaussian, UCB is $(U, \delta, T)$-bounded with $U(t,\delta) = O(\sqrt{tk}\log\frac{tk}{\delta})$. 
\end{lemma}
\begin{proof}
The regret of UCB is bounded as $\sum_{i:\Delta_i > 0} \left ( 3\Delta_i + \frac{16}{\Delta_i}\log\frac{2k}{\Delta_i \delta} \right ) $ (Theorem~7 of \citep{abbasi2011improved}) where $\Delta_i$ is the gap between arm $i$ and the best arm. By substituting the worst-case $\Delta_i$ in the regret bound, $U(T, \delta) = O(\sqrt{Tk}\log\frac{Tk}{\delta})$.  
\end{proof}

For the remainder of this section we focus on showing that in the stochastic $k-$armed bandit problem, the $\epsilon$-greedy algorithm (Algorithm 1.2 of~\cite{slivkins2019introduction}) is $(U, T, \delta)-$bounded. At time $t$ the $\epsilon$-greedy algorithm selects with probability $\epsilon_t = \min(c/t, 1)$ an arm uniformly at random, and with probability $1-\epsilon_t$ it selects the arm whose empirical estimate of the mean is largest so far. Let's introduce some notation: we will denote by $\mu_1, \cdots, \mu_k$ the unknown means of the $K$ arms use the name $\widehat{\mu}_j^{(t)}$ to denote the empirical estimate of the mean of arm $j$ after using $t$ samples. 

Without loss of generality let $\mu_1$ be the optimal arm. We denote the sub-optimality gaps as $\Delta_j = \mu_1 - \mu_j$ for all $j \in [k]$. Let $\Delta_*$ be the smallest nonzero gap. We follow the discussion in \citep{auer2002finite} and start by showing that under the right assumptions, and for a horizon of size $T$, the algorithm satisfies a high probability regret bound for all $t \leq T$. The objective of this section is to prove the following Lemma:

\begin{lemma}\label{lemma::epsilon_greedy}
If $c=\frac{10K\log(T^3/\gamma)}{\Delta_*^2} $\footnote{This choice of $c$ is robust to multiplication by a constant.} for some $\gamma \in (0,1)$ satisfying $\gamma \leq \frac{\Delta_j^2}{2}$, then $\epsilon-$greedy with $\epsilon_t = \frac{c}{t}$ is $(U,\delta, T)-$bounded for $\delta \leq \frac{\Delta_*^2}{T^3}$ where $$U(t, \delta) = \frac{30 k \log(\frac{1}{\delta})}{\Delta_*^2}\left(\sum_{j=2}^k  \frac{\Delta_j}{\Delta_*^2} + \Delta_j \right)\log(t+1).$$   
\end{lemma}

\begin{proof}

Let $E(t)  =  \frac{1}{2k}\sum_{l=1}^t \epsilon_l$ and denote by $T_j(t)$ the random variable denoting the number of times arm $j$ was selected up to time $t$. We start by analyzing the probability that a suboptimal arm $j > 1$ is selected at time $t$:

\begin{equation}
    \mathbb{P}( j \text{ is selected at time }t ) \leq \frac{\epsilon_t}{k} +\left( 1-\frac{\epsilon_t}{k}\right) \mathbb{P}\left(   \widehat{\mu}_j^{(T_j(t))} \geq \widehat{\mu}_1^{(T_1(t))}   \right)
\end{equation}
Let's bound the second term. 

\begin{align*}
    \mathbb{P}\left( \widehat{\mu}_j^{(T_j(t))} \geq \widehat{\mu}_1^{(T_1(t))}   \right)  \leq \mathbb{P}\left(  \widehat{\mu}_j^{(T_j(t))} \geq \mu_j + \frac{\Delta_j}{2}   \right) + \mathbb{P}\left(  \widehat{\mu}_1^{(T_1(t))} \leq \mu_1 - \frac{\Delta_j }{2}  \right)
\end{align*}

The analysis of these two terms is the same. Denote by $T_j^R(t)$ the number of times arm $j$ was played as a result of a random epsilon greedy move. We have:
\begin{small}
\begin{align*}
    \mathbb{P}\left(     \widehat{\mu}_j^{(T_j(t))} \geq \mu_j + \frac{\Delta_j}{2}     \right) &= \sum_{l=1}^t \mathbb{P}\left( T_j(t) = l \text{ and } \widehat{\mu}_j^{(l)} \geq \mu_j + \frac{\Delta_j}{2}  \right) \\
    &= \sum_{l=1}^t \mathbb{P}\left( T_j(t) = l | \widehat{\mu}_j^{(l)} \geq \mu_j + \frac{\Delta_j}{2}  \right) \mathbb{P}\left(  \widehat{\mu}_j^{(l)} \geq \mu_j + \frac{\Delta_j}{2}  \right) \\
    &\stackrel{a}{\leq } \sum_{l=1}^t \mathbb{P}\left( T_j(t) = l \Big| \widehat{\mu}_j^{(l)} \geq \mu_j + \frac{\Delta_j}{2}  \right) \exp(-\Delta_j^2 t/2) \\
    &\stackrel{b}{\leq} \sum_{l=1}^{\lfloor E(t)\rfloor} \mathbb{P}\left( T_j(t) = l \Big| \widehat{\mu}_j^{(l)} \geq \mu_j + \frac{\Delta_j}{2}  \right) + \frac{2}{\Delta_j^2}\exp(-\Delta_j^2 \lfloor E(t) \rfloor /2) \\
    &\leq \sum_{l=1}^{\lfloor E(t)\rfloor} \mathbb{P}\left( T_j^R(t) = l \Big| \widehat{\mu}_j^{(l)} \geq \mu_j + \frac{\Delta_j}{2}  \right) + \frac{2}{\Delta_j^2}\exp(-\Delta_j^2 \lfloor E(t) \rfloor /2)\\
    &\leq \underbrace{\lfloor E(t) \rfloor \mathbb{P}\left(   T_j(t)^R \leq \lfloor E(t) \rfloor   \right)}_{(1)} + \underbrace{ \frac{2}{\Delta_j^2}\exp(-\Delta_j^2 \lfloor E(t) \rfloor /2)}_{(2)}
\end{align*}
\end{small}
Inequality $a$ is a consequence of the Azuma-Hoeffding inequality bound. Inequality $b$ follows because $\sum_{l=E+1}^\infty \exp(-\alpha l) \leq \frac{1}{a} \exp( -\alpha E )$. Term $(1)$ corresponds to the probability that within the interval $[1, \cdots, t]$, the number of greedy pulls to arm $j$ is at most half its expectation. Term $(2)$ is already "small". Lets proceed to bound $(1)$.  Let $\epsilon_t = \min(c/t, 1)$. with $c=\frac{10k\log(T^3/\gamma)}{\Delta_*^2}$ for some $\gamma \in (0,1)$ satisfying $\gamma \leq \Delta_j^2$. We'll show that under these assumptions we can lower bound $E(t)$. If $t \geq \frac{10k\log(T^3/\gamma)}{\Delta_*^2}$:
\begin{align*}
   E(t):= \frac{1}{2k} \sum_{l=1}^t \epsilon_l&= \frac{5\log(T^3/\gamma)}{\Delta_*^2}+ \frac{5\log(T^3/\delta)}{\Delta_*^2} \sum_{l=\log(T^3/\gamma )}^t \frac{1}{l}\\
    &\geq \frac{5\log(T^3/\gamma)}{\Delta_*^2} + \frac{5\log(T^3/\gamma)\log(t) }{2\Delta_*^2}\\
    &\geq \frac{5\log(T^3/\gamma)}{\Delta_*^2}
\end{align*}
By Bernstein's inequality (see derivation of equation (13) in \citep{auer2002finite}) we can upper bound $T_j^R(t)$:%
\begin{equation}\label{equation::bernstein_bound_epsilon_greedy}
    \mathbb{P}\left(  T_j^R(t) \leq E(t)    \right) \leq  \exp\left(   -E(t)/5   \right) 
\end{equation}
Hence for $t \geq \frac{10k\log(T^3/\gamma)}{\Delta_*^2}$:
\begin{equation*}
    \mathbb{P}\left(  T_j^R(t) \leq E(t)    \right) \leq  \left(\frac{\gamma}{T^3}\right)^{\frac{1}{\Delta_*^2}}
\end{equation*}
And therefore since $E(t) \leq T$  and $\frac{1}{\Delta_*} \geq 1$ we can upper bound $(1)$ as:
\begin{equation*}
   \lfloor E(t) \rfloor \mathbb{P}\left(   T^R_j(t) \leq \lfloor E(t) \rfloor   \right) \leq \left(\frac{\gamma}{T^2}\right)^{\frac{1}{\Delta_*^2}} \leq \frac{\gamma}{T^2}
\end{equation*}
Now we proceed with term $(2)$:
\begin{align*}
\frac{2}{\Delta_j^2}\exp\left(  -\Delta_j^2 \lfloor E(t) \rfloor/2   \right) &\stackrel{(a)}{\leq} \frac{2}{\Delta_j^2} \exp\left( -\frac{5}{2} \log\left(\frac{T^3}{\gamma}\right) \frac{\Delta_j^2}{\Delta_*^2}    \right) \\
&\leq \frac{2}{\Delta_j^2} \exp\left( - \log\left(\frac{T^3}{\gamma}\right)     \right)\\
&= \frac{2}{\Delta_j^2}\left( \frac{\gamma}{T^3}     \right)^5 \\
&\stackrel{(b)}{\leq} \frac{\gamma}{T^3}  
\end{align*}

The first inequality $(a)$ follows because $E(t) \geq \frac{5\log(T^3/\gamma)}{\Delta_*^2}$. Inequality $(b)$ follows because by the assumption $\gamma \leq \frac{\Delta_j^2}{2}$ the last term is upper bounded by $\frac{\gamma}{T^3}$.

By applying the union bound over all arms $j \neq 1$ and time-steps $t \geq  \frac{10k\log(T^3/\gamma)}{\Delta_*^2}$, we conclude that the probability of choosing a sub-optimal arm $j \geq 2$ at any time time $t$ for $t \geq \frac{10k\log(T^3/\gamma)}{\Delta_*^2}$ as a \textbf{greedy choice} is upper bounded by $\frac{k
\gamma}{T^2}\leq \frac{k\gamma }{T}$. In other words after $t \geq \frac{10k\log(T^3/\gamma)}{\Delta_*^2}$ rounds, with probability $1-\frac{k\gamma}{T}$ sub-optimal arms are only chosen as a result of random epsilon greedy move (occurring with probability $\epsilon_t$). 

A similar argument as the one that gave us Equation \ref{equation::bernstein_bound_epsilon_greedy} can be used to upper bound the probability that $T^R_j(t)$ be much larger than its mean:
\begin{equation*}
    \mathbf{P}\left( T_j^R(t) \geq 3E(j)    \right) \leq \exp(-E(t)/5)
\end{equation*}
Using this and the union bound we see that with probability more than $1-\frac{k\gamma}{T}$ and for all $t \in [T]$ and arms $j \in [k]$,  $T_j^R(t) \leq 3E(t)$. 
Combining this with the observation that after $t\geq\frac{10k\log(T^3/\gamma)}{\Delta_*^2} $ and with probability $1-
\frac{k\gamma}{T}$ over all $t$ simultaneously regret is only incurred by random exploration pulls (and not greedy actions), we can conclude that with probability at least $1-\frac{2k\gamma}{T}$ simultaneously for all $t \geq \frac{10k\log(T^3/\gamma)}{\Delta_*^2}$ the regret incurred is upper bounded by:
\begin{equation*}
    \underbrace{ \frac{10k\log(T^3/\gamma)}{\Delta_*^2}\cdot \frac{1}{k}\sum_{j=2}^k \Delta_j }_{(i)} +\underbrace{ 3E(t)\sum_{j=2}^k \Delta_j }_{(ii)}
\end{equation*}
Term $(i)$ is a crude upper bound on the regret incurred in the first $\frac{10k\log(T^3/\gamma)}{\Delta_*^2} $ rounds and $(ii)$ is an upper bound for the regret incurred in the subsequent rounds. 

Since $E(t) \leq \frac{20k\log(T^3/\gamma)}{\Delta_*^2} \log(t)$ we conclude that with probability $1-\frac{2k\gamma}{T}$ for all $t \leq T$ the cumulative regret of epsilon greedy is upper bounded by $$30K\log(T^3/\gamma) \left(\sum_{j=2}^k  \frac{\Delta_j}{\Delta_*^2} + \Delta_j \right) \max( \log(t), 1), $$ the result follows by identifying $\delta = \gamma/T^3$.

\end{proof}
Lemma~\ref{lemma::epsilon_greedy} gives us an instance dependent upper bound for the $\epsilon-$greedy algorithm. We now show the instance-independent high probability regret bound for $\epsilon$-greedy: 
\begin{lemma}\label{lem::epsilon_greedy}
If $c = \frac{10k\log(\frac{1}{\delta})}{\Delta_*^2}$, then $\epsilon-$greedy with $\epsilon_t = \frac{c}{t}$ is $(\delta, U, T)-$bounded for $\delta \leq \frac{\Delta_*^2}{T^3}$ and:
\begin{enumerate}
    \item $U(t,\delta) = 16\sqrt{\log(\frac{1}{\delta}) t }$ when $k=2$. 
    \item $U(t, \delta) = 20\left( k\log(\frac{1}{\delta}) \left(\sum_{j=2 }^K  \Delta_j\right) \right)^{1/3} t^{2/3}$ when $k > 2$.
\end{enumerate}
  
\end{lemma}
\begin{proof}
Let $\Delta$ be some arbitrary gap value. Let $R(t)$ denote the expected regret up to round $t$. We recycle the notation from the proof of Lemma \ref{lemma::epsilon_greedy}, recall $\delta = \gamma/T^3$.
\begin{align}
    R(t) &= \sum_{\Delta_j \le \Delta } \Delta_j \mathbb{E}\left[T_j(t)\right] + \sum_{\Delta_j \ge \Delta } \Delta_j \mathbb{E}\left[ T_j(t)  \right] \notag \\
    &\le \Delta t + \sum_{\Delta_j \ge \Delta } \Delta_j \mathbb{E}\left[T_j(t)\right] \notag \\
    &\le \Delta t + 30k\log(T^3/\gamma) \left(\sum_{\Delta_j \ge \Delta }^k  \frac{\Delta_j}{\Delta_*^2} + \Delta_j \right)\log(t) \notag  \\
    &\le \Delta t + 30k\log(T^3/\gamma) \left(\sum_{\Delta_j \ge \Delta }^k  \frac{\Delta_j}{\Delta_*^2}\right) + 30k\log(T^3/\gamma) \log(t)\left(\sum_{\Delta_j \ge \Delta }^k \Delta_j     \right) \label{eq::k_big_than_2}
\end{align}
When $k=2$, $\Delta_2 = \Delta_*$ and therefore (assuming $\Delta < \Delta_2$):
\begin{align*}
    R(t) &\leq \Delta t +    \frac{30k\log(T^3/\gamma)}{\Delta_2} + 30k\log(T^3/\gamma) \log(t) \Delta_2 \\
    &\leq \Delta t +   \frac{30k\log(T^3/\gamma)}{\Delta} + 30k\log(T^3/\gamma)\log(t)\Delta_2 \\
    &\stackrel{\mathrm{A}}{\leq} \sqrt{30k\log(T^3/\gamma) t } + 30k\log(T^3/\gamma)\log(t)\Delta_2 \\
    &\stackrel{\mathrm{B}}{\leq} 8\sqrt{k\log(T^3/\gamma) t }\\
    &\leq 16\sqrt{\log(T^3/\gamma) t }
\end{align*}
Inequality $\mathrm{A}$ follows from setting $\Delta$ to the optimizer, which equals $\Delta = \sqrt{\frac{30k\log(T^3/\gamma)}{t} } $. The second inequality $\mathrm{B}$ is satisfied for $T$ large enough. We choose this expression for simplicity of exposition. 

When $k > 2$ notice that we can arrive to a bound similar to \ref{eq::k_big_than_2}:

\begin{equation*}
    R(t)  \leq \Delta t + 30k\log(T^3/\gamma) \left(\sum_{\Delta_j \ge \Delta }^k  \frac{\Delta_j}{\Delta^2}\right) + 30k\log(T^3/\gamma) \log(t)\left(\sum_{\Delta_j \ge \Delta }^k \Delta_j     \right) 
\end{equation*}

Where $\Delta_*$ is substituted by $\Delta$. This can be obtained from Lemma \ref{lemma::epsilon_greedy} by simply substituting $\Delta_*$ with $\Delta$ in the argument for arms $j : \Delta_j \geq \Delta$.

We upper bound $\sum_{\Delta_j \geq \Delta} \Delta_j$ by $\sum_{j=2}^k \Delta_j$. Setting $\Delta$ to the optimizer of the expression yields $\Delta = \left(    \frac{30k \log(T^3/\gamma) \left(\sum_{j = 2  }^k  \Delta_j\right)}{t} \right)^{1/3}$, and plugging this back into the equation we obtain:

\begin{align*}
    R(t) &\leq 2\left(    30k\log(T^3/\gamma) \left(\sum_{j=2 }^k  \Delta_j\right) \right)^{1/3} t^{2/3} + 30k\log(T^3/\gamma) \log(t)\left(\sum_{j=2 }^k \Delta_j     \right) \\
    &\stackrel{\xi}{\leq}  20\left( k \log(T^3/\gamma) \left(\sum_{j=2 }^k \Delta_j\right) \right)^{1/3} t^{2/3}
\end{align*}

The inequality $\xi$ is true for $T$ large enough. We choose this expression for simplicity of exposition.

\end{proof}

\subsection*{Regret Analysis}

In this section we go back to sketch the proof of Theorem~\ref{thm:meta-algorithm_informal} by explaining how to bound terms $\mathrm{I}$ and $\mathrm{II}$ in the regret decomposition of Equation~\ref{eq::regret_decomp}. 

\paragraph*{Bounding Term I.} Recall that Algorithm~\ref{Alg:smooth_meta-algorithm} only sends the smoothed reward of Step 2 to the meta-algorithm while the base plays and incurs regrets from both Step 1 and Step 2. We show in Section~\ref{app:ommited_proofs_stochastic_corral} that this does not affect the regret of the meta-algorithm significantly. For CORRAL with learning rate $\eta$,  $\EE{\mathrm{I}}  \le O\left( \sqrt{MT} + \frac{M \ln T}{\eta} + T \eta \right)- \frac{\EE{\frac{1}{\underline{p}_{i_\star}}}}{40 \eta \ln T}.$ For EXP3.P with exploration rate $p$, $\EE{\mathrm{I}} < O(\sqrt{MT} + \frac{1}{p} + MTp)$. 

\paragraph*{Bounding Term $\mathbf{II}$.} This quantity is the regret of all the policies proposed by the optimal base $i_\star$, even during steps when it was not selected by the meta-algorithm. Recall the internal state of any algorithm (including $i_\star$) is only updated when selected and played by $\mathcal{M}$. We assume the smoothed base algorithm $\widetilde{\mathcal{B}}_{i_\star}$ satisfies the smoothness and boundedness properties of Definitions~\ref{definition::boundedness} and ~\ref{definition::stability}. For the purpose of the analysis we declare that when a smoothed base repeats its policy while not played, it repeats its subsequent Step 2 policy (Algorithm~\ref{Alg:smoothing}). This will become clearer in Section~\ref{app:ommited_proofs_stochastic_corral}. Since we select $\widetilde{\mathcal{B}}_{i_\star}$ with probability at least $\underline{p}_{i_\star}$ it will be updated at most every $1/\underline{p}_i$ time-steps and the regret upper bound will be roughly $\frac{1}{\underline{p}_{i_\star}}\, U_{i_\star}(T\underline{p}_{i_\star}, \delta)$.

\begin{theorem}\label{theorem::path_dependent_regret} 
We have that $\EE{\mathrm{II}} \leq \mathcal{O}\left(\EE{ \frac{1}{\underline{p}_i}\, U_i(T\underline{p}_i, \delta) \log T }+ \delta T(\log T +1)\right)$. 
Here, the expectation is over the random variable $\underline{p}_i$. If $U(t, \delta) = t^\alpha c(\delta)$ for some $\alpha \in [1/2,1)$ then,
$\EE{\mathrm{II} }\leq \widetilde{\mathcal{O}}\left(T^\alpha c(\delta)\EE{\frac{1}{\underline{p}_i^{1-\alpha}}}+\delta T(\log T +1)\right)$. 
\end{theorem}

\label{sec::meta-algorithm}

\paragraph*{Total Regret. }Adding Term I and Term II gives us the following worst-case model selection regret bound for the CORRAL meta-algorithm (maximized over $\underline{p}_{i_\star}$ and with a chosen $\eta$) and the EXP3.P meta-algorithm (with a chosen $p$):

\begin{theorem}
\label{thm:meta-algorithm}
If a base algorithm is $(U, \delta, T)$-bounded for $U(T, \delta) = T^{\alpha}c(\delta)$ and some $\alpha \in [1/2,1)$ and the choice of $\delta = 1/T$, the regret of the Smooth Stochastic CORRAL (Algorithm~\ref{Alg:smooth_meta-algorithm}) where $b_j(s) = \frac{U_j(s, \delta)}{s}$ is upper bounded by :

\begin{table}
\begin{center}
\begin{tabular}{ |c|c| } 
 \hline
    EXP3.P & CORRAL \\ \hline
 $\widetilde{O}\left( \sqrt{MT} +  MTp  + T^{\alpha} p^{\alpha-1} c(\delta) \right)$  & $\widetilde{O}\left( \sqrt{MT}+ \frac{M }{\eta} + T\eta  + T\,c(\delta)^{\frac{1}{\alpha}} \eta^{\frac{1-\alpha}{\alpha}}\right) $ \\ \hline
 $\widetilde{O}\left(\sqrt{MT} +  M^{\frac{1-\alpha}{2-\alpha}} T^{\frac{1}{2-\alpha}}  c(\delta)^{\frac{1}{2-\alpha}}\right)$& $\widetilde{O}\left(\sqrt{MT} + M^{\alpha}T^{1-\alpha}+ M^{1-\alpha}T^{\alpha} c(\delta) \right)$ \\ \hline
 $\widetilde{O}\left(\sqrt{MT} +  M^{\frac{1-\alpha}{2-\alpha}} T^{\frac{1}{2-\alpha}}  c(\delta)\right)$ & $\widetilde{O}\left(\sqrt{MT} + M^{\alpha}T^{1-\alpha}+ M^{1-\alpha}T^{\alpha} c(\delta)^{\frac{1}{\alpha}} \right)$ \\ \hline
\end{tabular}
\end{center}

\caption[Comparison of model selection guarantees for Stochastic CORRAL between the EXP3.P and CORRAL meta-algorithm. ]{Comparison of model selection guarantees for Stochastic CORRAL between the EXP3.P and CORRAL meta-algorithm. The top row shows the general regret guarantees. The middle row shows the regret guarantees when $\alpha$ and $c(\delta)$ are known. The bottom row shows the regret guarantees when $\alpha$ is known and $c(\delta)$ is unknown.}
\end{table}

\end{theorem}

\section{Lower Bound}\label{sec:lower_bound}
In stochastic environments, algorithms such as UCB can achieve logarithmic regret bounds. Our model selection procedure however has a $O(\sqrt{T})$ overall regret. In this section, we show that in general it is impossible to obtain a regret better than $\Omega(\sqrt{T})$ even when the optimal base algorithm has $0$ regret. In order to formalize this statement, let's define a model selection problem formally. 

\begin{definition}[Model Selection Problem] We call a tuple $(\{\mathcal{B}_i\}_{i=1}^M, \mathrm{Env})$ a model selection problem where $\{\mathcal{B}_i\}_{i=1}^M$ is a set of $M$ base algorithms and $\mathrm{Env}$ is a bandit environment\footnote{For example if $M=2$ $(\{ \text{UCB}, \text{LinUCB}\},\mathrm{MAB})$ is a valid Model Selection Problem }.
\end{definition}

\begin{theorem}
\label{thm:lowerbound}
Let $T \in \mathbb{N}$. For any model selection algorithm there exists a corresponding model selection problem $(\{ \mathcal{B}_1, \mathcal{B}_2\}, \mathrm{Env})$ such the regret of this model selection algorithm is lower bounded by $R(T) = \Omega\left(\frac{\sqrt{T}}{\log(T)}\right)$. 
\end{theorem}

\begin{proof}%
Consider a stochastic $2$-arm bandit problem where the best arm has expected reward $1/2$ and the second best arm has expected reward $1/4$. We construct base algorithms ${\mathcal{B}}_{1}, {\mathcal{B}}_{2}$ as follows. ${\mathcal{B}}_{1}$ always chooses the optimal arm and its expected instantaneous reward is $1/2$. ${\mathcal{B}}_{2}$ chooses the second best arm at time step $t$ with probability $\frac{4c}{\sqrt{t+2}\log(t+2)}$ ($c$ will be specified later), and chooses the best arm otherwise. The expected reward at time step $t$ of ${\mathcal{B}}_{2}$ is $\frac{1}{2}-\frac{c}{\sqrt{t+2}\log(t+2)}$.

Let $A^*$ be uniformly sampled from $\{1,2\}$. Consider two environments $\nu_1$ and $\nu_2$ for the meta-algorithm, each made up of two base algorithms $\widetilde{{\mathcal{B}}}_{1}, \widetilde{{\mathcal{B}}}_{2}$.
 Under $\nu_1$, $\widetilde{{\mathcal{B}}}_{1}$ and $ \widetilde{{\mathcal{B}}}_{2}$ are both instantiations of ${{\mathcal{B}}}_{1}$.
Under $\nu_2$, $\widetilde{{\mathcal{B}}}_{A^*} $, where $A^*$ is a uniformly sampled index in $\{1,2\}$,   is a copy of ${{\mathcal{B}}}_{1}$ and $\widetilde{{\mathcal{B}}}_{3-A^*}$ is a copy of ${{\mathcal{B}}}_{2}$. 

Let $\mathbb{P}_1,\mathbb{P}_2$ denote the probability measures induced by interaction of the meta-algorithm with $\nu_1$ and $\nu_2$ respectively.
Let $\widetilde{{\mathcal{B}}}_{A_t} $ denote the base algorithm chosen by the meta-algorithm at time $t$. We have $\mathbb{P}_1(A_t\neq A^*)=\frac{1}{2}$ for all $t$, since the learner has no information available to identify which algorithm is considered optimal.
By Pinskers' inequality we have
\begin{align*}
    \mathbb{P}_2(A_t\neq A^*) \geq \mathbb{P}_1(A_t\neq A^*)-\sqrt{\frac{1}{2}\mathrm{KL}(\mathbb{P}_1||\mathbb{P}_2)}
\end{align*}
By the divergence decomposition \citep[see][proof of Lemma 15.1 for the decomposition technique]{LS-2020} and using that for $\Delta < \frac{1}{4}:\, \mathrm{KL}(\frac{1}{2},\frac{1}{2}-\Delta) \leq 3\Delta^2$ (Lemma~\ref{lem::lower_bound_helper}), we have
\begin{align*}
    \mathrm{KL}(\mathbb{P}_1||\mathbb{P}_2) &= \sum_{t=2}^\infty \frac{1}{2}\mathrm{KL}\left(\frac{1}{2},\frac{1}{2}-\frac{c}{\sqrt{t+1}\log(t+1)}\right)\\
    &\leq \sum_{t=2}^\infty \frac{3c^2}{2t\log(t)^2} \leq 3c^2\,.
\end{align*}
Picking $c = \sqrt{\frac{1}{24}}$ leads to $\mathbb{P}_2(A_t\neq A^*) \geq \frac{1}{4}$, 
and the regret in environment $\nu_2$ is lower bounded by
\begin{align*}
    R(T) 
    &\geq \sum_{t=1}^T\mathbb{P}_2(A_t\neq A^*)\frac{c}{\sqrt{t+1}\log(t+1)}\\
    &\geq \frac{c}{4\log(T+1)}\sum_{t=1}^T\frac{1}{\sqrt{t+1}} = \Omega\left(\frac{\sqrt{T}}{\log(T)}\right)\,.
\end{align*}
\end{proof}

CORRAL needs knowledge of the best base's regret to achieve the same regret. The following lower bound shows that this requirement is unavoidable: 
\begin{theorem}
\label{thm:lower_bound2}
Let $\mathrm{Alg}$ be a model selection algorithm. There exists a model selection problem with two base algorithms where the best base has regret $\widetilde{O}(T^x)$ for some $0<x<1$ such that if $\mathrm{Alg}$ has no knowledge of $x$ nor of the reward of the best arm, then there exists a potentially different model selection problem where the best base also has regret $\widetilde{O}(T^x)$ but the model selection regret guarantee of $\mathrm{Alg}$ is lower bounded by $\Omega(T^y)$ with $y> x$. 
\end{theorem}

\begin{proof}
Let the set of arms be $\{a_1,a_2,a_3\}$. Let $x$ and $y$ be such that $0< x < y \le 1$. Let $\Delta = T^{x-1 + (y-x)/2}$. Define two environment $\mathcal{E}_1$ and $\mathcal{E}_2$ with reward vectors $\{1,1,0\}$ and $\{1+\Delta,1,0\}$ for $\{a_1,a_2,a_3\}$, respectively. Let $\mathcal{B}_1$ and $\mathcal{B}_2$ be two base algorithms defined by the following fixed policies when running alone in $\mathcal{E}_1$ or $\mathcal{E}_2$: 
\[
\pi_1 = 
\begin{cases}
    a_2       & \quad \text{w.p. } 1-T^{x-1}\\
    a_3  & \quad \text{w.p. } T^{x-1}
\end{cases}
\,,\qquad
\pi_2 = 
\begin{cases}
    a_2       & \quad \text{w.p. } 1-T^{y-1}\\
    a_3  & \quad \text{w.p. } T^{y-1}
\end{cases} \;.
\]
We also construct base $\mathcal{B}'_2$ defined as follows. Let $c_2 > 0$ and $\epsilon_2 = (y-x)/4$ be two constants. Base $\mathcal{B}'_2$ mimics base $\mathcal{B}_2$ when $t\le c_2 T^{x - y+1 + \epsilon_2 }$, and picks arm $a_1$ when $t>  c_2 T^{x - y+1 + \epsilon_2}$. The instantaneous rewards of $B_1$ and $B_2$ when running alone are $r^1_t = 1-T^{x-1}$ and $r^2_t = 1-T^{y-1}$ for all $1 \le t \le T$. Next, consider model selection with base algorithms $\mathcal{B}_1$ and $\mathcal{B}_2$ in $\mathcal{E}_1$. Let $T_1$ and $T_2$ be the number of rounds that $\mathcal{B}_1$ and $\mathcal{B}_2$ are chosen, respectively. 

First, assume case (1): %
There exist constants $c > 0$, $\epsilon > 0$, $p\in (0,1)$, and $T_0>0$ such that with probability at least $p$, $T_2 \ge c T^{x - y+1+ \epsilon}$ for all $T>T_0$.

The regret of base $\mathcal{B}_1$ when running alone for $T$ rounds is $T \cdot T^{x-1} = T^{x}$. The regret of the model selection method is at least 
\[
p\cdot T_2 \cdot T^{y-1} \ge p\cdot c T^{x - y+1+ \epsilon}\cdot T^{y-1}= p\cdot c \cdot T^{x+\epsilon} \;.
\]
Given that the inequality holds for any $T > T_0$, it proves the statement of the lemma in case (1). 

 Next, we assume the complement of case (1): %
For all constants $c > 0$, $\epsilon > 0$, $p\in (0,1)$, and $T_0>0$, with probability at least $p$, $T_2 < c T^{x - y+1+ \epsilon}$ for some $T>T_0$. %

Let $T$ be any such time horizon. Consider model selection with base algorithms $\mathcal{B}_1$ and $\mathcal{B}'_2$ in environment $\mathcal{E}_2$ for $T$ rounds. Let $T'_1$ and $T'_2$ be the number of rounds that $\mathcal{B}_1$ and $\mathcal{B}'_2$ are chosen. Note that $\mathcal{B}_2$ and $\mathcal{B}'_2$ behave the same for $c_2 T^{x - y+1+ \epsilon}$ time steps, and that $\mathcal{B}_1$ and $\mathcal{B}_2$ never choose action $a_1$. Therefore for the first $c_2 T^{x - y+1 + \epsilon_2}$ time steps, the model selection strategy that selects between $\mathcal{B}_1$ and $\mathcal{B}_2'$ in $\mathcal{E}_2$ behaves the same as when it runs $\mathcal{B}_1$ and $\mathcal{B}_2$ in $\mathcal{E}_1$. Therefore with probability $p > 1/2$,  $T'_2 < c_2 T^{x - y+1 + \epsilon_2}$, which implies $T'_1 > T/2$.

In environment $\mathcal{E}_2$, the regret of base $B'_2$ when running alone for $T$ rounds is bounded as
\[
(\Delta +  T^{y-1})  c_2 T^{x - y+1 + \frac{y-x}{4} } =  c_2 T^{\frac{5x-y}{4} } + c_2T^{\frac{3x+y}{4} }< 2c_2 T^{ \frac{3x+y}{4} } 
\]
Given that with probability $p>1/2$, $T'_1 > T/2$, the regret of the learner is lower bounded as,
\[
p (\Delta + T^{x-1}) \cdot \frac{T}{2} >\frac{1}{2}(T^{x-1+\frac{y-x}{2}}+T^{x-1} )\cdot \frac{T}{2}  < \frac{1}{2} T^{\frac{x+y}{2}}\,, 
\]
which is larger than the regret of $\mathcal{B}'_2$ running alone because $ \frac{3x+y}{4} <\frac{x+y}{2}$. The statement of the lemma follows given that for any $T_0$ there exists $T>T_0$ so that the model selection fails.
\end{proof}

\section{Applications of Stochastic CORRAL}\label{sec:applications}

\subsection*{Misspecified Contextual Linear Bandit}

We consider model selection in the misspecified linear bandit problem. The learner selects an action $a_t \in \mathcal{A}_t$ and receives a reward $r_t$ such that $|\E[r_t] - a_t^{\top} \theta| \le  \epsilon_*$ where $\theta \in \mathbb{R}^d$ is an unknown parameter vector and $\epsilon_*$ is the misspecification error. 
For this problem, \citep{zanette2020learning} and \citep{lattimore2019learning} present variants of LinUCB that achieve a high probability $\widetilde{O}(d\sqrt{T} + \epsilon_* \sqrt{d} T)$ regret bound. Both algorithms require knowledge of $\epsilon_*$, but \citep{lattimore2019learning} show a regret bound of the same order without the knowledge of $\epsilon_*$ for the version of the problem with a fixed action set $\mathcal{A}_t=\mathcal{A}$. Their method relies on G-optimal design, which does not work for contextual settings. It is an open question whether it is possible to achieve the above regret without knowing $\epsilon_*$ for problems with changing action sets. 

In this section, we show a $\widetilde{O}(d\sqrt{T} + \epsilon_* \sqrt{d} T)$ regret bound for linear bandit problems with changing action sets without knowing $\epsilon_*$. For problems with fixed action sets, we show an improved regret that matches the lower bound of \citep{LS-2020}.

Given a constant $E$ so that $|\epsilon_*|\le E$, we divide the interval $[1,E]$ into an exponential grid $\mathcal{G}=[1, 2, 2^2,..., 2^{\log(E)}]$. We use $\log(E)$ modified LinUCB bases, from either \citep{zanette2020learning} or \citep{lattimore2019learning}, with each base algorithm instantiated with a value of $\epsilon$ in the grid.  
\begin{theorem}
\label{thm:corral-ucb}
For the misspecified linear bandit problem described above, the regret of Stochastic CORRAL with a CORRAL meta-algorithm using learning rate $\eta = \frac{1}{\sqrt{T}d}$ and LinUCB base algorithms with target misspecification level $\epsilon \in \mathcal{G}$, is upper bounded by $\widetilde{\mathcal{O}}( d \sqrt{T} + \epsilon_* \sqrt{d} T  )$.
In the case of a fixed action linear bandit problem with $k$ arms and $\sqrt{k} >d$, the regret of Stochastic CORRAL with a CORRAL meta-algorithm using learning rate $\eta = \frac{1}{\sqrt{T}d}$ applied to a set of base algorithms consisting of one UCB base and one G-optimal base algorithm \citep{lattimore2019learning} is upper bounded by $\widetilde{\mathcal{O}}\left(\min \left( \frac{k}{d}\sqrt{T}, d\sqrt{T} + \epsilon_* \sqrt{d} T \right) \right)$. 
\end{theorem}

\begin{proof}
 From Lemma~\ref{lem::UCB}, for UCB, $ U(T, \delta) = O(\sqrt{Tk}\log\frac{Tk}{\delta})$. Therefore from Theorem~\ref{thm:meta-algorithm}, running CORRAL with smooth UCB results in the following regret bound: 
\[
\widetilde{O}\left( \sqrt{MT} + \frac{M\ln T}{\eta} + T\eta  + T\left( \sqrt{k}\log\frac{Tk}{\delta} \right)^{2} \eta \right) +\delta T.
\]
If we choose $\delta = 1/T$ and hide some log factors, we get $\widetilde{O}\left( \sqrt{T} +  \frac{1}{\eta} +  T k \eta \right)$. 

For the LinUCB bases in \citep{lattimore2019learning} or \citep{zanette2020learning} or the G-optimal algorithm~\citep{lattimore2019learning}, $U(t,\delta) = O(d\sqrt{t}\log(1/\delta) + \epsilon \sqrt{d} t)$. Substituting $\delta = 1/T$ into Theorem~\ref{thm:meta-algorithm} implies: 
\begin{small}
\begin{align*}
R(T) &\le O\left( \sqrt{MT \log(\frac{4TM}{\delta})}+ \frac{M \ln T}{\eta} + T \eta \right)  -  \EE{\frac{\rho}{40 \eta \ln T} - 2 \rho\, U(T/\rho, \delta) \log T} + \delta T  \\
&\le \widetilde{O}\left( \sqrt{MT } + \frac{M \ln T}{\eta} + T \eta \right)  -  \EE{\frac{\rho}{40 \eta \ln T} - 2 \rho\, (d\sqrt{\frac{T}{\rho}}\log(1/\delta) + \epsilon \sqrt{d} \frac{T}{\rho}) \log T} \\
&\le \widetilde{O}\left( \sqrt{MT } + \frac{M \ln T}{\eta} + T \eta \right)  -  \EE{\frac{\rho}{40 \eta \ln T} - 2 d\sqrt{T\rho}\log(1/\delta) \log T} +  2 \epsilon \, \sqrt{d} T\, \log T
\end{align*}
\end{small}
Maximizing over $\rho$ results in a regret guarantee of the form $\widetilde{\mathcal{O}}\left( \sqrt{T} + \frac{1}{\eta} +  Td^2\eta + \epsilon \sqrt{d} T\right)$. For the misspecified linear bandit problem we use $M = \mathcal{O}(\log(T))$ LinUCB bases with $\epsilon$ defined in the grid, and choose  $\eta = {\frac{1}{\sqrt{T}d}}$. The resulting regret for Stochastic CORRAL is of the form  $\widetilde{\mathcal{O}}\left( \sqrt{T}d  + \epsilon \sqrt{d} T\right)$. 

When the action sets are fixed, by the choice of $\eta = {\frac{1}{\sqrt{T}d}}$, the regret of Stochastic CORRAL with a CORRAL meta-algorithm over one UCB and one G-optimal base equals:
\begin{align*}
&\widetilde{\mathcal{O}}\Bigg(\min\Bigg\{\sqrt{T}\left(d + \frac{k}{d}\right),
\sqrt{T}d + \epsilon \sqrt{d} T \Bigg\}\Bigg) \;.
\end{align*}
If $\sqrt{k} > d$, the above expression becomes $\widetilde{\mathcal{O}}\left( \min \left( \sqrt{T}  \frac{k}{d}  , \sqrt{T} 
d + \epsilon \sqrt{d} T \right) \right)$
\end{proof}

Observe that in the case of a fixed action linear bandit problem, the regret upper bound we achieve for Stochastic CORRAL with a CORRAL meta-algorithm and a learning rate of $\eta = \frac{1}{\sqrt{T}d}$ is of the form $\widetilde{\mathcal{O}}\left(\min \left( \frac{k}{d}\sqrt{T}, d\sqrt{T} + \epsilon_* \sqrt{d} T \right) \right)$. The product of the terms inside the minimum is of order $\widetilde{\mathcal{O}}(kT)$. This result matches the following lower bound that shows that it is impossible to achieve $\widetilde{O}(\min(\sqrt{kT},d\sqrt{T} + \epsilon_* \sqrt{d} T ))$ regret: 
\begin{lemma}[Implied by Theorem 24.4 in \citep{LS-2020}]
\label{lem::lower_bound_ucb} Let $R_{\nu}(T)$ denote the cumulative regret at time $T$ on environment $\nu$.  For any algorithm, there exists a $1$-dimensional linear bandit environment $\nu_1$ and a $k$-armed bandit environment $\nu_2$ such that $R_{\nu_1}(T) \cdot  R_{\nu_2}(T) \ge T(k-1)e^{-2}$. 
\end{lemma}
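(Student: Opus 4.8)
The plan is to prove this as a two-point (change-of-measure) lower bound, specializing the minimax construction underlying the cited Theorem 24.4. I would place both environments on a common action set of $k$ arms so that a single learner produces comparable trajectories on both, and then exhibit a linear instance $\nu_1$ and a finite-armed instance $\nu_2$ that are statistically hard to tell apart yet have different optimal actions, so that small regret on one forces large regret on the other. Throughout let $N_i$ denote the (random) number of pulls of arm $i$ up to time $T$, and use Gaussian rewards $\mathcal N(\mu,1)$, which are $1$-subgaussian.

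For the construction, let $\nu_1$ be the $1$-dimensional linear bandit with scalar features $x_1=1$ and $x_i=0$ for $i\ge 2$ and parameter $\theta^*=\Delta$, so arm $1$ is optimal with mean $\Delta$ while every other arm has mean $0$ and gap $\Delta$; this is an ``easy'' instance since the features reveal the optimum almost for free. After running the given algorithm on $\nu_1$, I would use the regret identity $R_{\nu_1}(T)=\Delta\sum_{i\ge 2}\EE{N_i}$ together with pigeonhole over the $k-1$ suboptimal arms to select an arm $J$ with $\EE{N_J}\le R_{\nu_1}(T)/(\Delta(k-1))$. Define the $k$-armed instance $\nu_2$ to agree with $\nu_1$ on every arm except $J$, whose mean is raised to $2\Delta$; then $J$ becomes optimal in $\nu_2$ and not pulling it costs at least $\Delta$ per round.

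Since $\nu_1$ and $\nu_2$ differ only at arm $J$, the divergence between the induced trajectory laws is $\mathrm{KL}=\EE{N_J}\cdot\mathrm{KL}(\mathcal N(0,1),\mathcal N(2\Delta,1))=2\Delta^2\,\EE{N_J}\le 2\Delta R_{\nu_1}(T)/(k-1)$, the crucial point being that the indistinguishability is itself controlled by $R_{\nu_1}(T)$. Applying the Bretagnolle-Huber inequality to the event $\{N_J> T/2\}$ and using Markov's inequality ($\Prob{N_J>T/2}\le 2\EE{N_J}/T$ under $\nu_1$) lower bounds $R_{\nu_2}(T)\ge \tfrac{\Delta T}{4}\exp(-\mathrm{KL})-R_{\nu_1}(T)/(k-1)$. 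I would then choose the gap $\Delta$ so that $\mathrm{KL}$ hits the balancing value driving the exponent to the constant that yields $e^{-2}$, multiply through by $R_{\nu_1}(T)$, and absorb the lower-order term to obtain $R_{\nu_1}(T)\,R_{\nu_2}(T)\ge T(k-1)e^{-2}$.

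The main obstacle is extracting a product rather than a sum bound: Bretagnolle-Huber natively yields $R_{\nu_1}(T)+R_{\nu_2}(T)\gtrsim\sqrt{(k-1)T}$, and the product form only emerges because the $\mathrm{KL}$ term is proportional to $R_{\nu_1}(T)$, which lets me pick $\Delta$ as a function of the observed $R_{\nu_1}(T)$ (legitimate here, since $\nu_1$ and $\nu_2$ are allowed to depend on the algorithm). The delicate steps are therefore verifying that the balancing $\Delta$ is an admissible gap (small enough for the reward and subgaussian constraints, with a short case split when $R_{\nu_1}(T)$ is atypically large), controlling the subtracted $R_{\nu_1}(T)/(k-1)$ term, and checking that $\nu_1$ is a bona fide $1$-dimensional linear instance (choosing the zero or near-zero features so the arms remain distinguishable actions). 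Matching the exact constant $e^{-2}$ then amounts to carrying the factor $\tfrac12$ from Bretagnolle-Huber and the $T/2$ threshold through tightly.
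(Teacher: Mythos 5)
The paper does not actually prove this lemma --- it is imported wholesale from Theorem 24.4 of Lattimore and Szepesv\'ari --- so your attempt can only be measured against the standard argument behind that theorem, and in outline you have reconstructed exactly that argument: put both instances on the same $k$ arms, make $\nu_1$ realisable in one dimension with a single arm of mean $\Delta$ and the rest at zero, use the regret identity and pigeonhole to find an arm $J$ with $\mathbb{E}_1[N_J]\le R_{\nu_1}(T)/(\Delta(k-1))$, boost that arm to $2\Delta$ in $\nu_2$, and observe that the divergence decomposition makes the KL proportional to $R_{\nu_1}(T)$, which is what turns a sum-type two-point bound into a product bound. All of that is correct, including the intermediate inequality $R_{\nu_2}(T)\ge\frac{\Delta T}{4}e^{-\mathrm{KL}}-R_{\nu_1}(T)/(k-1)$.

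The gaps are in the last step, and they are not cosmetic. (i) Your $\nu_1$ is fixed in advance with arm $1$ optimal, so a policy that deterministically plays arm $1$ has $R_{\nu_1}(T)=0$ and the product is zero for every $\nu_2$; the sign of $\theta^*$ (equivalently, which arm carries the nonzero feature) must itself be chosen adversarially against the algorithm. (ii) Setting $\Delta=(k-1)/R_{\nu_1}(T)$ is circular, because $R_{\nu_1}(T)$ is the regret in $\nu_1(\Delta)$ and hence a function of $\Delta$; you need a fixed-point or intermediate-value argument (or a supremum over $\Delta$), and the degenerate branches of that argument are exactly the situations in (i). (iii) After multiplying through by $R_{\nu_1}(T)$, the subtracted term becomes $R_{\nu_1}(T)^2/(k-1)$, which is not lower order: it swamps the main term as soon as $R_{\nu_1}(T)\gtrsim(k-1)\sqrt{T}$, and in that regime you must instead fall back on a separate $k$-armed minimax lower bound to produce $\nu_2$; this ``short case split'' is where most of the remaining work lives. (iv) The constant cannot be rescued by careful bookkeeping: Bretagnolle--Huber contributes a factor $1/2$ and the $T/2$ Markov threshold another $1/2$, so this chain yields at best $T(k-1)/(4e^2)$ (and re-optimising the exponent in $\Delta$ only degrades it further, to $T(k-1)/(8e)$); reaching $e^{-2}$ exactly requires a sharper change-of-measure inequality than Bretagnolle--Huber plus Markov. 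None of this invalidates the strategy --- it is the right one --- but as written the final product inequality does not follow from the steps you give.
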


\paragraph*{Experiment (Figure~~\ref{figcorral:ucb_exp}).} %
Let $d = 2$. Consider a contextual bandit problem with $k = 50$ arms, where each arm $j$ has an associated vector $a_j \in \Real^{d}$ sampled uniformly at random from $[0,1]^d$. We consider two cases: (1) For a $\theta \in \Real^{d}$ sampled uniformly at random from $[0,1]^d$, reward of arm $j$ at time $t$ is $a_j^\top \theta + \eta_t$, where $\eta_t \sim N(0,1)$, and (2) There are $k$ parameters $\mu_j$ for $j\in [k]$ all sampled uniformly at random from $[0,10]$, so that the reward of arm $j$ at time $t$ is sampled from $N(\mu_j, 1)$. We use CORRAL with learning rate $\eta = \frac{2}{\sqrt{T}d}$ and UCB and LinUCB as base algorithm. In case (1) LinUCB performs better while in case (2) UCB performs better. Each experiment is repeated 500 times.
  
\begin{figure}[h]
\begin{minipage}{0.5\textwidth}
\includegraphics[width=\linewidth]{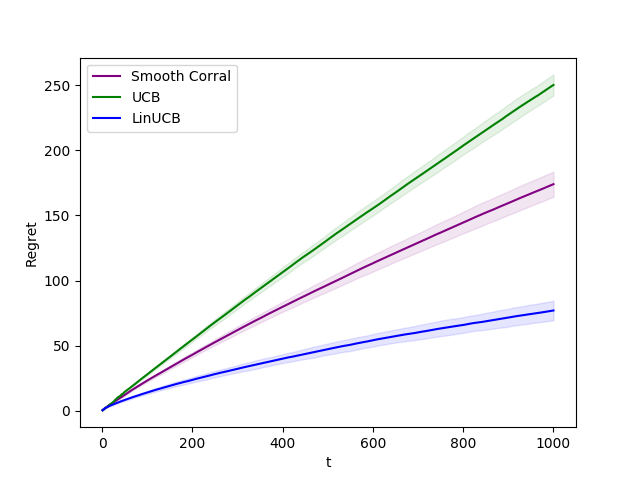}
\begin{center}
Arms with linear rewards.
\end{center}
\end{minipage}
\begin{minipage}{0.5\textwidth}
\includegraphics[width=\linewidth]{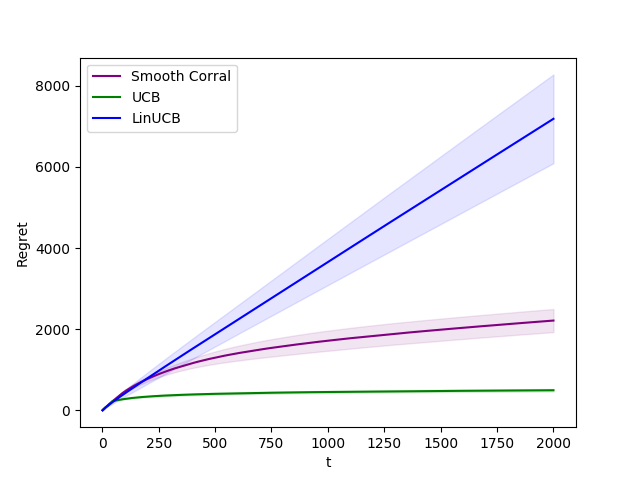}
\begin{center}
Arms with non-linear rewards.
\end{center}
\end{minipage}
\begin{center}
\caption[CORRAL with UCB and LinUCB bases.]{CORRAL with UCB and LinUCB bases. Shaded regions denote the standard deviations.}
    \label{figcorral:ucb_exp}
    \end{center}
\end{figure}

\subsection*{Contextual Bandits with Unknown Dimension}

 We consider model selection in the nested contextual linear bandit problem studied by \citep{foster2019model}. In this problem the context space $\mathcal{A} \subset \mathbb{R}^{D}$. Each action is a $D-$dimensional vector and each context $\mathcal{A}_t$ is a subset of $\mathbb{R}^D$. The unknown parameter vector $\theta_* \in \mathbb{R}^D$ but only its first $d_*$ coordinates are nonzero. Here, $d_*$ is unknown and possibly much smaller than $D$. We assume access to a family of LinUCB algorithms $\{ \mathcal{B}_i \}_{i=1}^M$ with increasing dimensionality $d_i$. Algorithm $i$ is designed to 'believe' the unknown parameter vector $\theta_*$ has only nonzero entries in the first $d_i$ entries.  In \citep{foster2019model} the authors consider the special case when $|\mathcal{A}_t| =  k < \infty$ for all $t$. In order to obtain their model selection guarantees they require a lower bound on the average eigenvalues of the covariance matrices of all actions. In contrast, we do not require any such structural assumptions on the context. We provide the first sublinear regret for this problem when the action set is infinite. Further, we have no eigenvalue assumptions and our regret does not scale with the number of actions $k$.

We use LinUCB with each value of $d \in [1, 2, 2^2,..., 2^{\log(D)}]$  as a base algorithm for CORRAL and EXP3.P.   We also consider the case when both the optimal dimension $d_*$ and the misspecification $\epsilon_*$ are unknown: we use $M=\log(E) \cdot \log(D)$ modified LinUCB bases (see the discussion on Misspecified Contextual Linear Bandits above) for each value of $(\epsilon_*, d_*)$ in the grid $[1, 2, 2^2,..., 2^{\log(E)}] \times [1, 2, 2^2,..., 2^{\log(D)}]$.

From Lemma~\ref{lem::Lin_UCB} and Lemma~\ref{lem::Lin_UCB2}, for linear contextual bandit, LinUCB is $(U, \delta, T)$-bounded with $U(t, \delta) = O(d\sqrt{t}\log(1/\delta))$ for infinite action sets $U-$bounded with $U(t, \delta) = O(\sqrt{dt}\log^3(kT\log(T)/\delta))$ for finite action sets. Choose $\delta = 1/T$ and ignore the log factor, $U(t, \delta) = \widetilde{O}(d\sqrt{t})$ for infinite action sets and $U(t, \delta) = \widetilde{O}(\sqrt{dt})$ for finite action sets. Then $U(t) = c(\delta)t^{\alpha}$ with $\alpha = 1/2$ and $c(\delta) = \widetilde{O}(d)$ for infinite action sets, and $c(\delta) = \widetilde{O}(\sqrt{d})$ for finite action sets. 

Now consider the misspecified linear contextual bandit problem with unknown $d_*$ and $\epsilon_*$. We use the smoothed LinUCB bases \citep{lattimore2019learning, zanette2020learning}. Using the calculation in the proof of Theorem~\ref{thm:corral-ucb} in Section~\ref{sec:applications}, using CORRAL with a smooth LinUCB base with parameters $(d, \epsilon)$ in the grids results in $\widetilde{O}\left( \frac{1}{\eta} +  Td^2\eta + \epsilon \sqrt{d} T\right)$ regret. Since $d$ is unknown, choosing $\eta = 1/\sqrt{T}$ yields the regret $\widetilde{O}\left(  \sqrt{T}d_*^2 + \epsilon \sqrt{d} T\right)$. Using EXP3.P with a smooth LinUCB base with parameters $(d, \epsilon)$ in the grids results in: 
\begin{align*}
 R(T) &= \widetilde{O}\left(\sqrt{MT} + MTp +\frac{1}{p} +  \frac{1}{p} U_i(Tp, \delta) \right) \;. \\
 &= \widetilde{O}\left(\sqrt{MT} + MTp +\frac{1}{p} +  \frac{1}{p} \left(d\sqrt{Tp} + \epsilon \sqrt{d} Tp \right) \right) \;. \\
  &= \widetilde{O}\left(\sqrt{MT} + MTp + \frac{d\sqrt{T}}{p} + \epsilon \sqrt{d} T \right)  \;. 
 \end{align*}
 Since $d_*$ is unknown, choosing $p= T^{-1/3}$ yields a $\widetilde{O}( T^{\frac{2}{3}}{d_*} + \epsilon_*\sqrt{d} T)$ regret bound. We summarize our results in the following table: 
 
 \begin{small}
\begin{center}
\begin{tabular}{ |c|c|c|c| } 
 \hline
 & \multicolumn{2}{c|}{Linear contextual bandit} & \makecell{Misspecified linear \\ contextual bandit} \\ 
 \cline{2-4} 
 & \multicolumn{2}{c|}{Unknown $d_*$} &  \multirow{2}{*}{Unknown $d_*$ and $\epsilon_*$}  \\ 
 \cline{2-3} 
  & Finite action sets & Infinite action sets  &  \\ \hline
 \citep{foster2019model} & \makecell{$\widetilde{O}(T^{2/3}k^{1/3}d_*^{1/3})$ or\\ $\widetilde{O}(k^{1/4}T^{3/4}  + \sqrt{k T d_*  })$}    & N/A & N/A \\ \hline
 EXP3.P & $\widetilde{O}( d_*^{\frac{1}{2}} T^{\frac{2}{3}})$ &  $\widetilde{O}(   d_* T^{\frac{2}{3}}  )$ & $\widetilde{O}( T^{\frac{2}{3}}{d_*} + \epsilon_*\sqrt{d} T)$ \\ \hline
 CORRAL & $\widetilde{O}\left( d_*\sqrt{T } \right)$  &  $\widetilde{O}\left( d_*^2\sqrt{T } \right)$ & $\widetilde{O}\left (\sqrt{T} d_*^2 + \epsilon_*\sqrt{d} T\right)$ \\ 
 \hline
\end{tabular}
\end{center}
\end{small}

\subsection*{ Non-parametric Contextual Bandit.}  

We study model selection in the setting of non-parametric contextual bandits.\citep{AAAI1816944} consider non-parametric stochastic contextual bandits. At time $t$ and given a context $x_t \in \Real^D$, the learner selects arm $a_t \in [k]$ and observes the reward $f(a_t, x_t) + \xi_t$, where $\xi_t$ is a 1-sub-Gaussian random variable and for all $a \in [k]$, the reward function $f(a, \cdot)$ is $L-$lipschitz in the context $x \in \mathbb{R}^D$. It is assumed that the contexts arrive in an IID fashion. \citep{AAAI1816944} obtain a $\widetilde{O}\left(  T^{\frac{1+d}{2+d}} \right)$ regret for this problem. Similar to~\citep{foster2019model}, we assume that only the first $d_*$ context features are relevant for an unknown $d_*< D$. It is important to find $d_*$ because $ T^{\frac{1+d_*}{2+d_*}} \ll T^{\frac{1+D}{2+D}}$. Stochastic CORRAL can successfully adapt to this unknown quantity: we can initialize a smoothed copy of Algorithm $2$ of \citep{AAAI1816944} for each value of $d$ in the grid $[b^0, b^1, b^2,..., b^{\log_b(D)}]$ for some $b > 1$ and  perform model selection with CORRAL and EXP3.P with these base algorithms. %
\begin{small}
\begin{center}
\begin{tabular}{ |c|c|c| } 
 \hline
 & EXP3.P & CORRAL\\ 
\hline
   \makecell{Nonparametric contextual \\bandit with unknown $d_*$}   & $\widetilde{O}\left( T^{\frac{1+bd_*}{2+bd_*} + \frac{1}{3(2+bd_*)}} \right)$ & $\widetilde{O} \left ( T^{\frac{1+2bd_*}{2+2bd_*}}\right)$\\ 
 \hline
\end{tabular}
\end{center}
\end{small}

\subsection*{Tuning the Exploration Rate of $\epsilon$-greedy}

We study the problem of selecting for the optimal scaling for the exploration probability in the $\epsilon$-greedy algorithm. Recall that for a given positive constant $c$, the $\epsilon$-greedy algorithm pulls the arm with the largest empirical average reward with probability $1-c/t$, and otherwise pulls an arm uniformly at random. Let $\epsilon_t=c/t$. It can be shown that the optimal value for $\epsilon_t$ is $\min\{ 1, \frac{5 k }{\Delta_*^2 t}\}$ where $\Delta_*$ is the smallest gap between the optimal arm and the sub-optimal arms~\citep{LS-2020}. With this exploration rate, the regret scales as $\widetilde{ \mathcal{O}}(\sqrt{T})$ for $k=2$. %
 We would like to find the optimal value of $c$ without the knowledge of $\Delta_*$. In this discussion we show it is possible to obtain such result by applying CORRAL to a set of $\epsilon$-greedy base algorithms each instantiated with a $c$ in $[1, 2, 2^2,..., 2^{\log(kT)}]$.

\begin{theorem}
\label{thm:corral-eg}
The regret of CORRAL using smoothed $\epsilon$-greedy base algorithms defined on the grid is bounded by %
$\widetilde{O} (T^{1/2})$ when $k = 2$. 
\end{theorem}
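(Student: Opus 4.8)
The plan is to reduce the statement to the master regret bound of Theorem~\ref{thm:master_informal}, so that the only genuinely new work is to exhibit, for every instance, a base on the grid whose (high-probability, concave) regret has the form $c(\delta)\,T^{\alpha}$ with $\alpha$ \emph{known} and $c(\delta)$ independent of $T$. Recall from \citep{bandit_book} that $\epsilon$-greedy run in isolation with the tuned rate $\epsilon_t=\min\{1,\,c^*/t\}$, $c^*=5k/\Delta_*^2$, has worst-case regret $\tilde O(T^{1/2})$ for $k=2$ and $\tilde O(T^{2/3})$ for $k>2$, where the leading constant is polylogarithmic in $T$ and depends only on $k$ and the confidence level $\delta$, not on $T$. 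Hence $\alpha=1/2$ (resp.\ $2/3$) is determined solely by $k$ and is therefore known to the learner, and the tuned base already fits the template required by Theorem~\ref{thm:master_informal} with $c(\delta)=\tilde O(1)$.

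First I would show that the grid contains a near-optimal exploration constant. Since $\mathcal G=\{1,2,2^2,\dots,2^{\log(kT)}\}$ is geometric with ratio $2$ and spans $[1,kT]$, whenever $c^*\in[1,kT]$ there is a grid point $\hat c$ with $c^*\le \hat c\le 2c^*$. I would then argue that the base instantiated with $\hat c$ retains the optimal rate: inflating the exploration constant by at most a factor $2$ increases the number of exploration rounds, and thus the exploration part of the regret, by at most a constant factor, while only sharpening the concentration of the empirical means, so the exploitation part does not grow. Consequently the grid base $\mathcal B_{\hat c}$ satisfies a high-probability bound $U_{\hat c}(T,\delta)=\tilde O(T^{\alpha})$ and is within a constant factor of the optimal base. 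The boundary case $c^*>kT$, i.e.\ $\Delta_*<\sqrt{5/T}$, must be handled separately: there the minimal gap lies below the statistical resolution $1/\sqrt T$, and I would bound the regret of the best grid base directly by $\tilde O(\sqrt T)$, which is within the claimed rate.

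With a grid base enjoying a concave, high-probability regret bound of the form $\tilde O(T^{\alpha})$, I would apply the smoothing transformation of Section~\ref{sec::base_smoothing} (Algorithm~\ref{Alg:smoothing}) to every base---legitimate because $T^{1/2}$ and $T^{2/3}$ are concave---so that term $\mathrm{II}$ of the decomposition \eqref{eq::regret_decomp} is controlled by Lemma~\ref{lemma::bounding_termII_informal}, and then invoke the ``known $\alpha$ and $c(\delta)$'' column of Theorem~\ref{thm:master_informal} for CORRAL with $\mathcal B_{\hat c}$ in the role of the optimal base. Setting the learning rate $\eta$ according to the known exponent $\alpha$ and constant $c(\delta)=\tilde O(1)$ yields master regret $\tilde O\!\left(T^{\alpha}c(\delta)\right)=\tilde O(T^{\alpha})$, which is $\tilde O(T^{1/2})$ when $k=2$ and $\tilde O(T^{2/3})$ when $k>2$, as claimed.

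The main obstacle is the second step: rigorously establishing that a constant-factor perturbation of the exploration constant preserves the optimal rate, which requires re-opening the $\epsilon$-greedy analysis to track its explicit dependence on $c$, together with the small-gap regime where $c^*$ escapes the grid. A secondary technical point is that the master machinery needs an \emph{instance-independent}, high-probability, concave regret bound $U_*(T,\delta)$ for the tuned base; so the gap-dependent $\epsilon$-greedy guarantee must first be recast into the worst-case form $c(\delta)\,T^{\alpha}$ before Theorem~\ref{thm:master_informal} can be applied.
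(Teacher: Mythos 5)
Your overall route is the same as the paper's: recast the gap-dependent $\epsilon$-greedy guarantee into a worst-case, high-probability bound of the form $U(t,\delta)=c(\delta)\,t^{\alpha}$ with $\alpha=1/2$ for $k=2$ and $\alpha=2/3$ for $k>2$ (this is exactly what Lemma~\ref{lem::epsilon_greedy} supplies), observe that the geometric grid contains a point within a factor of $2$ of the tuned constant $c^*$ and that the $\epsilon$-greedy analysis is robust to a constant-factor perturbation of $c$, and then invoke Theorem~\ref{thm:master} in the known-$\alpha$ regime. Up to the boundary case, this matches the paper's proof step for step.

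The one step that would fail is your treatment of the regime where $c^*$ escapes the grid. You assert that when $\Delta_*<\sqrt{5/T}$ the minimal gap lies below the statistical resolution and therefore the best grid base has regret $\tilde{O}(\sqrt{T})$. That inference is false: $\Delta_*$ is only the \emph{smallest} nonzero gap, and an instance can simultaneously have one near-degenerate arm and other arms with constant gaps (e.g.\ three arms with means $1$, $1-1/T$, $0$), in which case a base that is forced to over-explore because the required $c$ is enormous suffers regret linear in $T$, not $\tilde{O}(\sqrt{T})$. The paper's fix is the one you actually need: arms whose gap is below $1/T$ contribute at most $O(1)$ total regret, so one may redefine $\Delta_*$ to be the smallest gap exceeding $1/T$ and choose $\delta=1/T^5$ so that the condition $\delta\le\Delta_*^2/T^3$ of Lemma~\ref{lem::epsilon_greedy} still holds; since every $c\ge kT$ induces the identical always-explore behaviour as $c=kT$ (because $\epsilon_t=\min(c/t,1)=1$ throughout the horizon), the truncated optimum never leaves the interval $[1,kT]$ covered by the grid. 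With that replacement your argument goes through; as written, the small-gap case is not covered.
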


\begin{proof}
 From Lemma~\ref{lem::epsilon_greedy}, we lower bound the smallest gap by $1/T$ (because the gaps smaller than $1/T$ will cause constant regret in $T$ time steps) and choose $\delta = 1/T^5$. From Theorem~\ref{thm:meta-algorithm}, the regret is $\widetilde{O}( T^{2/3})$ when $k> 2$ and $\widetilde{O}( T^{1/2})$ when $k=2$ with the base running alone. 

Next we show that the best value of $c$ in the exponential grid gives a regret that is within a constant factor of the regret above where we known the smallest non-zero gap $\Delta_*$.  An exploration rates can be at most $kT$. Since $\frac{5K}{\Delta_*^2} > 1$, we need to search only in the interval $[1, KT]$. Let $c_1$ be the element in the exponential grid such that $c_1 \leq c^* \leq 2c_1$. Then $2 c_1=\gamma c^*$ where $\gamma<2$ is a constant, and therefore using $2 c_1=\gamma c^*$ will give a regret up to a constant factor of the optimal regret.
\end{proof}

\begin{figure}
\begin{center}
\includegraphics[width = 0.5\textwidth]{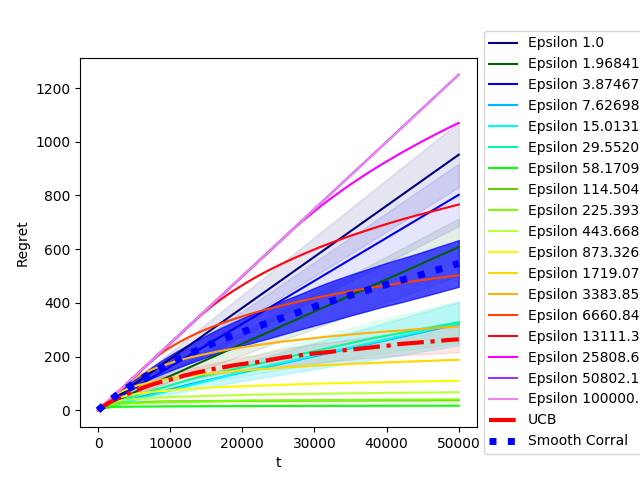} 
\end{center}
\vspace{-1cm}
\caption[CORRAL with $\epsilon$-Greedy bases.]{CORRAL with $\epsilon$-Greedy bases with different exploration rates. \protect\footnotemark }
    \label{figcorral:epsilon_exp}
\end{figure}
\footnotetext{The shaded areas around UCB and CORRAL are the std. The shaded areas around the $\epsilon$-greedy bases are $0.1$ of std. For small $\epsilon$, $\epsilon$-greedy has a very high variance because it either commits to the optimal arm or  the sub-optimal arm at the beginning, so plotting the whole $std$ would make the plot unreadable.}
\paragraph*{Experiment (Figure~\ref{figcorral:epsilon_exp}).} Let there be two Bernoulli arms with means $0.5$ and $0.45$. We use 18 $\epsilon$-greedy base algorithms differing in their choice of $c$ in the exploration rate $\epsilon_t = c/t.$ We take $T=50,000$, $\eta = 20/\sqrt{T}$ and $\epsilon$'s to lie on a geometric grid in $[1,2T].$ Each experiments is repeated $50$ times. %

\subsection*{Reinforcement Learning}

We can instantiate Stochastic CORRAL model selection regret guarantees to the episodic linear MDP setting of \citep{jin2020provably}, again with nested feature classes of doubling dimension just as in the case of the Contextual Bandits with Unknown Dimension. Let's formally define a Linear MDP,
\begin{definition}[Linear MDP (  Assumption A in \citep{jin2020provably})] An episodic MDP (Denoted by the tuple $(\mathcal{S}, A, H, \mathbb{P}, r)$) is a linear MDP with a feature map $\Phi : \mathcal{S} \times A \rightarrow \mathbb{R}^d$, if for any $h \in [H]$ there exist $d$ unknown (signed) measures $\boldsymbol{\mu}_h = (\mu_h^{(1)}, \cdots, \mu_h^{(d)})$ over $\mathcal{S}$ and an unknown vector $\boldsymbol{\theta}_h \in \mathbb{R}^d$, such that for any $(s,a) \in \mathcal{S} \times A$, we have,
\begin{equation*}
    \mathbb{P}_h( \cdot | s, a) = \langle \Phi(s,a) , \boldsymbol{\mu}_h(\cdot) \rangle, \qquad r_h(s,a) = \langle \Phi(s,a), \boldsymbol{\theta}_h\rangle.
\end{equation*}

\end{definition}

The value function for a linear MDP also satisfies a linear parametrization,
\begin{proposition}[Proposition 2.3 from~\citep{jin2020provably}] For a linear MDP, and for any policy $\pi$ there exist $d-$dimensional weights $\{ \mathbf{w}_h^\pi \}_{h\in [H]}$ such that for any $(s,a,h) \in \mathcal{S} \times A \times [H]$ we have that the value function of policy $\pi$ satisfies $Q_h^{\pi}(s,a) = \langle \Phi(s,a), \mathbf{w}_h^{\pi}\rangle$.
\end{proposition}

For the purpose of studying model selection in the setting of linear MDPs we assume access to $D-$dimensional feature maps $\Phi : \mathcal{S} \times A \rightarrow \mathbb{R}^D$. For all policies $\pi$ the unknown parameters $\{ \mathbf{w}_h^\pi \}_{h\in [H]}$ are all assumed to have unknown coordinates only in their first $d_*$ dimensions. We assume access to a family of LSVI-UCB (Algorithm 1 of \citep{jin2020provably}) algorithms $\{ \mathcal{B}_i \}_{i=1}^M$ with increasing dimensionality $d_i$. Algorithm $i$ is designed to `believe' the unknown parameter vectors $\{ \mathbf{w}_h^\pi \}_{h\in [H]}$ has only nonzero entries in the first $d_i$ entries for all policies $\pi$.

\begin{figure}
\begin{center}
\includegraphics[width=0.5\textwidth]{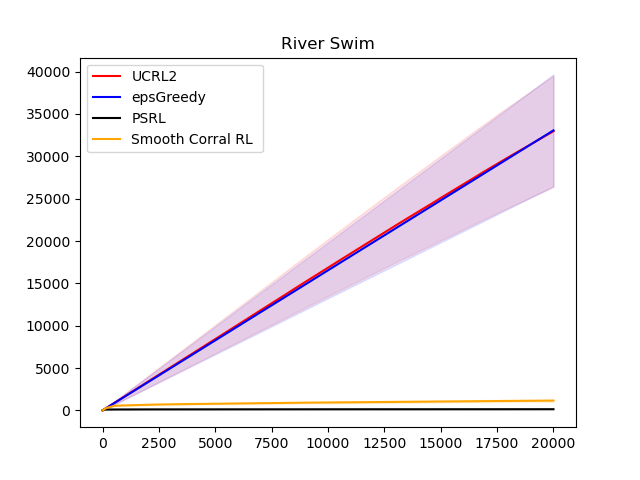} 
\end{center}
\vspace{-1cm}
\caption[$\epsilon$-Greedy vs UCRL2 vs PSRL in the River Swim environment. ]{$\epsilon$-Greedy vs UCRL2 vs PSRL in the River Swim environment \citep{strehl2008analysis}. }%
    \label{figcorral:rl_exp}
\end{figure}

\begin{theorem}
\label{thm:RL}
Let $\mathcal{M}= (\mathcal{S}, A, H, \mathbb{P}, r)$ be a linear MDP parametrized by a feature map $\{ \Phi: \mathcal{S} \times A\rightarrow \mathbb{R}^D\}$.  Let $\{\Phi_i(s,a)\}_{i=1}^M$ be the family of nested feature maps such that $\Phi_i(s,a)$ corresponds to the top $d_i$ entries of $\Phi(s,a)$. Assume that for all policies $\pi$ the unknown parameters $\{ \mathbf{w}_h^{\pi}\}_{h\in[H]}$ have nonzero coordinates only in their first $d_*$ dimensions and that there exists an index $i_*$ such that $d_* \leq d_i \leq 2d_*$. Selecting among different smoothed LSVI-UCB base algorithms corresponding to the feature maps $\{ \Phi_i \}_{i=1}^M$ using Stochastic CORRAL with a CORRAL meta-algorithm and $\eta = \frac{ M^{1/2}}{T^{1/2}d^{3/2}H^{3/2}} $  satisfies a regret guarantee:
    $R(T) \leq \widetilde{\mathcal{O}}\left(\sqrt{M  d^3 H^3 T  } \right)$.
\end{theorem}

\begin{proof}[Proof of Theorem~\ref{thm:RL}] When well specified the LSVI-UCB algorithm~\citep{jin2020provably} satisfies the high probability bound $\widetilde{\mathcal{O}}(\sqrt{d^3H^3T})$ where $H$ is the length of each episode. The result then follows from Theorem~\ref{thm:meta-algorithm} by setting the CORRAL meta-algorithm learning rate as $\eta = \frac{ M^{1/2}}{T^{1/2}d^{3/2}H^{3/2}} $. %
\end{proof}

We also observe that in practice, smoothing RL algorithms such as UCRL and PSRL and using a CORRAL meta-algorithm on top of them can lead to improved performance. In Figure~\ref{figcorral:rl_exp}, we present results for the model selection problem among distinct RL algorithms in the River Swim environment~\citep{strehl2008analysis}. We use three different bases, $\epsilon-$greedy $Q-$learning with $\epsilon=.1$, Posterior Sampling Reinforcement Learning (PSRL), as described in \citep{osband2017posterior} and UCRL2 as described in \citep{jaksch2010near}. The implementation of these algorithms and the environment is taken from TabulaRL (\url{https://github.com/iosband/TabulaRL}), a popular benchmark suite for tabular reinforcement learning problems. Smooth CORRAL uses a CORRAL meta-algorithm with a learning rate $\eta = \frac{15}{\sqrt{T}}$, all base algorithms are smoothed using Algorithm~\ref{Alg:smoothing}. The curves for UCRL2, PSRL and $\epsilon-$greedy are all of their un-smoothed versions. Each experiment was repeated 10 times and we have reported the mean cumulative regret and shaded a region around them corresponding to $\pm .3$ the standard deviation across these 10 runs.

\subsection*{Generalized Linear Bandits with Unknown Link Function}
\citep{li2017provably} study the generalized linear bandit model for the stochastic $k$-armed contextual bandit problem. In round $t$ and given context $x_t \in \mathbb{R}^{d \times k}$, the learner chooses arm $i_t$ and observes reward  $r_t  = \mu(x_{t,i_t}^\top \theta^*) + \xi_t$ where $\theta^* \in\mathbb{R}^d$ is an unknown parameter vector, $\xi_t$ is a conditionally zero-mean random variable and $\mu: \mathbb{R} \rightarrow \mathbb{R}$ is called the link function. \citep{li2017provably} obtain the high probability regret bound $\widetilde{O}(\sqrt{dT})$ where the link function is known. Suppose we have a set of link functions $\mathbb{L}$ that contains the true link function $\mu$. Since the target regret $\widetilde{O}(\sqrt{dT})$ is known, we can run CORRAL with the algorithm in \citep{li2017provably} with each link function in the set as a base algorithm. From Theorem~\ref{thm:meta-algorithm}, CORRAL will achieve regret $\widetilde{O}(\sqrt{|\mathbb{L}|dT})$. 
\subsection*{Bandits with Heavy Tail}
\citep{NIPS2018_8062} study the linear stochastic bandit problem with heavy tail. If the reward distribution has finite moment of order $1+\epsilon_*$, \citep{NIPS2018_8062} obtain the high probability regret bound $\widetilde{O}\left( T^{\frac{1}{1+\epsilon_*}}\right)$. We consider the problem when $\epsilon_* \in (0,1]$ is unknown with a known lower bound $L$ where $L$ is a conservative estimate and $\epsilon_*$ could be much larger than $L$. To the best of our knowledge, we provide the first result when $\epsilon_*$ is unknown. We use the algorithms in \citep{NIPS2018_8062} with value of $\epsilon_*$ in the grid $[ b^{\log_b(L)}, ..., b^1  , b^0]$ for some $0<b<1$ as base algorithms with $\eta = T^{-1/2}$ for CORRAL. A direct application of Theorem~\ref{thm:meta-algorithm} yields regret $\widetilde{O} \left( T^{1-0.5b\epsilon_*}\right)$. When $\epsilon_*=1$ (as in the case of finite variance), $\widetilde{O} \left( T^{1-0.5b\epsilon_*}\right)$ is close to $ \widetilde{O} \left( T^{0.5}\right)$ when $b$ is close to  $1$. 

\section{Conclusion}

In this work we introduced the Stochastic CORRAL algorithm that successfully combines an EXP3 or CORRAL adversarial meta-algorithm with a wide variety of stochastic base algorithms for contextual bandits and reinforcement learning. We improve the results of the original CORRAL approach~\citep{DBLP:conf/colt/AgarwalLNS17} that requires the base algorithms to satisfy a stability condition not often fulfilled by even the simplest stochastic bandit algorithms such as UCB and OFUL. Our approach can make use of the input base algorithms in a fully blackbox fashion without the need of reproving regret bounds for the component base algorithms. This versatility has allowed us to crack several open problems ranging from algorithms that adapt to the misspecification level in linear contextual bandits to the effective dimension in non-parametric problems. 

\printbibliography

\newpage

\appendix
\tableofcontents
\newpage

\section{Omitted proofs of Section~\ref{sec:the_stochastic_corral_algorithm} }\label{app:ommited_proofs_stochastic_corral}

\subsection*{Bounding term  $\mathrm{I}$}

When the base algorithms are not chosen, they repeat their step 2's policy to ensure that the conditional instantaneous regret is decreasing. To ensure the decreasing conditional instantaneous regret serves its purpose, when the base algorithms are chosen by the meta-algorithm, we only send step 2's rewards to the meta-algorithm as feedback signals. This is to ensure that the sequence of rewards the meta-algorithm is competing against satisfies the decreasing instantaneous regret condition. However, since the bases play and incur regrets from both step 1 and step 2 when they are chosen, we must account for the difference between the reward of step 1 and step 2 (that the bases incur when they play the arms), and 2 times the reward of step 2 (what the bases send to the meta-algorithm as feedback signals). 

Since we assume all base algorithms to be smoothed and satisfy a two step feedback structure, we also denote by $\pi_t^{(j)}$ as the policy used by the meta-algorithm during round $t$, step $j$. Term I, the regret of the meta-algorithm with respect to base $i_\star$ can be written as:
\begin{align}
    \EE{\mathrm{I}} = \EE{\sum_{t=1}^T \sum_{j=1}^2 f(\mathcal{A}_t^{(j)}, \pi_{t,i_\star}^{(j)}) - f(\mathcal{A}_t^{(j)}, \pi_t^{(j)})}
\end{align}
The reader should keep in mind the meta-algorithm is updated only using the reward of Step 2 of base algorithms even though the bases play both step 1 and 2. Let $\mathbb{T}_i$ be the random subset of rounds when $\mathcal{M}$ choose base $\widetilde{\mathcal{B}}_i$, ($i_t = i$) for all $i \in [M]$. Adding and subtracting terms $\{ f(\mathcal{A}_t^{(1)}, \pi_t^{(2)} ) \}_{t=1}^T$ we see that:

\begin{align*}
    \mathrm{I} &= \sum_{t=1}^T \sum_{j=1}^2 f(\mathcal{A}_t^{(j)}, \pi_{t,i_\star}^{(j)}) - f(\mathcal{A}_t^{(j)}, \pi_t^{(j)}) \\
    &= \underbrace{\sum_{t\in \mathbb{T}_{i_\star}}^T \sum_{j=1}^2 f(\mathcal{A}_t^{(j)}, \pi_{t,i_\star}^{(j)}) - f(\mathcal{A}_t^{(j)}, \pi_t^{(j)})}_{\mathrm{I}_0} + \underbrace{\sum_{t\in \mathbb{T}_{i_\star}^c} \sum_{j=1}^2 f(\mathcal{A}_t^{(j)}, \pi_{t,i_\star}^{(j)}) - f(\mathcal{A}_t^{(j)}, \pi_t^{(j)})}_{\mathrm{I}_1}\\ 
    &\stackrel{(i)}{=} \underbrace{\sum_{t\in \mathbb{T}_{i_\star}}^T \sum_{j=1}^2 f(\mathcal{A}_t^{(j)}, \pi_{t,i_\star}^{(2)}) - f(\mathcal{A}_t^{(j)}, \pi_t^{(2)})}_{\mathrm{I}_0'} + \underbrace{\sum_{t\in \mathbb{T}_{i_\star}^c} \sum_{j=1}^2 f(\mathcal{A}_t^{(j)}, \pi_{t,i_\star}^{(2)}) - f(\mathcal{A}_t^{(j)}, \pi_t^{(j)})}_{\mathrm{I}_1'}\\
    &\stackrel{(ii)}{=}  \underbrace{\sum_{t=1}^T \sum_{j=1}^2 f(\mathcal{A}_t^{(j)}, \pi_{t,i_\star}^{(2)}) - f(\mathcal{A}_t^{(j)}, \pi_t^{(2)})}_{\mathrm{I}_A} + \underbrace{\sum_{t\in \mathbb{T}^c_{i_\star}}  f(\mathcal{A}_t^{(1)}, \pi_{t}^{(2)}) - f(\mathcal{A}_t^{(1)}, \pi_t^{(1)})}_{\mathrm{I}_B}
    \end{align*}
    
Equality $(i)$ holds because term $\mathrm{I}_0$ equals zero (recall for all $t \in \mathbb{T}_{i_\star}$ algorithm $i_\star$ is chosen by the meta-algorithm) and therefore $\mathrm{I}_0 = \mathrm{I}_0'$ and in all steps $t \in \mathbb{T}_{i_\star}^c$, base $i_\star$ repeated a policy of Step $2$ so that $\mathrm{I}_1 = \mathrm{I}_1'$.  Equality $(ii)$ follows by adding and subtracting term $\mathrm{I}_B$. Term $\EE{\mathrm{I}_A}$ is the regret of the meta-algorithm with respect to base $i_\star$. Term $\EE{\mathrm{I}_B}$ accounts for the difference between the rewards of Step 1 and Step 2 (that the bases incur) and 2 times the rewards of Step 2 (that the bases send to the meta-algorithm).
We now focus on bounding 
$\EE{\mathrm{I}_A}$ and $\EE{\mathrm{I}_B}$.

\paragraph*{Biased step $2$'s rewards. } We set the bias functions to $b_j(s) =\frac{U(s, \delta)}{s}$ in Algorithm~\ref{Alg:smooth_meta-algorithm}. This will become useful to control $\EE{\mathrm{I}_B}$. Instead of sending the meta-algorithm the unadulterated $2r_{t,j}^{(2)}$ feedback, at all time step $t$, all bases will send the following modified feedback:

\begin{equation}\label{equation::modified_step_2_rewards}
\widetilde{r}_{t,j}^{(2)}= r_{t,j}^{(2)} - \underbrace{\frac{U_j(s_{t,j}, \delta)}{s_{t,j}}}_{b_j(s_{t,j})}
\end{equation}
\noindent This reward satisfies:
\begin{align*}
    \EE{ \widetilde{r}_{t,j}^{(2)} |\mathcal{F}_{t-1} }&= \EE{f(\mathcal{A}_t^{(2)}, \pi_t^{(2)}) | \mathcal{F}_{t-1}} -\frac{ U_j(s_{t,j}, \delta)}{s_{t,j}}
    \end{align*} 
Define the modified rewards $\widetilde{f}(\mathcal{A}, \pi_{t, j}^{(2)}) = f(\mathcal{A}, \pi_{t,j}^{(2)})  - b_j(s_{t,j})$ for all $j \in [M]$ and context $\mathcal{A}$. Let's write $\mathrm{I}_A + \mathrm{I}_B$ in terms of these $\widetilde{f}$.

\begin{align}
\mathrm{I}_A +\mathrm{I}_B &=   \underbrace{\sum_{t=1}^T \sum_{j=1}^2 \widetilde{f}(\mathcal{A}_t^{(j)}, \pi_{t,i_\star}^{(2)}) - \widetilde{f}(\mathcal{A}_t^{(j)}, \pi_t^{(2)})}_{\widetilde{\mathrm{I}}_A} + \underbrace{\sum_{t\in \mathbb{T}^c_{i_\star}}  \widetilde{f}(\mathcal{A}_t^{(1)}, \pi_{t}^{(2)}) - f(\mathcal{A}_t^{(1)}, \pi_t^{(1)})}_{\widetilde{\mathrm{I}}_B} + \notag \\
&\quad  \sum_{t=1}^T  2 b_{i_\star}(s_{t,i_\star}) -  2 b_{j_t}(s_{t, j_t})  + \sum_{t\in \mathbb{T}^c_{i_\star}} b_{j_t}(s_{t,j_t})  \notag  \\
&\stackrel{(i)}{\leq} \widetilde{\mathrm{I}}_A + \widetilde{\mathrm{I}}_B + \sum_{t=1}^T 2b_{i_\star}(s_{t,i_\star}) \label{equation::upper_bound_IA_IB_bs}
\end{align}

Where inequality $(i)$ holds because $\sum_{t=1}^T 2b(s_{t,j_t}) - \sum_{t \in \mathbb{T}_{i_\star}^c} b(s_{t, j_t}) \leq 0$. In the coming discussion we'll show that this modification allows us to control term $\widetilde{\mathrm{I}}_B$. In the following two sections we will control $\mathbb{E}\left[ \widetilde{\mathrm{I}}_A \right]$ and $\mathbb{E}\left[  \widetilde{\mathrm{I}}_B  \right]$. We will control $\mathbb{E}\left[ \widetilde{\mathrm{I}}_A \right]$ by using standard arguments from the adversarial bandits literature. We will also show that with high probability $\mathbb{E}\left[  \widetilde{\mathrm{I}}_B  \right] \leq 8 \sqrt{MT \log(\frac{4TM}{\delta}) }$. The use of the biased rewards $\widetilde{r}_{t,j}^{(2)}$ allows us to ensure the collected reward during steps of type $1$ plus the bias terms vs the collected reward of steps of type $2$ is close to zero. Without these bias terms, bounding term $\mathrm{I}_B$ may prove problematic since the rewards of steps of type $1$ may be smaller than the collected rewards of steps of type $2$. In this case $\mathbb{E}[\mathrm{I}_B]$ may give rise to a regret term dependent on the putative regret upper bounds of all algorithms $j \in [M]$ and not only on $U_{\star}(T, \delta)$.

\subsection*{Bounding term $\EE{\widetilde{\mathrm{I}}_A}$}

Let's start by noting that after taking expectations,

\begin{align}\label{equation::term_tilde_I_A}
    \mathbb{E}\left[ \widetilde{\mathrm{I}}_A \right] = 2 \mathbb{E}\left[ \sum_{t=1}^T \widetilde{f}(\mathcal{A}_t^{(2)}, \pi_{t,i_\star}^{(2)}) - \widetilde{f}(\mathcal{A}_t^{(2)}, \pi_t^{(2)})  \right]  
\end{align}

The modification of the bases' rewards in Equation~\ref{equation::modified_step_2_rewards} modifies both the bases rewards as well as the comparator.  Since both meta-algorithms CORRAL and EXP3.P are $k$-armed bandit adversarial algorithms, their worst-case performance guarantees hold for this biased pseudo-reward sequence. 

\subsubsection*{CORRAL Meta-Algorithm}

We can bound Equation~\ref{equation::term_tilde_I_A} using Lemma 13 from \citep{DBLP:conf/colt/AgarwalLNS17}. Indeed, in term $\widetilde{\mathrm{I}}_A$, the policy choice for all base algorithms $\{\widetilde{\mathcal{B}}_m\}_{m =1}^M$ during any round $t$ is chosen before the value of $i_t$ is revealed. This ensures the estimates $\frac{2r_t^{(2)}}{p_t^{i_t}}$ and $0$ for all $i\neq i_t$ are indeed unbiased estimators of the base algorithm's rewards. We conclude:
\begin{equation*}
    \EE{\mathrm{I}_A} \leq \mathcal{O}\left( \frac{M \ln T}{\eta} + T \eta \right)- \frac{\EE{\frac{1}{\underline{p}_{i_\star}}}}{40 \eta \ln T} 
\end{equation*}
\subsubsection*{EXP3.P Meta-Algorithm}
Since $\EE{\mathrm{I}_A}$ is the regret of base $i$ with respect to the meta-algorithm, it can be upper bounded by the $k$-armed bandit regret of the meta-algorithm with $M$ arms. Choose $\eta = 1, \gamma = 2k\beta$ in Theorem 3.3 in \citep{Bubeck-Slivkins-2012}, we have that if $p \le \frac{1}{2k}$, the regret of EXP3.P: 
\begin{align*}
    \EE{\mathrm{I}_A} \le \widetilde{\mathcal{O}} \left(MTp + \frac{\log(k\delta^{-1})}{p}\right)
\end{align*}
\subsection*{Bounding $\EE{\widetilde{\mathrm{I}}_B}$}
Notice that:
\begin{align*}
    \EE{\widetilde{\mathrm{I}}_B} &= \EE{\sum_{t\in \mathbb{T}_{i_\star}^c}  \widetilde{f}(\mathcal{A}_t^{(1)}, \pi_{t}^{(2)}) - f(\mathcal{A}_t^{(1)}, \pi_t^{(1)})}\\
    &=\EE{\sum_{t\in \mathbb{T}_{i_\star}^c}  \widetilde{f}(\mathcal{A}_t^{(2)}, \pi_{t}^{(2)}) - f(\mathcal{A}_t^{(1)}, \pi_t^{(1)})}\\
    &= \EE{\sum_{t\in \mathbb{T}_{i_\star}^c}  \widetilde{f}(\mathcal{A}_t^{(2)}, \pi_{t}^{(2)}) - f(\mathcal{A}_t^{(2)}, \pi^*) + f(\mathcal{A}_t^{(2)}, \pi^*) - f(\mathcal{A}_t^{(1)}, \pi_t^{(1)})}\\
    &= \EE{\sum_{t\in \mathbb{T}_{i_\star}^c}  \widetilde{f}(\mathcal{A}_t^{(2)}, \pi_{t}^{(2)}) - f(\mathcal{A}_t^{(2)}, \pi^*) + f(\mathcal{A}_t^{(1)}, \pi^*) - f(\mathcal{A}_t^{(1)}, \pi_t^{(1)})}
\end{align*}

Substituting the definition of $\widetilde{f}( \mathcal{A}_t^{(2)}, \pi_t^{(2)})$ and $b_j(s_{t,j})$ back into the expectation for $\EE{\widetilde{\mathrm{I}}_B}$ becomes:
\begin{center}
\begin{align}
     \mathbb{E}[\widetilde{\mathrm{I}}_B] &= \mathbb{E}\Big[ \sum_{t\in \mathbb{T}_{i_\star}^c}  f(\mathcal{A}_t^{(2)}, \pi_{t}^{(2)}) - f(\mathcal{A}_t^{(2)}, \pi^*) - \frac{U_{j_t}(s_{t,j_t}, \delta)}{s_{t,j_t}} +  f(\mathcal{A}_t^{(1)}, \pi^*) - f(\mathcal{A}_t^{(1)}, \pi_t^{(1)})\Big] \notag\\
     &\stackrel{(1)}{=} \sum_{j \neq i_\star}\mathbb{E}\Big[ \sum_{t \in \mathbb{T}_j } f(\mathcal{A}_t^{(2)}, \pi_{t,j}^{(2)}) - f(\mathcal{A}_t^{(2)}, \pi^*) - \frac{U_{j}(s_{t,j}, \delta)}{s_{t, j}} +  f(\mathcal{A}_t^{(1)}, \pi^*) - f(\mathcal{A}_t^{(1)}, \pi_{t,j}^{(1)}) \Big]  \notag \\
     &\stackrel{(2)}{\leq}  \sum_{j \neq i_\star}\mathbb{E}\Big[ \sum_{t \in \mathbb{T}_j } f(\mathcal{A}_t^{(2)}, \pi_{t,j}^{(2)}) - f(\mathcal{A}_t^{(2)}, \pi^*)  + f(\mathcal{A}_t^{(1)}, \pi^*) - f(\mathcal{A}_t^{(1)}, \pi_{t,j}^{(1)})  \Big] - U_{j}(s_{T, j}, \delta)
     \label{equation::first_bound_I_B}
\end{align}
\end{center}
Equality $(1)$ follows by noting $\mathbb{T}_{i_\star}^c = \cup_{j \neq i_\star} \mathbb{T}_j$. Inequality $(2)$ follows because by Lemma \ref{lem::helper}, we have $ U_j(s_{T, j}, \delta) \leq \sum_{s=1}^{s_{T,j}} \frac{U(s, \delta)}{s}$ for all $j \in [M]$.  If the $j-$th algorithm was adapted to the environment, then with high probability satisfies the following bound:
\begin{small}
\begin{align}
    \left(\sum_{t \in \mathbb{T}_j } f(\mathcal{A}_t^{(2)}, \pi_{t, j}^{(2)}) - f(\mathcal{A}_t^{(2)}, \pi^*) + f(\mathcal{A}_t^{(1)}, \pi^*) - f(\mathcal{A}_t^{(1)}, \pi_{t, j}^{(1)}) \right) - U_j(T_j(T), \delta) &\stackrel{(A)}{\leq} \notag\\ \left(\sum_{t \in \mathbb{T}_j }  f(\mathcal{A}_t^{(1)}, \pi^*) - f(\mathcal{A}_t^{(1)}, \pi_{t, j}^{(1)}) \right) - U_j(T_j(T), \delta) &\stackrel{(B)}{\leq} 0 \label{equation::non_positive}
\end{align}
\end{small}
Inequality $(A)$ follows because by definition $f(\mathcal{A}_t^{(2)}, \pi^*) \geq f(\mathcal{A}_t^{(2)}, \pi_t^{(2)})$ and $(B)$ because if $\mathcal{B}_j$ is adapted to the environment it satisfies a high probability regret bound. Let $\mathrm{Adapt} = \{ j \in [M] \text{ s.t. } j \text{ is adapted } \} $. Let's rewrite the upper bound for $\mathbb{E}\left[ \widetilde{\mathrm{I}}_B \right]$ from Equation~\ref{equation::first_bound_I_B} as a sum of terms corresponding to base algorithms $j \in \mathrm{Adapt}$ and $j \in [M]\backslash \mathrm{Adapt}$.

\begin{small}
\begin{align*}
     \mathbb{E}[\widetilde{\mathrm{I}}_B] &\leq \sum_{j \neq i_\star}\mathbb{E}\Big[ \sum_{t \in \mathbb{T}_j } f(\mathcal{A}_t^{(2)}, \pi_{t,j}^{(2)}) - f(\mathcal{A}_t^{(2)}, \pi^*)  + f(\mathcal{A}_t^{(1)}, \pi^*) - f(\mathcal{A}_t^{(1)}, \pi_{t,j}^{(1)})  \Big] - U_{j}(s_{T, j}, \delta) \\
     &= \sum_{j \neq i_\star, j \in \mathrm{Adapt}}\mathbb{E}\Big[ \sum_{t \in \mathbb{T}_j } f(\mathcal{A}_t^{(2)}, \pi_{t,j}^{(2)}) - f(\mathcal{A}_t^{(2)}, \pi^*)  + f(\mathcal{A}_t^{(1)}, \pi^*) - f(\mathcal{A}_t^{(1)}, \pi_{t,j}^{(1)})  \Big] - U_{j}(s_{T, j}, \delta) +  \\
     & \sum_{j \neq i_\star, j \not\in \mathrm{Adapt}}\mathbb{E}\Big[ \sum_{t \in \mathbb{T}_j } f(\mathcal{A}_t^{(2)}, \pi_{t,j}^{(2)}) - f(\mathcal{A}_t^{(2)}, \pi^*)  + f(\mathcal{A}_t^{(1)}, \pi^*) - f(\mathcal{A}_t^{(1)}, \pi_{t,j}^{(1)})  \Big] - U_{j}(s_{T, j}, \delta) 
\end{align*}
\end{small}

Equation~\ref{equation::non_positive} implies 
that with probability at least $1-\left| \mathrm{Adapt}\right|\delta$, 

\begin{equation*}
    \sum_{j \neq i_\star, j \in \mathrm{Adapt}}\mathbb{E}\Big[ \sum_{t \in \mathbb{T}_j } f(\mathcal{A}_t^{(2)}, \pi_{t,j}^{(2)}) - f(\mathcal{A}_t^{(2)}, \pi^*)  + f(\mathcal{A}_t^{(1)}, \pi^*) - f(\mathcal{A}_t^{(1)}, \pi_{t,j}^{(1)})  \Big] - U_{j}(s_{T, j}, \delta) \leq 0.
\end{equation*}

We are left with controlling the component of the upper bound of $\mathbb{E}\left[ \widetilde{\mathrm{I}}_B \right]$ that runs over misspecified algorithms. When $\mathcal{B}_j$ is not adapted, Equation~\ref{equation::non_positive} may or may not hold. In order to ensure we are able to control $\mathbb{E}\left[ \widetilde{\mathrm{I}}_B\right]$ we will make sure that algorithms that violate Equation~\ref{equation::non_positive} by a large margin are dropped by the meta-algorithm. Since it is impossible to compute the terms $f(\mathcal{A}_t^{(2)}, \pi_t^{(2)}) - f(\mathcal{A}_t^{(2)}, \pi^*)$ and $f(\mathcal{A}_t^{(1)}, \pi^*) - f(\mathcal{A}_t^{(1)}, \pi_t^{(1)})$ directly, we instead rely on the following test:

\paragraph*{Base Test.} Let $\mathbb{T}_j(l)$ be the set of time indices in $[l]$ when the meta-algorithm chose to play base $j$. We drop base $\widetilde{\mathcal{B}}_j$ if at any point during the history of the algorithm, 
\begin{equation}
    \sum_{t\in \mathbb{T}_j(l) }  r_{t, j}^{(2)} - r_{t, j}^{(1)} > U_j(T_j(\ell), \delta) + 2 \sqrt{2 \left| T_j(\ell)\right| \log\left( \frac{4TM}{\delta}\right) } \label{equation::natural_environment_condition}
\end{equation}

Let's start by showing that with high probability $ \sum_{t\in \mathbb{T}_j(l) }  r_{t, j}^{(2)} - r_{t, j}^{(1)}$ is a good estimator of $\sum_{t \in \mathbb{T}_j(l) } f(\mathcal{A}_t^{(2)}, \pi_{t, j}^{(2)}) - f(\mathcal{A}_t^{(2)}, \pi^*) + f(\mathcal{A}_t^{(1)}, \pi^*) - f(\mathcal{A}_t^{(1)}, \pi_{t, j}^{(1)}) $ for all $j \in [M]$.

As a simple consequence of the Azuma-Hoeffding martingale bound and Assumption~\ref{assumption:unit_bounded_reward}, with probability at least $1-\delta/M$ and for all $\ell \in [T]$ and for any $j \in [M]$:

\begin{align}
   \left| \sum_{t \in \mathbb{T}_j(\ell)} f(\mathcal{A}_t^{(2)}, \pi^*) - f(\mathcal{A}_t^{(1)}, \pi^*) \right|&\leq \sqrt{2 \left|  \mathbb{T}_j(\ell)  \right|  \log\left(\frac{ 4TM}{\delta}\right) } \label{equation::margingale_helper_bound1} \\
   \left| \sum_{t \in \mathbb{T}_j(\ell)} r_{t, j}^{(2)} - r_{t, j}^{(1)} - f( \mathcal{A}_t^{(2)}, \pi_{t, j}^{(2)}) - f(\mathcal{A}_t^{(1)}, \pi_{t, j}^{(1)})\right| &\leq \sqrt{2 \left|  \mathbb{T}_j(\ell)  \right| \log\left( \frac{4TM}{\delta}\right) } \label{equation::martingale_helper_bound2}
\end{align}
Combining Equation~\ref{equation::margingale_helper_bound1} and Equation~\ref{equation::martingale_helper_bound2} we get, with probability at least $1-\frac{\delta}{M}$ for all $l \in [T]$ and for any $j \in [M]$:
\begin{small}
\begin{align}
\left| \left(\sum_{t\in \mathbb{T}_j(l) }  r_{t, j}^{(2)} - r_{t, j}^{(1)}     \right)  - \left(\sum_{t \in \mathbb{T}_j(l) } f(\mathcal{A}_t^{(2)}, \pi_{t, j}^{(2)}) - f(\mathcal{A}_t^{(2)}, \pi^*) + f(\mathcal{A}_t^{(1)}, \pi^*) - f(\mathcal{A}_t^{(1)}, \pi_{t, j}^{(1)}) \right) \right| \notag \\
\leq 2 \sqrt{2 \left|  \mathbb{T}_j(\ell)  \right| \log\left( \frac{4TM}{\delta}\right) } \label{equation::r_2_r_1_proxy}
\end{align}
\end{small}

Thus $ \sum_{t\in \mathbb{T}_j(l) }  r_{t, j}^{(2)} - r_{t, j}^{(1)}$  is a good proxy for $\sum_{t \in \mathbb{T}_j(l) } f(\mathcal{A}_t^{(2)}, \pi_{t, j}^{(2)}) - f(\mathcal{A}_t^{(2)}, \pi^*) + f(\mathcal{A}_t^{(1)}, \pi^*) - f(\mathcal{A}_t^{(1)}, \pi_{t, j}^{(1)})$. Let's start by noting that in case $\widetilde{\mathcal{B}}_j$ is adapted to the environment, the test in Equation~\ref{equation::natural_environment_condition} will not trigger. In particular the optimal algorithm $i_\star$ will not be eliminated.

Equation~\ref{equation::non_positive} holds for all $j \in  \mathrm{Adapt}$ with probability at least $1-\left|\mathrm{Adapt}\right|\delta$. Combining this result with Equation~\ref{equation::r_2_r_1_proxy} we conclude that with probability at least $1-|\mathrm{Adapt}|(1+\frac{1}{M})\delta$ for all $j \in \mathrm{Adapt}$ and all $\ell \in [T]$,
\begin{equation*}
    \sum_{t\in \mathbb{T}_j(l) }  r_{t, j}^{(2)} - r_{t, j}^{(1)} \leq U_j(T_j(\ell), \delta) + 2 \sqrt{2 \left| T_j(\ell)\right| \log\left( \frac{4TM}{\delta}\right) }
\end{equation*}
for all $\ell \in [T]$. Thus with high probability no well adapted algorithm will be eliminated.

Let's now show that for all $j \in [M]\backslash \mathrm{Adapt}$ the contribution of $\widetilde{\mathcal{B}}_j$ to $\mathbb{E}\left[ \widetilde{\mathrm{I}}_B\right]$ while the test of Equation~\ref{equation::natural_environment_condition} has not been triggered is small. If Equation~\ref{equation::natural_environment_condition} holds for algorithm $j \in [M]$ (even if $\widetilde{\mathcal{B}}_j$ is not adapted), then Equation~\ref{equation::r_2_r_1_proxy} implies that with probability at least $1-\frac{\delta}{M}$:
\begin{small}
\begin{align}
\sum_{t \in \mathbb{T}_j(l)}    f(\mathcal{A}_t^{(2)}, \pi_{t, j}^{(2)})  - f(\mathcal{A}_t^{(2)}, \pi^*) + f(\mathcal{A}_t^{(1)}, \pi^*)-  f(\mathcal{A}_t^{(1)}, \pi_{t, j}^{(1)}) - U_j(s_{l,j}, \delta) \leq
 4 \sqrt{2 | \mathbb{T}_j(l)|\log\left(  \frac{4TM}{\delta}\right) }\label{equation::maintaining_boundedness_condition}
\end{align}
\end{small}
This test guarantees that with probability at least $1-\frac{\left|[M]\backslash \mathrm{Adapt}\right|\delta}{M}$,

\begin{align*}
    \sum_{j \neq i_\star, j \not\in \mathrm{Adapt}}\mathbb{E}\Big[ \sum_{t \in \mathbb{T}_j } f(\mathcal{A}_t^{(2)}, \pi_{t,j}^{(2)}) - f(\mathcal{A}_t^{(2)}, \pi^*)  + f(\mathcal{A}_t^{(1)}, \pi^*) - f(\mathcal{A}_t^{(1)}, \pi_{t,j}^{(1)})  \Big] - U_{j}(s_{T, j}, \delta) &\leq \\
    \sum_{j \neq i_\star, j \in [M]\backslash \mathrm{Adapt}}  4 \sqrt{2 | \mathbb{T}_j|\log\left(  \frac{4TM}{\delta}\right)}&\leq \\
     8 \sqrt{ \left|[M]\backslash \mathrm{Adapt}\right| T \log\left( \frac{4TM}{\delta}\right)  }
\end{align*}
The last inequality holds because $\sum_{j \neq i_\star} \sqrt{ | \mathbb{T}_j |} \leq \sqrt{TM}$. And therefore,

\begin{equation*}
    \EE{\widetilde{\mathrm{I}}_B} \leq  8 \sqrt{\left|[M]\backslash \mathrm{Adapt}\right|T \log\left(\frac{4TM}{\delta}\right) } + T(M+1)\delta
\end{equation*}

\subsection*{Bounding term $\mathrm{II}$}

Recall term $\mathrm{II}$ equals:

\begin{equation}
  \EE{\mathrm{II}} =     \EE{\sum_{t=1}^T   f(\mathcal{A}_t, \pi^*) - f(\mathcal{A}_t, \pi_{s_{t,i},i})}
\end{equation}

We use $n_t^i$ to denote the number of rounds base $i$ is chosen up to time $t$ for all $i \in [M]$. Let $t_{l,i}$ be the round index of the $l-$th time the meta-algorithm chooses algorithm $\mathcal{B}_i$ and let $b_{l,i} = t_{l,i} - t_{l-1,i}$ with $t_{0,i} = 0$ and $t_{n_T^i + 1,i} = T+1$. 
Let $\mathbb{T}_i \subset [T]$ be the set of rounds where base $i$ is chosen and $\mathbb{T}_i^c = [T]\backslash \mathbb{T}_i$. For $S \subset [T]$ and $j\in \{1,2\}$, we define the regret of the $i-$th base algorithm during Step $j$ of rounds $S$ as $R_i^{(j)}(S) = \sum_{t\in S} f(\mathcal{A}_t^{(j)}, \pi^*) -f(\mathcal{A}_t^{(j)}, \pi_{t,i}^{(j)}) $. The following decomposition of $\EE{\mathrm{II}}$ holds:
\begin{equation}\label{equation::decomposition_term_II}
    \EE{\mathrm{II}} = \E\left[  R_{i_\star}^{(1)}(\mathbb{T}_{i_\star}) + \underbrace{ R_{i_\star}^{(2)}(\mathbb{T}_{i_\star})+ R_{i_\star}^{(1)}(\mathbb{T}_{i_\star}^c)  +  R_{i_\star}^{(2)}(\mathbb{T}_{i_\star}^c)}_{\mathrm{II}_0} \right]. 
\end{equation}
$R_{i_\star}^{(1)}(\mathbb{T}_{i_\star})$ consists of the regret when base $i_\star$ was updated in step 1 while the remaining $3$ terms consists of the regret when the policies are reused by step 2. 

\subsection*{Biased step $2$'s rewards}

Note that we modified the rewards of step 2 as defined in Equation~\ref{equation::modified_step_2_rewards}, both when the base is chosen and not chosen. We now analyze the effect of this modification: 
\begin{small}
\begin{align*}
&R(T)\\&= \EE{\sum_{t=1}^T \sum_{j=1}^2 f(\mathcal{A}_t^{(j)}, \pi^*) - f(\mathcal{A}^{(j)}_t, \pi^{(j)}_t) } \notag \\ 
&=\mathbb{E}\underbrace{ \left[\sum_{t=1}^T \sum_{j=1}^2 f(\mathcal{A}^{(j)}_t, \pi^{(j)}_{s_{t,i_\star},i_\star}) - f(\mathcal{A}^{(j)}_t, \pi^{(j)}_t)\right]}_{\mathrm{I}}+
 \mathbb{E}\underbrace{\left[\sum_{t=1}^T \sum_{j=1}^2  f(\mathcal{A}^{(j)}_t, \pi^*) - f(\mathcal{A}^{(j)}_t, \pi^{(j)}_{s_{t,i_\star},i_\star})\right]}_{\mathrm{II}} \\ 
&=\mathbb{E} \left[\sum_{t=1}^T \sum_{j=1}^2 \left( f(\mathcal{A}^{(j)}_t, \pi^{(j)}_{s_{t,i_\star},i_\star}) - \mathbf{1}( t \in \mathbb{T}_{i_\star}^c \text{ or }  j = 2) \frac{U_i(s_{t,i_\star}, \delta)}{s_{t,i_\star}}\right) -  f(\mathcal{A}^{(j)}_t, \pi^{(j)}_t) \right]\\
&\quad
 +\mathbb{E}\left[\sum_{t=1}^T \sum_{j=1}^2  f(\mathcal{A}^{(j)}_t, \pi^*) - \left( f(\mathcal{A}^{(j)}_t, \pi^{(j)}_{s_{t,i_\star},i_\star}) -\mathbf{1}( t \in \mathbb{T}_{i_\star}^c \text{ or }  j = 2) \frac{U_{i_\star}(s_{t,i_\star}, \delta)}{s_{t,i_\star}}\right)\right] \\
 &\leq \mathrm{I}-\text{modified} + \mathrm{II}-\text{modified}
\end{align*}
\end{small}
Where $\mathrm{I}-\text{modified}$ and $\mathrm{II}-\text{modified}$ are defined as,
\begin{align*}
  \mathrm{I}-\text{modified} &=   \mathbb{E}\Big[\sum_{t=1}^T \sum_{j=1}^2 \left( f(\mathcal{A}^{(j)}_t, \pi^{(j)}_{s_{t,i_\star},i_\star}) - \mathbf{1}( t \in \mathbb{T}_{i_\star}^c \text{ or }  j = 2) \frac{U_{i_\star}(s_{t,i_\star}, \delta)}{s_{t,i_\star}}\right) -  \\
  &\qquad~\left( f(\mathcal{A}^{(j)}_t, \pi^{(j)}_t) - \frac{U_{j_t }(s_{t,j_t}, \delta) }{s_{t, j_t}}\right) \Big] \\
  \mathrm{II}-\text{modified} &= \mathbb{E}\Big[\sum_{t=1}^T \sum_{j=1}^2  f(\mathcal{A}^{(j)}_t, \pi^*) - \\
  &\qquad~\left( f(\mathcal{A}^{(j)}_t, \pi^{(j)}_{s_{t,i_\star},i_\star}) -\mathbf{1}( t \in \mathbb{T}_{i_\star}^c \text{ or }  j = 2) \frac{U_i(s_{t,i_\star}, \delta)}{s_{t,i_\star}}\right)\Big]
\end{align*}

We provided a bound for term $\mathrm{I}$-modified at the beginning of Section~\ref{app:ommited_proofs_stochastic_corral}. In this section we concern ourselves with $\mathrm{II}-$modified. Notice its expectation can be written as:

\begin{equation*}
    \EE{\mathrm{II}-\text{modified}} = \EE{\mathrm{II}} + \EE{\sum_{t=1}^T \sum_{j=1}^2 \mathbf{1}( t \in \mathbb{T}_{i_\star}^c \text{ or }  j = 2) \frac{U_{i_\star}(s_{t,i_\star}, \delta)}{s_{t,i_\star}}}
\end{equation*}

Now the second part of this sum is easy to deal with as it can be incorporated into the bound of $\EE{\mathrm{II}}$ by slightly modifying the bound given by Equation~\ref{equation::bound_I_0} below and changing $2b_l -1 $ to $2b_l + 1$. The rest of the argument remains the same.

\subsection*{ Bounding $\EE{\mathrm{II}}$ when $\underline{p}_{i_\star}$ is fixed}

From this section onward we drop the subscript $i_\star$ whenever clear to simplify the notations. In this section we show an upper bound for Term $\mathrm{II}$ when there is a value $\underline{p}_{i_\star}\in(0,1)$ that lower bounds $p_1^i, \cdots, p_T^{i_\star}$ with probability $1$. We then use the restarting trick to extend the proof to the case when $\underline{p}_i$ is random in Theorem~\ref{theorem::path_dependent_regret} 
\begin{lemma}[Fixed $\underline{p}_{i_\star}$]\label{lemma::fixed_pmin}
Let $\underline{p}_{i_\star} \in (0,1)$ be such that $\frac{1}{\rho_{i_\star}} = \underline{p}_{i_\star} \leq p_1^{i_\star}, \cdots, p_T^{i_\star}$ with probability one, then, $\EE{\mathrm{II}} \leq 4 \rho_{i_\star}\, U_i(T/\rho_{i_\star}, \delta) \log T + \delta T$. 
\end{lemma}

\begin{proof}[Proof of Lemma~\ref{lemma::fixed_pmin}]
Since $\EE{\mathrm{II}} \leq \EE{ \1\{\mathcal{E}\}\mathrm{II}} + \delta T$, we focus on bounding $\EE{ \1\{\mathcal{E}\}\mathrm{II}}$.  since base $i$ is $(U,T,\delta)-$bounded, $\EE{   R^{(1)}_{i_\star}(\mathbb{T}_i) \1(\mathcal{E}) } \leq \EE{ U_{i_\star}(\delta, n_T^{i_\star}) \1(\mathcal{E}) } $. We proceed to bound the regret corresponding to the remaining terms in $\mathrm{II}_0$:
\begin{align}
    \EE{\mathrm{II}_0\1(\mathcal{E})} &= \EE{\sum_{l=1}^{n_T^{i_\star} +1} \1\{\mathcal{E} \}(2b_l -1)\EE{r_{t_l, i_\star}^{(2)} | \mathcal{F}_{t_{l-1}}} }\notag \\
    &\leq \EE{ \sum_{l=1}^{n_T^{i_\star} +1} \1\{\mathcal{E} \}(2b_l -1) \frac{U_{i_\star}(l,\delta/2M)}{l} } \label{equation::bound_I_0}
\end{align}
The multiplier $2b_l-1$ arises because the policies proposed by the base algorithm during the rounds it is not selected by $\mathcal{M}$ satisfy $\pi_{t, i_\star}^{(1)} = \pi_{t, i_\star}^{(2)} = \pi_{t_l, i}^{(2)}$ for all $l\leq n_{i_\star}^T+1$ and $t = t_{l-1}+1, \cdots, t_l-1$. The factorization is a result of conditional independence between $\EE{r_{t_l, i_\star}^{(2)}| \mathcal{F}_{t_{l-1}}}$ and $\EE{b_l|\mathcal{F}_{t_{l-1}}}$ where $\mathcal{F}_{t_{l-1}}$ already includes algorithm $\widetilde{B}_{i_\star}$ update right after round $t_{l-1}$.
The inequality holds because $\widetilde{\mathcal{B}}_{i_\star}$ is $(U_{i_\star}, \frac{\delta}{2M}, \mathcal{T}^{(2)})-$smooth and therefore satisfies Equation \ref{equation::inst_regret_boundedness} on event $\mathcal{E}$. Recall that as a consequence of Equation~\ref{equation::decomposition_term_II} we have $$\EE{\mathrm{II}} \leq \EE{ R^{(1)}_{i_\star}(\mathbb{T}_i) \1(\mathcal{E})+ \mathrm{II}_0\1\{\mathcal{E}\} } + \delta T.  $$ The first term is bounded by $\EE{ U_{i_\star}(n_T^{i_\star}, \delta) \1(\mathcal{E}) } $ while the second term satisfies the bound in  \eqref{equation::bound_I_0}. Let $u_l = \frac{U_{i_\star}(l,\delta/2M)}{l}$. By Lemma~\ref{lem::helper}, $\sum_{l=1}^t u_l \geq U_{i_\star}(t, \delta/M)$ for all $t$, and so,
\begin{align}
\label{eq:Uu}
 \EE{\one{\mathcal{E}}U_{i_\star}(n_T^{i_\star}, \delta)}  
\leq \EE{  \sum_{l=1}^{n_T^{i_\star}+1} \one{\mathcal{E}} u_l} \;.
\end{align}
By \eqref{equation::bound_I_0} and \eqref{eq:Uu},
\begin{equation*}
    \EE{ R^{(1)}_{i_\star}(\mathbb{T}_{i_\star}) \1(\mathcal{E})+ \mathrm{II}_0\1\{\mathcal{E}\} } \leq \EE{ \sum_{l=1}^{n_T^{i_\star} +1} \1\{\mathcal{E} \}2b_l u_l } \;.
\end{equation*} 
Let $a_l = \EE{b_l}$ for all $l$. Consider a meta-algorithm that uses $\underline{p}_{i_\star}$ instead of $p_t^{i_\star}$. In this new process let $t_l'$ be the corresponding rounds when the base is selected, $\bar{n}_T^{i_\star}$ be the total number of rounds the base is selected, and $c_l = \EE{t_l' - t_{l-1}'}$. Since $\underline{p}_{i_\star} \leq p_t^{i_\star}$ for all $t$ it holds that $\sum_{l=1}^j a_l \leq \sum_{l=1}^j c_l$ for all $j$. If we use the same coin flips used to generate ${t_l}$ to generate ${t'_l}$, we observe that ${t'_l}\subset {t_l}$ and $\bar{n}_T^{i_\star} \le n_T^{i_\star}$. Let $f:\Real\rightarrow [0,1]$ be a decreasing function such that for integer $i_\star$, $f(i_\star)=u_{i_\star}$. Then $\sum_{l=1}^{n_T^{i_\star}+1} a_l u_l$ and $\sum_{l=1}^{\bar{n}_T^{i_\star}+1} c_l u_l$ are two estimates of integral $\int_0^T f(x)dx$. Given that ${t'_l}\subset {t_l}$ and $u_l$ is a decreasing sequence in $l$,
\[
\sum_{l=1}^{n_T^{i_\star}+1} \EE{t_l - t_{l-1}  }u_l \le \sum_{l=1}^{\bar{n}_T^{i_\star}+1} \EE{t_l' - t_{l-1}'} u_l\,,
\] %
and thus
\begin{equation*}
\EE{ R^{(1)}_{i_\star}(\mathbb{T}_{i_\star}) \1(\mathcal{E})+ \mathrm{II}_0\1\{\mathcal{E}\} } \leq  \mathbb{E} \sum_{l=1}^{\bar{n}_T^{i_\star}+1} 2\EE{t_l' - t_{l-1}'} u_l \;. 
\end{equation*}
We proceed to upper bound the right hand side of this inequality:
\begin{align*}
\EE{ \sum_{l=1}^{\bar{n}_T^{i_\star}+1} u_l \EE{t_l' - t_{l-1}'}} &\leq \EE{ \sum_{l=1}^{\bar{n}_T^{i_\star}+1} \frac{u_l}{\underline{p}_i} }\\
&\leq 2\rho_{i_\star} U_{i_\star}(T/\rho_{i_\star}, \delta) \log(T).
\end{align*}
The first inequality holds because $\EE{t_l'-t_{l-1}'}\leq \frac{1}{\underline{p}_{i_\star}}$ and the second inequality follows by concavity of $U_{i_\star}(t, \delta)$ as a function of $t$. The proof follows. 
\end{proof}

\subsection*{Proof of Theorem~\ref{theorem::path_dependent_regret}}

We use the restarting trick to extend Lemma~\ref{lemma::fixed_pmin} to the case when the lower bound $\underline{p}_{i_\star}$ is random (more specifically the algorithm (CORRAL) will maintain a lower bound that in the end will satisfy $\underline{p}_{i_\star} \approx \min_t p_t^{i_\star}$) in Theorem~\ref{theorem::path_dependent_regret}. We restate the theorem statement here for convenience.

\begin{theorem}[Theorem~\ref{theorem::path_dependent_regret} ]
\label{theorem::path_dependent_regret_appendix} 
\begin{equation*}
\EE{\mathrm{II}} \leq \mathcal{O}(\EE{ \rho_{i_\star}, U_{i_\star}(T/\rho_{i_\star}, \delta) \log T }+ \delta T(\log T +1)). 
\end{equation*}
Here, the expectation is over the random variable $\rho_{i_\star} = \max_{t}\frac{1}{p_t^{i_\star}}$. If $U(t, \delta) = t^\alpha c(\delta)$ for some $\alpha \in [1/2,1)$ then,
$\EE{\mathrm{II} }\leq 4\frac{2^{1-\alpha}}{2^{1-\alpha}-1}T^\alpha c(\delta)\EE{\rho_i^{1-\alpha}}+\delta T(\log T +1)$. 
\end{theorem}

{\textbf Restarting trick:} Initialize $\underline{p}_{i_\star} = \frac{1}{2M}$. If $p^{i_\star}_t < \underline{p}_{i_\star}$, set $\underline{p}_{i_\star} = \frac{p^{i_\star}_t}{2}$ and restart the base. 

\begin{proof}[Proof of Theorem~\ref{theorem::path_dependent_regret}]
The proof follows that of Theorem 15 in \citep{DBLP:conf/colt/AgarwalLNS17}. Let $\ell_1,\cdots, \ell_{d_i} < T$ be the rounds where Line 10 of the CORRAL is executed. Let $\ell_0 = 0$ and $\ell_{d_{i_\star}+1} = T$ for notational convenience. Let $e_l = [\ell_{l-1}+1, \cdots, \ell_l]$. Denote by $\underline{p}_{i_\star,\ell_l}$ the probability lower bound maintained by CORRAL during time-steps $t\in[\ell_{l-1}, \cdots , \ell_l]$ and $\rho_{i_\star,\ell_l} = 1/\underline{p}_{i_\star,\ell_l}$. In the proof of Lemma 13 in \citep{DBLP:conf/colt/AgarwalLNS17}, the authors prove $d_{i_\star} \leq \log(T)$ with probability one. Therefore,
\begin{small}
\begin{align*}
    \EE{\mathrm{II}} &= \sum_{l=1}^{\lceil\log(T)\rceil} \mathbb{P}( \underbrace{d_{i_\star} +1 \geq l}_{I(l)}  )\EE{ R_{i_\star}^{(1)}( e_l ) + R_{i_\star}^{(2)}(e_l)     | d_{i_\star} + 1 \geq l} \\
    &\leq \log T \sum_{l=1}^{\lceil\log(T)\rceil} \mathbb{P}(I(l)  )\EE{ 4 \rho_{i_\star,\ell_l} U_i(T/\rho_{i_\star,\ell_l}, \delta) |  I(l) } + \delta T (\log T + 1) \\
    &= \log T \EE{\sum_{l=1}^{b_i +1} 4 \rho_{i_\star,\ell_l} U_{i_\star}(T/\rho_{i_\star,\ell_l}, \delta)  } + \delta T (\log T + 1).
\end{align*}
\end{small}
The inequality is a consequence of Lemma \ref{lemma::fixed_pmin} applied to the restarted segment $[\ell_{l-1}, \cdots, \ell_l]$. This step is valid because by assumption $\frac{1}{\rho_{i_\star,\ell_l}} \leq \min_{t \in [\ell_{l-1}, \cdots, \ell_l]} p_t$.

If $U_{i_\star}(t, \delta) = t^\alpha c(\delta)$ for some function $c: \mathbb{R}\rightarrow \mathbb{R}^+$, then $\rho_{i_\star} U(T/\rho_{i_\star}, \delta) = \rho_{i_\star}^{1-\alpha} T^{\alpha} c(\delta)$. And therefore:
\begin{align*}
    \EE{\sum_{l=1}^{b_{i_\star} +1} \rho_{i_\star,\ell_l} U_{i_\star}(T/\rho_{i_\star,\ell_l}, \delta)  } &\leq T^\alpha g(\delta) \EE{\sum_{l=1}^{b_i +1} \rho_{i_\star,\ell_l}^{1-\alpha}  }\\
    &\leq \frac{2^{\bar{\alpha}}}{2^{\bar{\alpha}}-1} T^\alpha c(\delta)\EE{\rho_{i_\star}^{1-\alpha}}
\end{align*}
Where $\bar{\alpha} = 1-\alpha$. The last inequality follows from the same argument as in Theorem 15 in \citep{DBLP:conf/colt/AgarwalLNS17}.
\end{proof}
\subsection*{Proof of Theorem~\ref{thm:meta-algorithm}}

\begin{proof}
For the CORRAL meta-algorithm, 
\[
\EE{\mathrm{I}} \le \EE{\mathrm{I}_A} + \EE{\mathrm{I}_B} \le O\left( \frac{M \ln T}{\eta} + T \eta \right)- \frac{\EE{\rho}}{40 \eta \ln T} + 8 \sqrt{MT \log(\frac{4TM}{\delta}) }
\]
Using Theorem~\ref{theorem::path_dependent_regret} to control term $\mathrm{II}$, the total regret of CORRAL is: 
\begin{align*}
R(T) &\le \mathcal{O}\left( \frac{M \ln T}{\eta} + T \eta \right) -\EE{\frac{\rho}{40 \eta \ln T} - 2 \rho\, U(T/\rho, \delta) \log T} + \delta T + \\
&\quad~8 \sqrt{MT \log(\frac{4TM}{\delta}) }\\ 
&\le \mathcal{O}\left( \frac{M \ln T}{\eta} + T \eta \right) -\EE{\frac{\rho}{40 \eta \ln T} - 2 \rho^{1-\alpha} T^{\alpha} c( \delta) \log T} + \delta T + \\
&\quad~8 \sqrt{MT \log(\frac{4TM}{\delta}) }\\
&\le \widetilde{\mathcal{O}}\left( \sqrt{MT} + \frac{M }{\eta} + T\eta  + Tc(\delta)^{\frac{1}{\alpha}} \eta^{\frac{1-\alpha}{\alpha}} \right) +\delta T,
\end{align*} 
where the last step is by maximizing the function over $\rho$. Choose $\delta = 1/T$.  When both $\alpha$ and $c(\delta)$ are known, choose $\eta = \frac{M^{\alpha}}{c(\delta) T^{\alpha}}$. When only $\alpha$ is known, choose $\eta = \frac{M^{\alpha}}{ T^{\alpha}}$. 

For the EXP3.P meta-algorithm, if $p \le \frac{1}{2k}$:
\[
  \EE{I} \le \EE{\mathrm{I}_A} + \EE{\mathrm{I}_B} \le \widetilde{O} \left(MTp + \frac{\log(k\delta^{-1})}{p} + \sqrt{MT \log(\frac{4TM}{\delta})}\right)
\]
Using Lemma~\ref{lemma::fixed_pmin} to control term $\mathrm{II}$, we have the total regret of EXP3.P when $\delta = 1/T$:
\begin{align*}
 R(T) &= \widetilde{\mathcal{O}}(\sqrt{MT} + MTp +\frac{1}{p} +  \frac{1}{p} U_i(Tp, \delta) ) \;. \\
 &= \widetilde{\mathcal{O}}(\sqrt{MT} + MTp + T^{\alpha} p^{\alpha-1} c(\delta))
 \end{align*}
When both $\alpha$ and $c(\delta)$ are known, choose $p = T^{-\frac{ 1-\alpha}{2-\alpha}}M^{-\frac{1}{2-\alpha}}c(\delta)^{\frac{1}{2-\alpha }}$. When only $\alpha$ is known, choose $p = T^{-\frac{ 1-\alpha}{2-\alpha}}M^{-\frac{1}{2-\alpha}}$. 
We then have the following regret: 
\begin{small}
\begin{table*}
\begin{center}
\begin{tabular}{ |c|c|c| } 
 \hline
  EXP3.P & CORRAL \\ \hline
 $\widetilde{\mathcal{O}}\left( \sqrt{MT} +  MTp  + T^{\alpha} p^{\alpha-1} c(\delta) \right)$  & $\widetilde{\mathcal{O}}\left( \sqrt{MT}+ \frac{M }{\eta} + T\eta  + T\,c(\delta)^{\frac{1}{\alpha}} \eta^{\frac{1-\alpha}{\alpha}}\right) $ \\ \hline
 $\widetilde{\mathcal{O}}\left(\sqrt{MT} +  M^{\frac{1-\alpha}{2-\alpha}} T^{\frac{1}{2-\alpha}}  c(\delta)^{\frac{1}{2-\alpha}}\right)$& $\widetilde{\mathcal{O}}\left(\sqrt{MT} + M^{\alpha}T^{1-\alpha}+ M^{1-\alpha}T^{\alpha} c(\delta) \right)$ \\ \hline
 $\widetilde{\mathcal{O}}\left(\sqrt{MT} +  M^{\frac{1-\alpha}{2-\alpha}} T^{\frac{1}{2-\alpha}}  c(\delta)\right)$ & $\widetilde{\mathcal{O}}\left(\sqrt{MT} + M^{\alpha}T^{1-\alpha}+ M^{1-\alpha}T^{\alpha} c(\delta)^{\frac{1}{\alpha}} \right)$ \\ \hline
\end{tabular}
\end{center}
\vspace{-.8cm}
\caption[Regret guarantees for Stochastic CORRAL. ]{The top row shows the general regret guarantees. The middle row shows the regret guarantees when $\alpha$ and $c(\delta)$ are known. The bottom row shows the regret guarantees when $\alpha$ is known and $c(\delta)$ is unknown.}
\end{table*}
\end{small}
\end{proof}

\section{Ancillary Technical Results}
\label{app:lemmas}

\begin{lemma}
\label{lem::helper}
If $U(t, \delta) = t^\beta c(\delta)$, for $0 \leq \beta \leq 1$ then:
\begin{equation*}
  U(l, \delta) \leq  \sum_{t=1}^l \frac{U(t, \delta)}{t} \leq  \frac{1}{\beta} U(l, \delta)
\end{equation*}

\end{lemma}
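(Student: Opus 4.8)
The plan is to reduce both inequalities to elementary bounds on the sum $\sum_{t=1}^l t^{\beta-1}$. Since $U(t,\delta) = t^\beta c(\delta)$ with $c(\delta) > 0$ (as $U$ maps into $\mathbb{R}^+$), we have $U(t,\delta)/t = c(\delta)\, t^{\beta-1}$, so after dividing through by the positive factor $c(\delta)$ it suffices to establish
$$ l^\beta \;\le\; \sum_{t=1}^l t^{\beta-1} \;\le\; \frac{1}{\beta}\, l^\beta. $$
The key structural fact I would use is that $\beta \le 1$ forces the exponent $\beta - 1 \le 0$, so the map $t \mapsto t^{\beta-1}$ is non-increasing on $[1,\infty)$; both bounds then follow from this monotonicity.

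For the lower bound, I would observe that each of the $l$ summands dominates the smallest term: for every $t \in \{1,\dots,l\}$ we have $t^{\beta-1} \ge l^{\beta-1}$. Summing the $l$ copies gives $\sum_{t=1}^l t^{\beta-1} \ge l \cdot l^{\beta-1} = l^\beta$, which is exactly the left inequality.

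For the upper bound I would use the standard integral comparison for a non-increasing function. Splitting off the first term and bounding each subsequent summand by the integral over the preceding unit interval yields
$$ \sum_{t=1}^l t^{\beta-1} \;\le\; 1 + \int_1^l x^{\beta-1}\,dx \;=\; 1 + \frac{l^\beta - 1}{\beta} \;=\; \frac{l^\beta}{\beta} + \Bigl(1 - \frac{1}{\beta}\Bigr). $$
Because $\beta \le 1$ gives $1/\beta \ge 1$, the bracketed term is non-positive, leaving $\sum_{t=1}^l t^{\beta-1} \le l^\beta/\beta$. Multiplying both inequalities back by $c(\delta)$ recovers the claimed bounds on $\sum_{t=1}^l U(t,\delta)/t$ in terms of $U(l,\delta)$.

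The proof is essentially a one-line monotonicity argument for the lower bound and a one-line integral bound for the upper bound, so there is no genuine obstacle; the only point requiring a moment's care is verifying that the additive slack $1 - 1/\beta$ produced by the integral comparison is non-positive precisely because $\beta \le 1$, which is what makes the clean constant $1/\beta$ valid. For the boundary case $\beta = 0$ the upper bound is vacuous and the lower bound reduces to the trivial fact that the harmonic partial sum exceeds $1$.
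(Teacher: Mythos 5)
Your proof is correct and follows essentially the same route as the paper: the lower bound via monotonicity of $t \mapsto U(t,\delta)/t$ (each summand dominates the last, giving $l \cdot U(l,\delta)/l$), and the upper bound via integral comparison of the non-increasing integrand $t^{\beta-1}$. The only cosmetic difference is that the paper integrates from $0$ to $l$ directly, whereas you split off the first term and integrate from $1$, then check the additive slack $1 - 1/\beta$ is non-positive; both yield the same constant $1/\beta$.
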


\begin{proof}
The LHS follows immediately from observing $\frac{U(t, \delta)}{t}$ is decreasing as a function of $t$ and therefore $ \sum_{t=1}^l \frac{U(t, \delta)}{t} \geq l \frac{U(l, \delta)}{l} = U(l, \delta)$. The RHS is a consequence of bounding the sum by the integral $\int_{0}^l \frac{U(t, \delta)}{t}dt$, substituting the definition $U(t, \delta) = t^\beta c(\delta)$ and solving it. 
\end{proof}

\begin{lemma}
\label{lem::decreasing_average}
If $f(x)$ is a concave and doubly differentiable function on $x > 0$ and $f(0) \ge 0$ then $f(x)/x$ is decreasing on $x > 0$
\end{lemma}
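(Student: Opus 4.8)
The plan is to show that $g(x) := f(x)/x$ has non-positive derivative on $x>0$, which is exactly the asserted (weak) monotonicity. By the quotient rule,
\[
g'(x) = \frac{x f'(x) - f(x)}{x^{2}},
\]
and since $x^{2}>0$ the sign of $g'(x)$ equals the sign of the numerator $h(x) := x f'(x) - f(x)$. So it suffices to prove $h(x)\le 0$ for every $x>0$.

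The quickest route I would take uses only the tangent-line characterization of concavity: a concave differentiable function lies below each of its tangents, so for all $y$ we have $f(y)\le f(x)+f'(x)(y-x)$. Evaluating at $y=0$ gives $f(0)\le f(x)-x f'(x)$, i.e. $h(x)=x f'(x)-f(x)\le -f(0)\le 0$, where the last inequality is the hypothesis $f(0)\ge 0$. This immediately yields $g'(x)\le 0$ and hence that $f(x)/x$ is non-increasing, and it needs nothing beyond concavity and $f(0)\ge 0$ (the stated double differentiability is not actually required).

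If instead one insists on a direct calculus argument matching the ``doubly differentiable'' hypothesis, I would differentiate $h$ to get $h'(x)=f'(x)+xf''(x)-f'(x)=x f''(x)\le 0$ by concavity, so $h$ is non-increasing on $(0,\infty)$, and then argue $\lim_{x\downarrow 0} h(x)=-f(0)\le 0$. Here lies the only genuine subtlety and the step I expect to be the main obstacle: justifying $\lim_{x\downarrow 0} x f'(x)=0$ when $f'$ may blow up at the origin (as for $f(x)=\sqrt{x}$). This is handled by the same concavity bound $x f'(x)\le f(x)-f(0)\to 0$, which in fact proves the claim outright and is why I prefer the tangent-line version above.
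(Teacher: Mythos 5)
Your proof is correct. Your second, calculus-based variant is essentially the paper's own argument: the paper sets $g(x)=xf'(x)-f(x)$, notes $g'(x)=xf''(x)\le 0$ by concavity, and concludes $g(x)\le 0\cdot f'(0)-f(0)\le 0$. Your preferred tangent-line version is a genuinely different and, in my view, better route: the inequality $f(0)\le f(x)+f'(x)(0-x)$ gives $xf'(x)-f(x)\le -f(0)\le 0$ in one line, needs no second derivative at all, and in particular dispenses with the lemma's ``doubly differentiable'' hypothesis. More importantly, you have put your finger on a real gap in the paper's version: the step ``$0f'(0)-f(0)\le 0$'' tacitly assumes $f'(0)$ is finite (or that $0\cdot f'(0)=0$), which fails for exactly the functions this lemma is applied to, e.g.\ $U(t,\delta)=c(\delta)\sqrt{t}$, where $f'(0^+)=+\infty$. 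Your observation that the concavity bound $xf'(x)\le f(x)-f(0)\to 0$ is what actually justifies the boundary behavior is the correct repair, and the tangent-line formulation sidesteps the issue entirely. One cosmetic remark applying equally to both proofs: what is established is that $f(x)/x$ is non-increasing ($g'\le 0$), not strictly decreasing as the paper's ``$<0$'' suggests; this weaker conclusion is all that is used downstream.
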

\begin{proof}
In order to show that $f(x)/x$ is decreasing when $x>0$, we want to show that $\left( \frac{f(x)}{x} \right)' = \frac{x f'(x) - f(x)}{x^2} < 0$ when $x>0$. Since $0 f'(0) - f(0) \le 0$, we will show that $g(x) = x f'(x) - f(x)$ is a non-increasing function on $x>0$. We have $g'(x) = xf''(x) \le 0 $ when $x \ge 0$ because $f(x)$ is concave. Therefore $x f'(x) - f(x) \le 0 f'(0) - f(0) \le 0$ for all $x \ge 0$, which completes the proof. 

\end{proof}

\begin{lemma}
\label{lem::lower_bound_helper}
For any $\Delta\leq \frac{1}{4}: \mathrm{KL}(\frac{1}{2},\frac{1}{2}-\Delta)\leq 3\Delta^2$.
\end{lemma}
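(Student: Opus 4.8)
The plan is to reduce the Bernoulli KL divergence to a single logarithm and then apply an elementary logarithmic bound. First I would write out $\mathrm{KL}(\tfrac12, \tfrac12 - \Delta)$ from the definition of the KL divergence between two Bernoulli distributions with means $\tfrac12$ and $\tfrac12 - \Delta$:
\[
\mathrm{KL}\!\left(\tfrac12, \tfrac12 - \Delta\right) = \tfrac12 \ln\frac{1/2}{1/2 - \Delta} + \tfrac12 \ln\frac{1/2}{1/2 + \Delta}.
\]
The key step is to combine the two logarithms: the arguments multiply to $\frac{(1/2)^2}{(1/2-\Delta)(1/2+\Delta)} = \frac{1/4}{1/4 - \Delta^2} = \frac{1}{1-4\Delta^2}$, which collapses the expression to
\[
\mathrm{KL}\!\left(\tfrac12, \tfrac12 - \Delta\right) = -\tfrac12 \ln\!\left(1 - 4\Delta^2\right).
\]

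Next I would bound $-\ln(1-x)$ at $x = 4\Delta^2$. Using the power-series comparison $-\ln(1-x) = \sum_{k\ge 1} x^k/k \le \sum_{k\ge 1} x^k = \frac{x}{1-x}$ for $x \in [0,1)$, together with the hypothesis $\Delta \le \tfrac14$ (so that $x = 4\Delta^2 \le \tfrac14$ and hence $1-x \ge \tfrac34$), I obtain
\[
-\tfrac12\ln(1 - 4\Delta^2) \;\le\; \tfrac12 \cdot \frac{4\Delta^2}{1 - 4\Delta^2} \;\le\; \tfrac12 \cdot \frac{4\Delta^2}{3/4} \;=\; \frac{8\Delta^2}{3} \;\le\; 3\Delta^2,
\]
the last inequality holding because $8/3 < 3$. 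This yields the claim.

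I do not expect any genuine obstacle here: the only points requiring care are performing the log-combination correctly and tracking the constant through the final chain. The target constant $3$ strictly dominates the $8/3$ that the computation actually produces, so the bound carries comfortable slack and no sharp estimate is needed. As an alternative to the geometric-series step, one could invoke a second-order Taylor expansion of $-\ln(1-x)$ with Lagrange remainder, but the series comparison is more self-contained and keeps the constant explicit.
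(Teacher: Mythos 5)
Your proof is correct and follows essentially the same route as the paper's: collapse the two Bernoulli log terms into $\tfrac12\log\frac{1}{1-4\Delta^2}$, bound it by $\frac{2\Delta^2}{1-4\Delta^2}$, and use $\Delta\le\tfrac14$ to get $\frac{8\Delta^2}{3}\le 3\Delta^2$. The only cosmetic difference is that you justify $-\log(1-x)\le\frac{x}{1-x}$ via the power series while the paper writes $\log\bigl(1+\frac{x}{1-x}\bigr)\le\frac{x}{1-x}$; these are the same estimate.
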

\begin{proof}
By definition $kl(p,q) = p\log(p/q) +(1-p)\log(\frac{1-p}{1-q})$, so
\begin{align*}
   \mathrm{KL}\left(\frac{1}{2},\frac{1}{2}-\Delta\right) &= \frac{1}{2}\left(\log(\frac{1}{1-2\Delta})+\log(\frac{1}{1+2\Delta})\right) \\
    &=\frac{1}{2}\log\left(\frac{1}{1-4\Delta^2}\right)=\frac{1}{2}\log\left(1+\frac{4\Delta^2}{1-4\Delta^2}\right)\\
    &\leq \frac{2\Delta^2}{1-4\Delta^2}\leq \frac{2\Delta^2}{\frac{3}{4}} \leq 3\Delta^2
\end{align*}
\end{proof}

\end{document}